\documentclass{article}
\usepackage[dvipsnames]{xcolor}
\usepackage{tikz}
\usepackage{pgfplots}
\pgfplotsset{compat=1.15}

\usepackage{answers}
\usepackage{xcolor}
\usepackage{setspace}
\usepackage{graphicx}
\usepackage[shortlabels]{enumitem}
\usepackage{multicol}
\usepackage{silence}
\WarningFilter{hyperref}{Token not allowed} 

\usepackage{mathrsfs}
\usepackage[margin=1in]{geometry} 
\usepackage{amsmath,amsthm,amssymb}
\usepackage{mathtools}
\usepackage[utf8]{inputenc}
\usepackage[english]{babel}
\usepackage{caption}
\usepackage{subcaption}
\usepackage[square,sort,comma,numbers]{natbib}
\usepackage{bbm}
\usepackage{breqn}
\usepackage{subfiles}
\usepackage{amsmath,amssymb}
\usepackage[english]{babel}
\usepackage[nottoc]{tocbibind}

\usepackage[toc,page]{appendix}
\newcommand{\stoptocwriting}{%
  \addtocontents{toc}{\protect\setcounter{tocdepth}{-5}}}
\newcommand{\resumetocwriting}{%
  \addtocontents{toc}{\protect\setcounter{tocdepth}{\arabic{tocdepth}}}}
\addto{\captionsenglish}{}

\usepackage{thmtools}
\usepackage{thm-restate}

\usepackage{hyperref}
\hypersetup{unicode}

\usepackage{cleveref}

\usetikzlibrary{patterns}

\let\tilde\widetilde

\newtheorem{theorem}{Theorem}[section]
\newtheorem{corollary}{Corollary}[theorem]
\newtheorem{lemma}[theorem]{Lemma}
\newtheorem{fact}[theorem]{Fact}
\newtheorem{definition}[theorem]{Definition}

\newtheorem{remark}[theorem]{Remark}
\newtheorem{claim}[theorem]{Claim}

\DeclareMathOperator*{\poly}{\textnormal{poly}}

\newcommand{\R}{\mathbb{R}}

\let\Pr\relax
\DeclareMathOperator*{\Pr}{\mathbb{P}}

\newcommand{\eps}{\varepsilon}
\newcommand{\norm}[1]{\|#1 \|}

\DeclareMathOperator*{\E}{\mathbb{E}}

\newcommand{\tr}{\textnormal{Tr}}

\newcommand{\inprod}[1]{\left\langle #1 \right\rangle}
\newcommand{\sign}{\textnormal{Sign}}

\allowdisplaybreaks

%
%
%
%
%
%
%
%
\usepackage{etoolbox}

\makeatletter

\newcommand{\define}[4][ignore]{%
  \ifstrequal{#1}{ignore}{}{
  \@namedef{thmtitle@#2}{#1}}%
  \@namedef{thm@#2}{#4}%
  \@namedef{thmtypen@#2}{lemma}%
  \newtheorem{thmtype@#2}[theorem]{#3}%
  \newtheorem*{thmtypealt@#2}{#3~\ref{#2}}%
}

\newcommand{\state}[1]{%
  \@namedef{curthm}{#1}
  \@ifundefined{thmtitle@#1}{
  \begin{thmtype@#1}
    }{
  \begin{thmtype@#1}[\@nameuse{thmtitle@#1}]
  }
    \label{#1}
    \@nameuse{thm@#1}
  \end{thmtype@#1}
  \@ifundefined{thmdone@#1}{
  \@namedef{thmdone@#1}{stated}%
  }{}
}

\newcommand{\restate}[1]{%
  \@namedef{curthm}{#1}
  \@ifundefined{thmtitle@#1}{
    \begin{thmtypealt@#1}
    }{
  \begin{thmtypealt@#1}[\@nameuse{thmtitle@#1}]
  }
    \@nameuse{thm@#1}
  \end{thmtypealt@#1}
  \@ifundefined{thmdone@#1}{
  \@namedef{thmdone@#1}{stated}%
  }{}
}

\newcommand{\thmlabel}[1]{
  \@ifundefined{thmdone@\@nameuse{curthm}}{\label{#1}
    }{\tag*{\eqref{#1}}}
}
\makeatother









\title{Neural Networks with Sparse Activation Induced by Large Bias: Tighter Analysis with Bias-Generalized NTK}

\author{Hongru Yang \\ UT Austin \\\texttt{hy6385@utexas.edu} 
\and 
Ziyu Jiang \\ NEC Labs America \\ \texttt{jiangziyu@tamu.edu} 
\and
Ruizhe Zhang \\ Simons Institute, \\ UC Berkeley \\
\texttt{rzzhang@berkeley.edu}
\and
Yingbin Liang\\OSU \\\texttt{liang.889@osu.edu}
\and
Zhangyang Wang\\UT Austin  \\\texttt{atlaswang@utexas.edu}
}

\begin{document}

\maketitle

\begin{abstract}
We study training one-hidden-layer ReLU networks in the neural tangent kernel (NTK) regime, where the networks' biases are initialized to some constant rather than zero.
We prove that under such initialization, the neural network will have sparse activation throughout the entire training process, which enables fast training procedures
via some sophisticated computational methods. With such initialization, we show that the neural networks possess a different limiting kernel which we call \textit{bias-generalized NTK}, and we study various properties of the neural networks with this new kernel.
We first characterize the gradient descent dynamics. 
In particular, we show that the network in this case can achieve as fast convergence as the dense network, as opposed to the previous work suggesting that the sparse networks converge slower. 
In addition, our result improves the previous required width to ensure convergence.
Secondly, we study the networks' generalization: we show a width-sparsity dependence, which yields a sparsity-dependent Rademacher complexity and generalization bound. 
To our knowledge, this is the first sparsity-dependent generalization result via Rademacher complexity. 
Lastly, we study the smallest eigenvalue of this new kernel.
We identify a data-dependent region where we can derive a much sharper lower bound on the NTK's smallest eigenvalue than the worst-case bound previously known. This can lead to improvement in the generalization bound. 
\end{abstract}

\section{Introduction}
The literature of sparse neural networks can be dated back to the early work of \cite{lecun1989optimal} where they showed that a fully-trained neural network can be pruned to preserve generalization. 
Recently, training sparse neural networks has been receiving increasing attention since the discovery of the lottery ticket hypothesis \citep{frankle2018lottery}. 
The lottery ticket hypothesis shows that there exists a sparse network inside a dense network at the initialization such that when the sparse network is trained, it can match the performance of the dense network. This discovery has spurred a lot of interest in the deep learning community as now sparse networks can not only bring computational benefits during inference time but also at training. 
However, their method of finding such sparse network requires multiple rounds of training and pruning, which is computationally expensive for any practical purposes. 
Nonetheless, this inspires further interest in the machine learning community to develop efficient methods to find sparse networks at the initialization such that the performance of the sparse network can match the dense network after training \citep{lee2018snip, wang2019picking, tanaka2020pruning, liu2020finding, chen2021only, he2017channel, liu2021unreasonable}. 

On the other hand, instead of trying to find some desirable sparse networks at the initialization, another line of research has been focusing on introducing sparsity at the random initialization and the sparsity is automatically maintained during training.
The key observation is that if the neural network activation is sparse during the entire training process, then only the weights of the activated neurons (i.e., ReLU will output some non-zero value) needs to be updated and one can utilize the sparsity to speedup per-step gradient descent training via techniques such as high-dimensional geometric data structures, sketching or even quantum algorithms \citep{song2021does, song2024training, hu2022training, gao2022sublinear, alman2024bypass}. 
In this line of theoretical studies, the sparsity is induced by the shifted ReLU which is the same as initializing the bias of the network's linear layer to some large constant and holding the bias fixed throughout the entire training. 
By the concentration of Gaussian, at the initialization, the total number of activated neurons will be \textit{sublinear} in the total number $m$ of neurons, if we initialize the bias by $C\sqrt{\log m}$ for some appropriate constant $C$. 
We call this {sparsity-inducing initialization}.
If the network is in the NTK regime, each neuron weight will exhibit small change after training, and thus the sparsity can be preserved throughout the entire training process. 
Therefore, at each step of gradient descent, only a sublinear number of the neuron weights need to be updated, which can significantly speedup the training process. 

The focus of this work is along the above line of theoretical studies of sparsely activated overparameterized neural networks and address the two main research limitations in the aforementioned studies: (1) 
prior work indicates that the sparse networks have {\bf slower convergence guarantee} than the dense network, despite that the per step gradient descent training can be made cheaper and (2)
the previous works only provided the convergence guarantee, while {\bf lacking the generalization analysis} which is of central interest in deep learning theory. 
Thus, our study fills the above important gaps, 
by first characterizing a new generalized limiting kernel of such type of neural networks and providing a comprehensive study with (a) finer analysis of the convergence; and (b) first generalization bound for such sparsely activated neural networks after training along with (c) sharp bound on the restricted smallest eigenvalue of the limiting NTK on some restricted region. 
We further elaborate our technical contributions are follows:
\vspace{-1mm}
\begin{enumerate}
    \item \textbf{Convergence.}
    In particular, \Cref{lemma: main_text_convergence} shows that the network with large bias initialization can achieve as fast convergence as the original network, as opposed to the previous work suggesting slower convergence.
    This is made possible by the fact that the sparse networks allow a much more relaxed condition on the learning rate, which was not discovered in the previous work. 
    The theorem further provides an improved required width to ensure that gradient descent can drive the training error towards zero at a linear rate. 
    This relies on our novel development of (1) a better characterization of the activation flipping probability via an analysis of the Gaussian anti-concentration based on the location of the strip and (2) a finer analysis of the initial training error.
    
    \item \textbf{Generalization.} \Cref{thm: main_text_generalization} studies the generalization of the network after gradient descent training where we characterize how the network width should depend on activation sparsity, which lead to a sparsity-dependent localized Rademacher complexity and generalization bound. 
    When the sparsity parameter is set to zero (i.e., the activation is not sparsified), our bound matches previous analysis up to logarithmic factors. 
    To our knowledge, this is the first sparsity-dependent generalization result via localized Rademacher complexity. 
    
    
    \item \textbf{Restricted Smallest Eigenvalue.}
    \Cref{thm: main_text_generalization} shows that the generalization bound heavily depends on the smallest eigenvalue $\lambda_{\textnormal{min}}$ of the limiting NTK. 
    However, the previously known worst-case lower bounds on $\lambda_{\textnormal{min}}$ under data separation
    have a $1/n^2$ explicit dependence in \citep{oymak2020toward, song2021does}, making the generalization bound vacuous. 
    Our \Cref{lemma: main_text_restricted_least_eig} establishes a much sharper lower bound that is sample-size-independent, on some data-dependent region.  
 This hence yields a worst-case generalization bound for \textit{bounded} loss of $O(1)$ as opposed to $O(n)$ in previous analysis, given that the label vector is in this region.
Since our new kernel is a generalized version of the previous kernel, our lower bound also provides improvements for the previous kernel.
    
    
\end{enumerate}
We include a comparison between our results and previous work in \Cref{table: bounds_comparison}.

\begin{table}\label[table]{table: bounds_comparison}
    \centering
    \begin{tabular}{|c|c|c|c|}
    \hline 
         & Width for convergence & Width for generalization & Large bias? \\
         \hline 
         \citep{du2018gradient}  & $\poly(\lambda_0^{-1}, n)$ & - & No \\
         \hline
         \citep{arora2019fine} & $\poly(\lambda_0^{-1}, n)$ & $\poly(\lambda_0^{-1}, n)$ & No \\
         \hline
         \citep{song2019quadratic} & $\widetilde{\Omega}\left( \lambda_0^{-4} n^4 \right)$ & $\widetilde{\Omega}(n^{16} \poly(1/\lambda(0))) $ & No \\
         \hline
         \citep{song2021does} & $\widetilde{\Omega}(\lambda_0^{-4} n^4 B^2 \exp(2B^2))$  & - & Yes \\
         \hline
         This work & $\widetilde{\Omega}\left( \lambda_0^{-4} n^4 \exp(B^2) \right)$ & $\widetilde{\Omega}\left( \lambda(B)^{-6} n^6 \exp(-B^2) \right) $  & Yes \\
         \hline
    \end{tabular}
    \caption{Comparison of results with previous work.}
    \label{tab:my_label}
\end{table}

\textbf{Practicality of NTK theory.} Although many works (such as \cite{chizat2019lazy}) pointed out that the NTK regime is a ``lazy training'' regime and cannot fully explain the success of deep learning in practice, it has become less well-known these days on the \textbf{utility} of NTK theory.
First of all, there are many works showing that for certain cases, replacing neural networks by NTK or other suitable kernels exhibits only limited performance drop \citep{shankar2020neural, novak2018bayesian, li2019enhanced, garriga2018deep, matthews2018gaussian, lee2018deep, lee2019wide, arora2020harnessing}. 
In particular, what's even more surprising is that \citep{ghorbani2021linearized} have shown that \textbf{NTK is minimax optimal} for learning dense polynomials. Therefore, although neural networks have shown impressive performances on many applications, there are still tasks on which NTK can perform on par with neural networks. Thus, our work can provide theoretical guidance for using large bias initialization to sparsify the activation of neural networks in the NTK regime.

\subsection{Related Works}
Besides the works mentioned in the introduction, another work related to ours is \citep{liao2022on} where they also considered training a one-hidden-layer neural network with sparse activation and studied its convergence. 
However, different from our work, their sparsity is induced by sampling a random mask at each step of gradient descent whereas our sparsity is induced by non-zero initialization of the bias terms. 
Also, their network has no bias term, and they only focus on studying the training convergence  but not generalization. 
We discuss additional related works here.

\textbf{Theory of neural tangent kernel.} 
A series of works have shown that if the neural network is wide enough (polynomial in depth, number of samples, etc), gradient descent can drive the training error towards zero in a fast rate either explicitly \citep{du2018gradient, du2019gradient, ji2019polylogarithmic} or implicitly \citep{allen2019convergence, zou2019improved, zou2020gradient} using the neural tangent kernel (NTK) \citep{jacot2018neural}. 
Further, under some conditions, the networks can generalize \citep{cao2019generalization}.
On the other hand, although NTK offers good convergence explanation, it contradicts the practice since (1) the neural networks need to be unrealistically wide and (2) the neuron weights merely change from the initialization. 
As \cite{chizat2019lazy} pointed out, the NTK regime is a ``lazy training'' regime which hardly explain the success of deep learning in practice.


\textbf{Sparse activation in neural networks.}
The sparse activation phenomena have been observed and utilized in practice. 
\citep{cao2019seernet} showed that it is possible to use a quantized network to predict the sparsity pattern of the activation from the original network, and this can be utilized to accelerate inference. 
Next, \citep{jaszczur2021sparse} forced the sparse activation of MLP in transformer to be static and used this to speed up the inference of transformers. 
Further, \citep{li2022lazy} systematically studied the sparse activation phenomena in transformers and showed that it occurs throughout a wide range of datasets and applications. 
They also showed that it can bring additional desired properties to manually introduce sparsity by selecting the top-$k$ largest values of the MLP activation.


\section{Preliminaries}
\textbf{Notations.}
We use $\norm{ \cdot}_2$ to denote vector or matrix 2-norm and $\norm{\cdot}_F$ to denote the Frobenius norm of a matrix. 
When the subscript of $\norm{\cdot}$ is unspecified, it is default to be the 2-norm. For matrices $A \in \R^{m \times n_1}$ and $B \in \R^{m  \times n_2}$, we use $[A, B]$ to denote the row concatenation 
of $A,B$ and thus $[A, B]$ is a $m \times (n_1 + n_2)$ matrix. 
For matrix $X \in \R^{m \times n}$, the row-wise vectorization of $X$ is denoted by $\vec{X} = [x_1, x_2, \ldots, x_m]^\top$ where $x_i$ is the $i$-th row of $X$. 
For a given integer $n \in \mathbb{N}$, we use $[n]$ to denote the set $\{0, \ldots, n\}$, i.e., the set of integers from $0$ to $n$. 
For a set $S$, we use $\overline{S}$ to denote the complement of $S$. 
We use ${\cal N}(\mu,\sigma^2)$ to denote the Gaussian distribution with mean $\mu$ and standard deviation $\sigma$. 
In addition, we use $\widetilde{O}, \widetilde{\Theta}, \widetilde{\Omega}$ to suppress (poly-)logarithmic factors in $O, \Theta, \Omega$. 

\subsection{Problem Formulation}\label{sec: problem_formulation}
Let the training set to be $(X,y)$ where $X = (x_1, x_2, \ldots, x_n) \in \R^{d \times n}$ denotes the feature matrix consisting of $n$ $d$-dimensional vectors, and $y = (y_1, y_2, \ldots, y_n) \in \R^n$ consists of the corresponding $n$ response variables. 
We assume $\norm{x_i}_2 \leq 1$ and $y_i = O(1)$ for all $i \in [n]$. 
We use one-hidden-layer neural network and consider the regression problem with the square loss function: 
\begin{align*}
    f(x; W, b) &:= \frac{1}{\sqrt{m}} \sum_{r=1}^m a_r \sigma(\inprod{w_r, x} - b_r), \\
    L(W, b) &:= \frac{1}{2} \sum_{i=1}^n (f(x_i; W, b) - y_i)^2,
\end{align*}
where $W \in \R^{m \times d}$ with its $r$-th row being $w_r$, $b \in \R^m$ is a vector with $b_r$ being the bias of $r$-th neuron, $a_r$ is the second layer weight, and $\sigma(\cdot)$ denotes the ReLU activation function. 
We initialize the neural network by $W_{r,i} \sim \mathcal{N}(0, 1)$ and $a_r \sim \textnormal{Uniform}(\{\pm 1\})$ and $b_r = B$ for some value $B \geq 0$ of choice, for all $r \in [m],\ i \in [d]$. 
We train only the parameters $W$ and $b$ via gradient descent (i.e., with the linear layer $a_r,\ r \in [m]$ fixed), the updates are given by
\begin{align*}
    [w_r, b_r](t+1) &= [w_r, b_r](t) - \eta \frac{\partial L(W(t), b(t))}{\partial [w_r, b_r]}.
\end{align*}
By the chain rule, we have $\frac{\partial L}{\partial w_r} = \frac{\partial L}{\partial f} \frac{\partial f}{\partial w_r}$. 
The gradient of the loss with respect to the network is $\frac{\partial L}{\partial f} = \sum_{i=1}^n (f(x_i; W,b) - y_i)$ and the network gradients with respect to weights and bias are 
\begin{align*}
    \frac{\partial f(x; W, b)}{\partial w_r} &= \frac{1}{\sqrt{m}} a_r x \mathbb{I}(w_r^\top x \geq b_r), \\
    \frac{\partial f(x; W, b)}{\partial b_r} &= -\frac{1}{\sqrt{m}} a_r \mathbb{I}(w_r^\top x \geq b_r),
\end{align*}
where $\mathbb{I}(\cdot)$ is the indicator function. 
We use the shorthand $\mathbb{I}_{r,i} := \mathbb{I}(w_r^\top x_i \geq b_r)$ and define the empirical NTK matrix $H$ as 
\begin{align}\label{eq: NTK}
    H_{i,j}(W, b) &:= \inprod{\frac{\partial f(x_i; W, b)}{\partial [W, b]}, \frac{\partial f(x_j; W, b)}{\partial [W, b]}} = \frac{1}{m} \sum_{r=1}^m (\inprod{x_i, x_j} + 1) \mathbb{I}_{r,i} \mathbb{I}_{r,j}.
\end{align}
We define its infinite-width version $H^\infty(B)$, given by
\begin{align*}
    H^\infty_{ij}(B) &:= \E_{w} \left[ (\inprod{x_i, x_j} + 1) \mathbb{I}(w^\top x_i \geq B, w^\top x_j \geq B) \right].
\end{align*}
Notice that if we set $B=0$ and freeze the bias during training, we recover the usual NTK matrix studied in many previous literature such as \citep{du2018gradient}. Thus, we call our limiting matrix the \textbf{bias-generalized NTK}. 
Let $\lambda(B) := \lambda_{\textnormal{min}}(H^\infty(B))$. 
We define the matrix $Z(W,b) \in \R^{m(d+1) \times n}$ as
\begin{align*}
    &Z(W,b) := \frac{1}{\sqrt{m}} 
    \begin{bmatrix}
    \mathbb{I}_{1,1} a_1 \tilde{x}_1 & \ldots & \mathbb{I}_{1,n} a_1 \Tilde{x}_n \\
    \vdots & \ddots & \vdots \\
    \mathbb{I}_{m,1} a_m \tilde{x}_1 & \ldots & \mathbb{I}_{m,n} a_m \tilde{x}_n 
    \end{bmatrix},
\end{align*}
where $\Tilde{x}_i := [x_i^\top, -1]^\top$.
Note that $H(W,b) = Z(W,b)^\top Z(W,b)$. Hence, the gradient descent step can be written as
\begin{align*}
    \vec{[W,b](t+1)} = \vec{[W,b](t)} - \eta Z(t) (f(t) - y),
\end{align*}
where $[W,b](t) \in \R^{m \times (d+1)}$ denotes the row-wise concatenation of $W(t)$ and $b(t)$ at the $t$-th step of gradient descent, and $Z(t) := Z(W(t), b(t))$. 

\section{Main Theory}
\subsection{Convergence and Sparsity}
We first present the convergence of gradient descent for the sparsely activated neural networks. 
Surprisingly, we show that the sparse network can achieve as fast convergence as the dense network compared to the previous work \citep{song2021does} which, on the other hand, shows the sparse networks converge slower than the dense networks. 
\begin{theorem}[Convergence]\label{lemma: main_text_convergence}
Let the learning rate {$\eta \leq O(\frac{\lambda(B) \exp(B^2)}{n^2})$}, and the bias initialization $B \in [0, \sqrt{0.5 \log m}]$. 
Assume $\lambda(B) = \lambda_0 \exp(-B^2/2)$ for some $\lambda_0>0$ independent of $B$. 
Then, if the network width satisfies $m \geq \widetilde{\Omega}\left( \lambda_0^{-4} n^4 \exp(B^2) \right)$,
with probability at least $1 - \delta - e^{-\Omega(n)}$ over the randomness in the initialization,
\begin{align*}
    \forall t: L(W(t), b(t)) \leq (1 - \eta \lambda(B) / 4)^t L(W(0), b(0)).
\end{align*}
\end{theorem}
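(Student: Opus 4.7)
The plan is to execute the NTK inductive contraction framework of \citep{du2018gradient}, replacing every worst-case Gaussian estimate by one that exploits the fact that a neuron fires only when $w_r^\top x_i \geq B$, an event with probability $\Theta(\exp(-B^2/2)/B)$. Writing $u(t) := f(t) - y$ and recalling $H(t) = Z(t)^\top Z(t)$, the one-step recursion is
\begin{align*}
u(t+1) = u(t) - \eta H(t) u(t) + e(t),
\end{align*}
where $e(t)$ collects the contribution of neurons whose activation pattern changes between iterates $t$ and $t+1$. The geometric decay claimed in the theorem follows provided both $\lambda_{\min}(H(t)) \geq 3\lambda(B)/4$ and $\|e(t)\|_2 \leq \eta\lambda(B)\|u(t)\|_2/4$ hold for every $t$, which I would establish by an induction coupled with a uniform bound on the weight displacement $R := \max_r \|w_r(t) - w_r(0)\|_2$.

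The base case $t=0$ reduces to two concentration estimates. First, Hoeffding over the $m$ i.i.d.\ contributions to $H(0)_{ij}$ together with a union bound over the $n^2$ index pairs yields $\|H(0) - H^\infty(B)\|_F \leq \lambda(B)/4$ whenever $m \gtrsim n^2/\lambda(B)^2$, so that $\lambda_{\min}(H(0)) \geq 3\lambda(B)/4$ by Weyl. Second, a finer estimate of the initial output exploits $\E[\sigma(w^\top x - B)^2] = \Theta(\exp(-B^2/2))$ to give $|f(x_i; W(0), b(0))| = \wt O(\exp(-B^2/4))$ with high probability, so $\|u(0)\|_2 \leq \sqrt{2 L(0)} = O(\sqrt n)$ with a starting loss dominated by the label contribution rather than by the random output.

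The inductive step rests on sharper anti-concentration for the activation-flip event. A neuron $r$ can only flip on example $i$ if $|w_r(0)^\top x_i - B| \leq R$, and since the density of $w_r^\top x_i \sim \mathcal{N}(0, \|x_i\|_2^2)$ evaluated at the threshold $B$ is $\Theta(\exp(-B^2/2))$ when $\|x_i\|_2 = 1$, this event has probability at most $O(R\exp(-B^2/2))$ rather than the $O(R)$ bound used in prior work. A Chernoff bound then caps the number of flipped neurons per example at $\wt O(mR\exp(-B^2/2))$, which controls each entry of $H(t)-H(0)$ and of $e(t)$ by the same sparsity factor. Bounding $R$ itself by the telescope $R \leq \eta\sum_{s<t}\|\partial L/\partial w_r(s)\|_2$, and using both the Chernoff-based sparsity of the indicators $\mathbb{I}_{r,\cdot}(s)$ and the inductive geometric decay of $\|u(s)\|_2$, closes the induction. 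The relaxed learning-rate budget $\eta \lesssim \lambda(B)\exp(B^2)/n^2$ is permitted because sparse activation forces $\lambda_{\max}(H(t)) = O(n\exp(-B^2/2))$, so the descent condition $\eta\lambda_{\max}(H) \lesssim 1$ is weaker by a factor of $\exp(B^2/2)$ than in the dense case.

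The main obstacle will be the combined bookkeeping of these sparsity factors. Every $\exp(-B^2/2)$ that the improved flip probability buys must be paired against the $\exp(B^2/2)$ hidden inside $\lambda(B)^{-1}$, and $R$ must be chosen just small enough that $n R\exp(-B^2/2) \leq \lambda(B)/4$ yet large enough to absorb the total weight movement. Running the arithmetic with the improved flip probability and the tighter initial-output bound produces the stated width requirement $m \geq \wt\Omega(\lambda_0^{-4} n^4 \exp(B^2))$; reverting either estimate to the worst case reproduces the $B^2\exp(2B^2)$ factor of \citep{song2021does}.
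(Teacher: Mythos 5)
Your proposal captures the same high-level architecture as the paper's proof — an NTK contraction argument in which a sharper, location-dependent Gaussian anti-concentration for the activation-flip event (probability $O(R\exp(-B^2/2))$ rather than $O(\min\{R,\exp(-B^2/2)\})$), a tighter initial-error estimate $L(0)=\wt O(n)$, and the sparsity-induced control of the second-order network update together yield the improved width requirement. Your recursion $u(t+1)=(I-\eta H(t))u(t)+e(t)$ is exactly the paper's error decomposition, reorganized: the paper expands $\|y-f(k+1)\|_2^2$ directly into four terms $B_1,\dots,B_4$, where $B_1,B_2,B_3$ are the cross-term pieces corresponding to your $(I-\eta H)u$ and $e$, and $B_4=\|f(k+1)-f(k)\|_2^2$ is the quadratic piece controlled by the active-neuron count, $|\mathcal S_{\text{on}}(i,t)|=O(m\exp(-B^2/2))$.

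Two technical points differ and are worth noting. First, the paper establishes $\lambda_{\min}(H(0))\ge 3\lambda/4$ via Matrix Chernoff (Lemma~\ref{lemma: diff_discrete_limit_ntk}), which only requires $m\gtrsim n\log(n/\delta)/\lambda$; your Hoeffding-plus-Frobenius route needs $m\gtrsim n^2/\lambda^2$. Both are comfortably dominated by the final width budget, so the choice is immaterial for the conclusion, but Matrix Chernoff is the cleaner tool. Second, your justification of the learning-rate budget is slightly off: the condition $\eta\lambda_{\max}(H)\lesssim 1$ would only relax the dense-case constraint by $\exp(B^2/2)$, whereas the constraint that actually determines the budget is the contraction inequality $\eta^2\lambda_{\max}(H)^2\lesssim \eta\lambda$, i.e.\ $\eta\lesssim\lambda/\lambda_{\max}(H)^2$. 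With $\lambda_{\max}(H)=O(n\exp(-B^2/2))$ this gives $\eta\lesssim\lambda\exp(B^2)/n^2$ — the $\exp(B^2)$ (not $\exp(B^2/2)$) improvement the theorem uses, exactly as the paper derives from bounding $B_4\le C_2^2\eta^2n^2\exp(-B^2)\|u(k)\|^2$. A minor caution on bookkeeping: the sparsity the paper exploits is over neurons for a fixed input (i.e.\ $\mathbb I_{\cdot,i}$, controlling $|\mathcal S_{\text{on}}(i,t)|$); the per-neuron gradient and weight-movement bounds (Lemmas~\ref{lemma: gradient_bound},~\ref{lemma: weight_bias_movement}) do not and should not rely on sparsity of $\mathbb I_{r,\cdot}$ over inputs, which is not guaranteed. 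These are presentational imprecisions rather than gaps; running the arithmetic as you outline does reproduce the stated bound.
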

The assumption on $\lambda(B)$ in \Cref{lemma: main_text_convergence} can be justified by \citep[Theorem F.1]{song2021does} which shows that under some conditions, the NTK's least eigenvalue $\lambda(B)$ is positive and has an $\exp(-B^2/2)$ dependence.
Given this, \Cref{lemma: main_text_convergence} in fact implies that the convergence rate is \textit{independent} of the sparsity parameter due to the extra $\exp(B^2)$ term in the learning rate. 
This means that the network with sparse activation can achieve as fast convergence as the original network. 
Our study further handles trainable bias (with constant initialization). 
This is done by a new result in \Cref{lemma: weight_bias_movement} that the change of bias is also diminishing with a $O(1/\sqrt{m})$ dependence on the network width $m$. 


\begin{remark}\label{rk:mbound}
\Cref{lemma: main_text_convergence} establishes a much sharper bound on the width of the neural network than previous work to guarantee the linear convergence. 
To elaborate, our bound only requires $m \geq \widetilde{\Omega}\left( \lambda_0^{-4} n^4 \exp(B^2) \right)$, as opposed to the bound $m \geq \widetilde{\Omega}(\lambda_0^{-4} n^4 B^2 \exp(2B^2))$ in \citep[Lemma D.9]{song2021does}. 
If we take $B = \sqrt{0.25\log m}$ (as allowed by the theorem), then our lower bound yields a polynomial improvement by a factor of $\widetilde{\Theta}(n/\lambda_0)^{8/3}$, which implies that the neural network width can be much smaller to achieve the same linear convergence.
\end{remark}

\subsubsection{Proof Outline of \texorpdfstring{\Cref{lemma: main_text_convergence}}{Convergence}}
\noindent
Like many previous NTK analysis, to prove convergence, we first characterize how many neurons we need so that the empirical NTK matrix (especially its minimum eigenvalue) is close to its infinite-width limit, in our case, the bias-generalized NTK (\Cref{lemma: diff_discrete_limit_ntk}). Then, we consider the case where all the possible neuron weights lying within some bounded region near their initialization values and we show that within this region the NTK's smallest eigenvalue is well above zero (\Cref{lemma: perturbed_ntk}). 
When we prove this result, we need to analyze how many neurons are activated and we derived a better bound on this neuron activation probability (\Cref{lemma: main_text_bound_flipping} below). 
Next, we use this smallest eigenvalue to show that the training loss can rapidly decrease toward zero when the neural network is within this region (\Cref{lemma: error_b4}). 
Along the way, we prove a better initial error bound (\Cref{claim: main_text_initial_error}) which leads to a better convergence guarantee.
Finally, since the training loss can decrease sufficiently fast, we can show that the changes of neuron weights during training is indeed small and the neural network is indeed within the region close to its initialization value via \Cref{lemma: weight_bias_movement}. 
We highlight our key results on novel analysis on activation flipping probability and a finer upper bound on initial error.

\subsubsection{Key Results in the Proof of \texorpdfstring{\Cref{lemma: main_text_convergence}}{Convergence}}
Like previous works, in order to prove convergence, we need to show that the NTK during training is close to its initialization. 
Inspecting the expression of NTK in \Cref{eq: NTK}, observe that the training will affect the NTK by changing the output of each indicator function. 
We say that the $r$-th neuron flips its activation with respect to input $x_i$ at the $k$-th step of gradient descent if $\mathbb{I}(w_r(k)^\top x_i - b_r(k) > 0) \neq \mathbb{I}(w_r(k-1)^\top x_i - b_r(k-1) > 0)$ for all $r \in [m]$.
The central idea is that for each neuron, as long as the weight and bias movement $R_w, R_b$ from its initialization is small, then the probability of activation flipping (with respect to random initialization) should not be large. 
We first present the bound on the probability that a neuron flips its activation. 
\begin{lemma}[Activation flipping probability]\label[lemma]{lemma: main_text_bound_flipping}
Let $B \geq 0$ and $R_w, R_b \leq \min\{1/B,1\}$. 
Let $\tilde{W} = (\widetilde{w}_1, \ldots, \tilde{w}_m)$ be vectors generated i.i.d. from $\mathcal{N}(0, I)$ and $\tilde{b} = (\tilde{b}_1, \ldots, \tilde{b}_m)= (B, \ldots, B)$, 
and weights $ W = (w_1, \ldots, w_m)$ and biases $ b = (b_1, \ldots, b_m) $ that satisfy for any $r \in [m]$, $\norm{\tilde{w}_r - w_r}_2 \leq R_w$ and $|\tilde{b}_r - b_r| \leq R_b$.
Define the event
\begin{align*}
    A_{i,r} &= \{\exists w_r, b_r: \norm{\tilde{w}_r - w_r}_2 \leq R_w,\ |b_r - \tilde{b}_r| \leq R_b, \mathbb{I}(x_i^\top \tilde{w}_r \geq \tilde{b}_r) \neq \mathbb{I}(x_i^\top w_r \geq b_r)\}.
\end{align*}
Then, for some constant $c$, 
\begin{align*}
    \Pr\left[ A_{i,r} \right] \leq c (R_w + R_b) \exp(-B^2/2).
\end{align*}
\end{lemma}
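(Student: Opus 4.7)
The plan is to reduce the event $A_{i,r}$ to a Gaussian anti-concentration question about a strip around the shifted threshold $B$; the fact that the strip sits at $B$ rather than at $0$ is precisely what produces the extra $e^{-B^2/2}$ factor that is absent from earlier analyses.

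First I will geometrically reduce the event. For any perturbation $(w_r, b_r)$ admissible in $A_{i,r}$, the triangle inequality gives
\[
\bigl|(w_r^\top x_i - b_r) - (\tilde w_r^\top x_i - B)\bigr| \leq \norm{w_r - \tilde w_r}_2 \cdot \norm{x_i}_2 + |b_r - B| \leq R_w + R_b,
\]
using $\norm{x_i}_2 \leq 1$. So a flip of the indicator is achievable by some admissible perturbation if and only if $\tilde w_r^\top x_i - B \in [-\delta, \delta]$ where $\delta := R_w + R_b$; hence $\Pr[A_{i,r}] \leq \Pr\bigl[\tilde w_r^\top x_i \in [B - \delta, B + \delta]\bigr]$.

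Next I apply Gaussian anti-concentration \emph{localized at} $B$. Since $\tilde w_r \sim \mathcal{N}(0, I)$ makes $\tilde w_r^\top x_i$ a one-dimensional Gaussian with variance at most $1$, I bound the hitting probability by (length of interval) $\times$ (supremum of the density on the interval). In the unit-variance case, this sup equals $\tfrac{1}{\sqrt{2\pi}} e^{-(B-\delta)^2/2}$, attained at the left endpoint $z=B-\delta$ whenever $B \geq \delta$, since the standard normal density is decreasing on $[0,\infty)$. Expanding $(B-\delta)^2 \geq B^2 - 2 B \delta$ and using the hypothesis $R_w, R_b \leq \min\{1/B, 1\}$ (so $B\delta \leq 2$) gives
\[
e^{-(B-\delta)^2/2} \leq e^{B\delta} \cdot e^{-B^2/2} \leq e^{2} \cdot e^{-B^2/2},
\]
yielding $\Pr[A_{i,r}] \leq \tfrac{2 e^{2}}{\sqrt{2\pi}} (R_w + R_b)\, e^{-B^2/2}$, which is the claim with $c = 2e^2/\sqrt{2\pi}$.

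Two boundary regimes will need separate attention, and the second is what I expect to be the main technical obstacle. When $B < \delta$ the strip straddles $0$ and the density's sup shifts to $z=0$; but then $B \leq \delta \leq 1$ forces $e^{-B^2/2} = \Theta(1)$, so the bound still closes after absorbing constants. More delicate is the case when $\norm{x_i}_2$ is significantly smaller than $1$: the naive density bound carries an apparently divergent prefactor $1/\norm{x_i}_2$. I plan to resolve this by checking that the map $\sigma \mapsto \tfrac{1}{\sigma} e^{-(B-\delta)^2/(2\sigma^2)}$ is monotone increasing on $(0, B-\delta)$, so its supremum over $\sigma = \norm{x_i}_2 \in (0, 1]$ is controlled by the value at $\sigma = 1$ whenever $B - \delta \geq 1$, reducing back to the normalized case above; in the complementary regime $B - \delta < 1$ one can switch to the Gaussian upper-tail bound $\Pr[\tilde w_r^\top x_i \geq B - \delta] \leq \tfrac{\sigma}{\sqrt{2\pi}(B-\delta)} e^{-(B-\delta)^2/(2\sigma^2)}$, whose $\sigma$ in the numerator cancels the troublesome prefactor. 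Absorbing all $O(1)$ losses into $c$ then completes the argument.
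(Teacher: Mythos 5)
Your central argument matches the paper's proof exactly: reduce $\Pr[A_{i,r}]$ to the Gaussian anti-concentration probability of the strip $[B-\delta,B+\delta]$ with $\delta = R_w+R_b$, bound by (strip width) $\times$ (density at the left endpoint $B-\delta$), and use the hypothesis $R_w,R_b\leq 1/B$ to control $e^{B\delta-\delta^2/2}\leq e^{B\delta}\leq e^2$ so the prefactor is absorbed into $c$. The paper splits on $B>1$ versus $0\leq B<1$, you split on $B\geq\delta$ versus $B<\delta$; these lead to the same bookkeeping. (Small slip: in the $B<\delta$ branch you write ``$B\leq\delta\leq 1$,'' but the hypotheses only give $\delta\leq 2$, so you pick up an extra factor $e^{\delta^2/2}\leq e^2$; this only changes the constant.)

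The boundary regime you flag as the main obstacle — $\sigma := \norm{x_i}_2 < 1$ — does deserve a remark, but not for the reason you suggest. You are right that the paper's proof silently normalizes $\sigma=1$ (the assertion that $A_{i,r}$ is equivalent to $|\tilde w_r^\top x_i - B| < R_w+R_b$, and the subsequent invocation of the $\mathcal N(0,1)$ range bound, both use $\sigma=1$). However, the repair you propose does not close. In the regime $B-\delta < 1$, the tail bound $\frac{\sigma}{\sqrt{2\pi}(B-\delta)}e^{-(B-\delta)^2/(2\sigma^2)}$ has an uncontrolled prefactor $1/(B-\delta)$, and nothing in the hypotheses prevents $B-\delta$ from being arbitrarily small; absorbing it into $c$ is impossible. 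Worse, the lemma as you are reading it (with $\norm{x_i}_2\leq 1$) is actually false: take $R_w=0$ and $0<B<R_b$ (which is consistent with $R_b\leq\min\{1/B,1\}$ for $B$ small), and let $\sigma\to 0^+$. Then $\tilde w_r^\top x_i$ concentrates at $0$, so $\mathbb I(\tilde w_r^\top x_i\geq B)=0$ with probability $\to 1$, while setting $b_r = B-R_b<0$ makes $\mathbb I(\tilde w_r^\top x_i\geq b_r)=1$ with probability $\to 1$; hence $\Pr[A_{i,r}]\to 1$, whereas the claimed bound $c\,R_b\,e^{-B^2/2}$ can be made arbitrarily small by shrinking $B$ and $R_b$ together. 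So no monotonicity or tail-bound trick can rescue this case; the correct resolution is the implicit one the paper takes — the data should be on the unit sphere $\norm{x_i}_2=1$, consistent with the cited prior work and with the restricted-eigenvalue theorem later in the paper.
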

\begin{remark}
\citep[Claim C.11]{song2021does} presents a $O(\min\{R, \exp(-B^2/2)\})$ bound on $\Pr[A_{i,r}]$.
The reason that their bound involving the min operation is because $\Pr[A_{i,r}]$ can be bounded by the standard Gaussian tail bound and Gaussian anti-concentration bound separately and then, take the one that is smaller. 
On the other hand, our bound replaces the min operation by the product which creates a more convenient (and tighter) interpolation between the two bounds. 
Later, we will show that the maximum movement of neuron weights and biases, $R_w$ and $R_b$, both have a $O(1/\sqrt{m})$ dependence on the network width, and thus our bound offers a $\exp(-B^2/2)$ improvement where $\exp(-B^2/2)$ can be as small as $1/m^{1/4}$ when we take $B = \sqrt{0.5 \log m}$. 
\end{remark}
\textbf{Proof idea of \Cref{lemma: main_text_bound_flipping}.}
First notice that $\Pr[A_{i,r}] = \Pr_{x \sim \mathcal{N}(0,1)}[|x - B| \leq R_w + R_b]$. 
Thus, here we are trying to solve a fine-grained Gaussian anti-concentration problem with the strip centered at $B$. 
The problem with the standard Gaussian anti-concentration bound is that it only provides a worst case bound and, thus, is location-oblivious. 
Centered in our proof is a nice Gaussian anti-concentration bound based on the location of the strip, which we describe as follows:
Let's first assume $B > R_w + R_b$. 
A simple probability argument yields a bound of $2(R_w + R_b) \frac{1}{\sqrt{2\pi}} \exp(-(B - R_w - R_b)^2)$. 
Since later in the Appendix we can show that $R_w$ and $R_b$ have a $O(1/\sqrt{m})$ dependence (\Cref{lemma: weight_bias_movement} bounds the movement for gradient descent and \Cref{lemma: weight_bias_movement_gf} for gradient flow) and we only take $B = O(\sqrt{\log m})$, by making $m$ sufficiently large, we can safely assume that $R_w$ and $R_b$ is sufficiently small. 
Thus, the probability can be bounded by $O((R_w + R_b) \exp(-B^2/2))$. 
However, when $B < R_w + R_b$ the above bound no longer holds.
But a closer look tells us that in this case $B$ is close to zero, and thus $(R_w + R_b) \frac{1}{\sqrt{2\pi}} \exp(-B^2/2) \approx \frac{R_w + R_b}{\sqrt{2\pi}}$ which yields roughly the same bound as the standard Gaussian anti-concentration. 

Next, our analysis develops a finer initial error bound. 
\begin{lemma}[Initial error upper bound]\label[lemma]{claim: main_text_initial_error}
Let $B > 0$ be the initialization value of the biases and all the weights be initialized from standard Gaussian. 
Let $\delta \in (0,1)$ be the failure probability. Then, with probability at least $1 - \delta$ over the randomness in the initialization, we have
\begin{align*}
    L(0) &= {O}\left(n + n \left( \exp(-\frac{B^2}{2}) + \frac{1}{m} \right) \log^{3} (\frac{2mn}{\delta}) \right).
\end{align*}
\end{lemma}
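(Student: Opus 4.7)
The plan is to reduce the initial loss to a concentration argument over the randomness of the $a_r$'s and $w_r$'s. First, since $y_i = O(1)$, the crude bound $(f(x_i)-y_i)^2 \leq 2 f(x_i)^2 + 2 y_i^2$ gives $L(0) \leq \sum_{i=1}^n f(x_i; W(0), b(0))^2 + O(n)$, so it suffices to show that $\sum_i f(x_i)^2 = O\bigl(n(\exp(-B^2/2) + 1/m)\log^3(2mn/\delta)\bigr)$ with high probability.

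Condition on $W(0)$ and view $f(x_i) = \frac{1}{\sqrt{m}}\sum_{r=1}^m a_r c_{r,i}$, with $c_{r,i} := \sigma(\inprod{w_r, x_i} - B)$, as a Rademacher sum in the $a_r$'s. Hoeffding's inequality then gives, with probability at least $1-\delta_1$,
\[
|f(x_i)| \leq \sqrt{\frac{2\log(2/\delta_1)}{m}\sum_{r=1}^m c_{r,i}^2},
\]
so everything reduces to controlling $\sum_r c_{r,i}^2$.

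For that, I first compute $\E[c_{r,i}^2] = \int_B^\infty (z-B)^2 \phi(z)\,dz$; the substitution $u = z-B$ and the factorization $e^{-(u+B)^2/2} = e^{-B^2/2}\,e^{-uB}\,e^{-u^2/2}$ yield $\E[c_{r,i}^2] \leq \tfrac{1}{2}e^{-B^2/2}$, and an analogous calculation gives $\Var(c_{r,i}^2) \leq \E[c_{r,i}^4] = O(e^{-B^2/2})$. To apply Bernstein I also need a uniform upper bound on $c_{r,i}^2$: union bounding the standard Gaussian tail over $(r,i) \in [m]\times[n]$, with probability $1-\delta_2$ we have $|\inprod{w_r, x_i}| \leq O(\sqrt{\log(mn/\delta_2)})$ for all $(r,i)$, so $c_{r,i}^2 \leq O(\log(mn/\delta_2))$ after using $B \leq \sqrt{0.5\log m}$ to absorb the $B^2$ contribution. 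Bernstein's inequality then gives, with probability at least $1-\delta_3$,
\[
\sum_{r=1}^m c_{r,i}^2 \leq O\bigl(m\exp(-B^2/2) + \log(mn/\delta_2)\log(1/\delta_3)\bigr),
\]
where an AM-GM step folds $\sqrt{m e^{-B^2/2} \log(1/\delta_3)}$ into the two terms on the right.

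Finally, setting $\delta_1 = \delta_2 = \delta_3 = \delta/(3n)$, substituting into the Rademacher bound, and union bounding over $i \in [n]$ yields
\[
\sum_{i=1}^n f(x_i)^2 \leq O\!\left(n\exp(-B^2/2)\log(n/\delta) + n\cdot\frac{\log^3(2mn/\delta)}{m}\right),
\]
which matches the target after adding the $O(n)$ term from the labels. I expect the main technical care to go into the Bernstein step: one must juggle the three failure probabilities (for the Rademacher concentration, for the Gaussian max over $(r,i)$, and for Bernstein's deviation) and then use $B = O(\sqrt{\log m})$ to absorb the $B^2$ contribution into the $\log(mn/\delta)$ factor, so that the final dependence on $B$ comes only through the leading $\exp(-B^2/2)$. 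The remaining pieces—the Gaussian second and fourth moment bounds on $\sigma(z-B)$ and Hoeffding for Rademacher sums—are routine.
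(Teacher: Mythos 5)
Your proof is correct and reaches the stated bound, but it takes a slightly different route from the paper. Both approaches share the same two key ingredients: (i) a union bound over the Gaussian tails of $\inprod{w_r, x_i}$ to get a uniform $O(\sqrt{\log(mn/\delta)})$ truncation, and (ii) Bernstein's inequality with a variance proxy of order $e^{-B^2/2}$. The difference is how the two sources of randomness are handled. The paper applies Bernstein \emph{once} to the random variables $z_{i,r} = \tfrac{1}{\sqrt m} a_r \min\{\sigma(\inprod{w_r,x_i}-B),\sqrt{2\log(2mn/\delta)}\}$, using the sign $a_r$ to get mean zero and $\Pr[w_r^\top x_i \geq B]\leq e^{-B^2/2}$ to bound the second moment, so the Gaussian and Rademacher randomness are concentrated jointly. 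You instead condition on $W$ and split the task in two: Hoeffding over the $a_r$ to reduce to $\sum_r c_{r,i}^2$, then Bernstein over $W$ for $\sum_r c_{r,i}^2$. Your version is a bit more modular and in fact yields a marginally tighter first term ($e^{-B^2/2}\log(n/\delta)$ instead of $e^{-B^2/2}\log^3(2mn/\delta)$), though that doesn't change the final $O(\cdot)$. The paper's single-shot Bernstein is shorter because it never needs the moment computations $\E[c^2]$, $\E[c^4]$ — the factor $e^{-B^2/2}$ enters directly via the activation probability.

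One small inaccuracy worth flagging: you say you need $B \leq \sqrt{0.5\log m}$ ``to absorb the $B^2$ contribution'' in the bound $c_{r,i}^2 \leq O(\log(mn/\delta_2))$. This assumption isn't needed (and the lemma as stated only assumes $B>0$): since $B \geq 0$, $\sigma(\inprod{w_r,x_i} - B) \leq (\inprod{w_r,x_i})_+ \leq |\inprod{w_r,x_i}|$, so the truncation $c_{r,i} \leq \sqrt{2\log(mn/\delta_2)}$ comes for free from the Gaussian tail event with no $B^2$ appearing at all. This is exactly the observation the paper uses when it writes $\min\{\sigma(\cdot), \sqrt{2\log(\cdot)}\}$, which reduces to $\sigma(\cdot)$ on the good event.
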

\citep[Claim D.1]{song2021does} gives a rough estimate of the initial error with $O(n(1+B^2) \log^2(n / \delta) \log(m/\delta))$ bound. 
When we set $B = C\sqrt{ \log m}$ for some constant $C$, our bound improves the previous result by a polylogarithmic factor.
The previous bound is not tight in the following two senses: (1) the bias will only decrease the magnitude of the neuron activation instead of increasing and (2) when the bias is initialized as $B$, only roughly $O(\exp(-B^2/2)) \cdot m$ neurons will activate.  
Thus, we can improve the $B^2$ dependence to $\exp(-B^2/2)$. 

By combining the above two improved results, we can prove our convergence result with improved lower bound of $m$ as in \Cref{rk:mbound}. 
To relax the condition on the learning rate for the sparse network, a finer analysis of the error terms is conducted in \Cref{lemma: error_b4} by leveraging the fact that the network has sparse activation. 
This later translates into a wider range of learning rate choice in the convergence analysis. 
We provide the complete proof in \Cref{app: convergence}. 

Lastly, since we can show that the total movement of each neuron's bias has a $O(1/\sqrt{m})$ dependence (shown in \Cref{lemma: weight_bias_movement}), combining with the number of activated neurons at the initialization, we can bound the number of activated neurons. 
\begin{lemma}[Number of Activated Neurons per Iteration]
Assume the parameter settings in \Cref{lemma: main_text_convergence}. With probability at least $1 - e^{-\Omega(n)}$ over the random initialization, 
\begin{align*}
    |\mathcal{S}_{\textnormal{on}}(i,t)| = O(m \cdot \exp(-B^2/2)) 
\end{align*}
for all $0 \leq t \leq T$ and $i \in [n]$, where $\mathcal{S}_{\textnormal{on}}(i,t) = \{r \in [m]:\ w_r(t)^\top x_i \geq b_r(t)\}$. 
\end{lemma}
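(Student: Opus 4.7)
The plan is to split $|\mathcal{S}_{\textnormal{on}}(i,t)|$ into its value at initialization plus the number of neurons whose activation with respect to $x_i$ has changed at least once between time $0$ and time $t$, and to bound both quantities by $O(m\exp(-B^2/2))$ via concentration on initialization-measurable events.

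First I would handle the initialization count. Since $w_r(0) \sim \mathcal{N}(0, I_d)$ and $\|x_i\|_2 \leq 1$, the projection $w_r(0)^\top x_i$ is Gaussian with variance at most $1$, so the standard Gaussian tail gives $\Pr[w_r(0)^\top x_i \geq B] \leq \exp(-B^2/2)$. The indicators $\mathbb{I}_{r,i}(0)$ are independent Bernoullis across $r \in [m]$ for fixed $i$, so a multiplicative Chernoff bound yields $|\mathcal{S}_{\textnormal{on}}(i,0)| \leq 2 m \exp(-B^2/2)$ with probability at least $1 - \exp(-\Omega(m\exp(-B^2/2)))$. The width assumption $m \geq \widetilde{\Omega}(\lambda_0^{-4} n^4 \exp(B^2))$ implies $m\exp(-B^2/2) \geq \widetilde{\Omega}(\lambda_0^{-4} n^4 \exp(B^2/2)) = \Omega(n)$, so the per-$i$ failure probability is $e^{-\Omega(n)}$.

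Next I would bound the number of flips. \Cref{lemma: weight_bias_movement} (invoked inside the proof of \Cref{lemma: main_text_convergence}) guarantees that every weight and bias stays within $R_w, R_b = O(1/\sqrt{m})$ of its initialization throughout training. Consequently, the event that neuron $r$ ever changes its activation with respect to $x_i$ is contained in the initialization-measurable event $A_{i,r}$ of \Cref{lemma: main_text_bound_flipping}. Crucially, $A_{i,r}$ depends only on $\widetilde{w}_r$, so these events are independent across $r$, and \Cref{lemma: main_text_bound_flipping} bounds their probabilities by $c(R_w + R_b)\exp(-B^2/2) = O(m^{-1/2}\exp(-B^2/2))$. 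Writing $F_i := \sum_r \mathbb{I}(A_{i,r})$, this yields $\mathbb{E}[F_i] = O(\sqrt{m}\exp(-B^2/2))$, and the same width lower bound gives $\mathbb{E}[F_i] \geq \widetilde{\Omega}(\lambda_0^{-2} n^2) = \Omega(n)$, so a multiplicative Chernoff bound again provides $F_i = O(\sqrt{m}\exp(-B^2/2))$ with probability $1 - e^{-\Omega(n)}$.

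Combining the two bounds and union-bounding over $i \in [n]$, for every $t \leq T$ and every $i \in [n]$,
\begin{align*}
|\mathcal{S}_{\textnormal{on}}(i,t)| \leq |\mathcal{S}_{\textnormal{on}}(i,0)| + F_i = O(m\exp(-B^2/2)) + O(\sqrt{m}\exp(-B^2/2)) = O(m\exp(-B^2/2)),
\end{align*}
with total failure probability $n \cdot e^{-\Omega(n)} = e^{-\Omega(n)}$. The main technical subtlety is the independence required for the Chernoff bound on $F_i$: the naive trajectory-based flip event is a complicated function of the entire gradient descent path and in general not independent across $r$, so the argument relies on enclosing it in the initialization-measurable event $A_{i,r}$, which in turn depends on the a priori deterministic bounds on $R_w, R_b$ coming out of the convergence proof.
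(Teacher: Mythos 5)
Your proof follows essentially the same route as the paper's: decompose $|\mathcal{S}_{\textnormal{on}}(i,t)| \leq |\mathcal{S}_{\textnormal{on}}(i,0)| + |\overline{S}_i|$, bound the initialization count by concentration (the paper uses Bernstein via Lemma~\ref{lemma: number_activated_neuron_init}, you use Chernoff — same content), and bound the flipping set by the initialization-measurable enclosure $\overline{S}_i \subseteq \{r : A_{i,r}\}$ together with Lemma~\ref{lemma: main_text_bound_flipping} and a concentration bound, exactly as in Corollary~\ref{corollary: num_flipped_neurons}. One small imprecision: the weight/bias radius from Lemma~\ref{lemma: weight_bias_movement} is $\widetilde{O}(n/(\lambda\sqrt{m}))$ rather than $O(1/\sqrt{m})$, so $\E[F_i]$ should be $\widetilde{O}(n\sqrt{m}\exp(-B^2/2)/\lambda)$, but this is still $O(m\exp(-B^2/2))$ under the stated width lower bound, so the conclusion is unaffected.
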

This lemma proves that the activation of the neural network remains sparse throughout the entire training process. Utilizing the computational techniques in the introduction, it can speed up the per step training of the neural network. 
\subsection{Generalization Bound}
\subsubsection{Results}
In this section, we present our sparsity-dependent generalization result. 
For technical reasons stated in \Cref{sec: proof_generalization}, we use symmetric initialization defined below.
Further, {we adopt the setting in \citep{arora2019fine}} and use a non-degenerate data distribution to make sure the infinite-width NTK is positive definite. 
\begin{definition}[Symmetric Initialization]\label[definition]{def: symmetric_init}
For a one-hidden layer neural network with $2m$ neurons, the network is initialized as the following:
\begin{enumerate}
    \item For $r \in [m]$, independently initialize $w_r \sim \mathcal{N}(0, I)$ and $a_r \sim \textnormal{Uniform}(\{-1, 1\})$. 
    \item For $r \in \{m + 1, \ldots, 2m\}$, let $w_r = w_{r-m}$ and $a_r = -a_{r - m}$. 
\end{enumerate}
\end{definition}
\begin{definition}[$(\lambda_0, \delta, n)$-non-degenerate distribution, \citep{arora2019fine}]\label[definition]{def: non-degenerate_dist}
A distribution $\mathcal{D}$ over $\R^d \times \R$ is $(\lambda_0, \delta, n)$-non-degenerate, if for $n$ i.i.d. samples $\{(x_i, y_i)\}_{i=1}^n$ from $\mathcal{D}$, with probability $1 - \delta$ we have $\lambda_{\textnormal{min}}(H^\infty(B)) \geq \lambda_0 > 0$. 
\end{definition}
\begin{theorem}\label{thm: main_text_generalization}
Fix a failure probability $\delta \in (0,1)$ and an accuracy parameter $\eps \in (0, 1)$. 
Suppose the training data $S = \{(x_i,y_i)\}_{i=1}^n$ are i.i.d. samples from a $(\lambda, \delta, n)$-non-degenerate distribution $\mathcal{D}$ defined in \Cref{def: non-degenerate_dist}. 
Assume the one-hidden layer neural network is initialized by symmetric initialization in \Cref{def: symmetric_init}.
Further, assume the parameter settings in \Cref{lemma: main_text_convergence} except we let $m \geq \widetilde{\Omega}\left( \lambda(B)^{-6} n^6 \exp(-B^2) \right)$.
Consider any loss function $\ell: \R \times \R \rightarrow [0,1]$ that is $1$-Lipschitz in its first argument. 
Then with probability at least $1 - 2\delta - e^{-\Omega(n)}$ over the randomness in symmetric initialization of $W(0) \in \R^{m \times d}$ and $a \in \R^m$ and the training samples, the two layer neural network $f(W(t), b(t),a)$ trained by gradient descent for $t \geq \Omega(\frac{1}{\eta \lambda(B)} \log \frac{n \log (1/\delta)}{\eps})$ iterations has empirical Rademacher complexity (see its formal definition in \Cref{def: rademacher_complexity} in Appendix) bounded as
\begin{align*}
    & \mathcal{R}_S(\mathcal{F}) \leq \sqrt{\frac{y^\top (H^\infty(B))^{-1} y \cdot 8 e^{-B^2/2}}{n}} + \tilde{O}\left(\frac{e^{-B^2/4}}{n^{1/2}} \right)
\end{align*}
and the population loss $L_{\mathcal{D}}(f) = \E_{(x,y) \sim \mathcal{D}}[\ell(f(x), y)]$ can be upper bounded as
\begin{align}\label{eq: generlization}
    & L_{\mathcal{D}}(f(W(t), b(t), a))  \leq \sqrt{\frac{y^\top (H^\infty(B))^{-1} y \cdot 32 e^{-B^2/2}}{n}} + \tilde{O}\left(\frac{1}{n^{1/2}} \right). 
\end{align}
\end{theorem}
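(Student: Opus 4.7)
The plan is to adapt the Rademacher-complexity argument of \citep{arora2019fine} to the sparsely activated regime, tracking the bias $B$ carefully enough that the factor $e^{-B^2/2}$ surfaces inside the square root of the final bound. Three ingredients are needed: (i) a data-dependent bound on the distance the parameters travel during training, whose squared magnitude matches $y^\top (H^\infty(B))^{-1} y/m$; (ii) a sparsity-aware Rademacher bound for a reference function class that provably contains the trained network; and (iii) the standard $1$-Lipschitz, $[0,1]$-valued loss generalization bound.

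First I would extend the dynamics underlying \Cref{lemma: main_text_convergence} to track the cumulative parameter displacement. Using
\begin{align*}
\vec{[W,b](t)}-\vec{[W,b](0)} = -\eta\sum_{s=0}^{t-1} Z(s)(f(s)-y),
\end{align*}
one exploits linear convergence together with the closeness of $Z(s)^\top Z(s)$ to $H^\infty(B)$ (proved via \Cref{lemma: main_text_bound_flipping} and the $O(1/\sqrt{m})$ weight/bias movement of \Cref{lemma: weight_bias_movement}) to approximate the residual by $(I-\eta H^\infty(B))^{s}(f(0)-y)$. The symmetric initialization of \Cref{def: symmetric_init} forces $f(0)\equiv 0$, so the geometric sum telescopes to
\begin{align*}
\|[W(t),b(t)]-[W(0),b(0)]\|_F^{2} \leq \tfrac{1}{m}\,y^\top (H^\infty(B))^{-1} y + \widetilde{O}\!\left(\tfrac{\poly(n,1/\lambda(B))}{\sqrt{m}}\right).
\end{align*}
The width requirement $m\geq\widetilde{\Omega}(\lambda(B)^{-6} n^{6} e^{-B^{2}})$ is calibrated so the error term is dominated by the main one; set $R$ equal to the square root of this main term.

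Next I would define the reference function class
\begin{align*}
\mathcal{F} = \Bigl\{ x\mapsto \tfrac{1}{\sqrt{m}}\sum_{r=1}^{m} a_r\,\sigma(w_r^\top x-b_r) : \|[W,b]-[W(0),b(0)]\|_F\leq R,\ \|w_r-w_r(0)\|\leq R_w,\ |b_r-b_r(0)|\leq R_b\Bigr\},
\end{align*}
with $(R_w,R_b)$ chosen as in \Cref{lemma: main_text_bound_flipping} so that activation patterns can be frozen at initialization with high probability. The trained network lies in $\mathcal{F}$ by construction. To compute $\mathcal{R}_S(\mathcal{F})$ I linearize at initialization: the coupling error between $\sigma(w_r^\top x-b_r)$ and its first-order expansion is controlled by the flipping event of \Cref{lemma: main_text_bound_flipping}, contributing the explicit $\widetilde{O}(e^{-B^2/4}/\sqrt{n})$ additive term. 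The remaining linear class admits the standard bound $\mathcal{R}_S(\mathcal{F}_{\mathrm{lin}})\leq (R/n)\sqrt{\tr(Z(0)Z(0)^\top)}$, and the trace $\tr(Z(0)Z(0)^\top)=\tfrac{1}{m}\sum_{r,i}(\|x_i\|^2+1)\mathbb{I}_{r,i}$ concentrates around $O(n\,e^{-B^2/2})$ because only an $e^{-B^2/2}$ fraction of neurons activates at initialization. Combining these two factors produces precisely the $\sqrt{y^\top(H^\infty(B))^{-1}y\cdot 8e^{-B^2/2}/n}$ main term.

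The population bound then follows from the standard generalization theorem $L_{\mathcal{D}}(f)\leq \widehat{L}_S(f)+2\mathcal{R}_S(\ell\circ\mathcal{F})+3\sqrt{\log(2/\delta)/(2n)}$, with Talagrand's contraction lemma absorbing the $1$-Lipschitz loss into the Rademacher complexity of $\mathcal{F}$, and with the stopping time $t\geq\Omega(\tfrac{1}{\eta\lambda(B)}\log\tfrac{n\log(1/\delta)}{\eps})$ from \Cref{lemma: main_text_convergence} ensuring $\widehat{L}_S(f)$ is small enough to be folded into the $\widetilde{O}(n^{-1/2})$ slack. I expect the main obstacle to be the linearization step: the $e^{-B^2/2}$ factor must be propagated through the coupling argument without being swallowed by a worst-case $\|\sigma'\|_\infty$ bound, which forces the use of the sharper anti-concentration of \Cref{lemma: main_text_bound_flipping} together with a careful accounting of which neurons can possibly flip --- essentially only those whose $|\inprod{w_r(0), x_i} - B|$ lies in a strip of width $R_w+R_b$ at the Gaussian tail.
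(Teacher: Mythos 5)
Your overall strategy mirrors the paper's proof exactly: use symmetric initialization to kill the $f(0)$ term, bound the Frobenius-norm displacement of $[W,b]$ by following the dynamics $\vec{[W,b](t)}-\vec{[W,b](0)}=-\eta\sum_{s<t}Z(s)(f(s)-y)$ and approximating the residual by $(I-\eta H^\infty)^s y$, bound the Rademacher complexity of a displacement-ball class by linearizing the ReLU and charging the flip events separately, and finish with the standard $1$-Lipschitz generalization bound. This is precisely the paper's route through \Cref{lemma: main_text_W_fro_diff} (Appendix version: \Cref{lemma: W_fro_diff}), \Cref{lemma: rademacher_complexity_fixed_R}, \Cref{thm: rademacher_gd}, and \Cref{thm: rademacher_concentration}.

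However, your ingredient (i) contains a factor-of-$m$ error that propagates inconsistently through the argument. You claim
\begin{align*}
\|[W(t),b(t)]-[W(0),b(0)]\|_F^{2} \leq \frac{1}{m}\,y^\top (H^\infty(B))^{-1} y + \widetilde{O}\!\left(\frac{\poly(n,1/\lambda(B))}{\sqrt{m}}\right),
\end{align*}
but the correct leading term is $y^\top (H^\infty(B))^{-1} y$ with \emph{no} $1/m$ prefactor. To see why: the matrix $Z$ as defined in the paper already carries the $1/\sqrt{m}$ normalization, so $Z(0)^\top Z(0) = H(0) \approx H^\infty$ is $\Theta(1)$, not $\Theta(1/m)$. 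Writing the main contribution to the displacement as $T_1 = \eta\sum_k Z(0)(I-\eta H^\infty)^k y$ and letting $T = \eta\sum_k (I-\eta H^\infty)^k$, one gets $\|T_1\|_2^2 = y^\top T Z(0)^\top Z(0) T y \approx y^\top T H^\infty T y \to y^\top (H^\infty)^{-1} y$, with no $m$ dependence. (Intuitively: each of the $m$ neurons moves by $O(1/\sqrt{m})$, so the Frobenius norm of the total change is $\sqrt{m\cdot O(1/m)} = O(1)$, not $O(1/\sqrt{m})$.) Your claimed error term $\poly/\sqrt{m}$ is already a symptom of the problem: it is $\Theta(\sqrt{m})$ times larger than your claimed main term $\Theta(1/m)$, so it could not be "dominated by the main one" under any width assumption.

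This is not a cosmetic slip, because with $R = \sqrt{y^\top(H^\infty)^{-1}y/m}$ the subsequent steps break in two ways. First, the reference class $\mathcal{F}$ cut off at $\|[W,b]-[W(0),b(0)]\|_F \leq R$ would \emph{not} contain the trained network, since the true displacement is $\Theta(1)$ and hence $\sqrt{m}$ times larger than your $R$. Second, plugging your $R$ together with $\tr(Z(0)Z(0)^\top) = O(n e^{-B^2/2})$ into the linear bound $(R/n)\sqrt{\tr(Z(0)Z(0)^\top)}$ gives $O\bigl(\sqrt{y^\top(H^\infty)^{-1}y \cdot e^{-B^2/2}/(mn)}\bigr)$, which carries a spurious $1/\sqrt{m}$ and does not "produce precisely" the theorem's main term $\sqrt{y^\top(H^\infty(B))^{-1}y\cdot 8e^{-B^2/2}/n}$ as you assert. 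Once $R$ is corrected to $\sqrt{y^\top(H^\infty)^{-1}y}$ plus lower-order error (as in \Cref{lemma: main_text_W_fro_diff}), the rest of your plan goes through and matches the paper's proof.
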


To show good generalization, we need a larger width: the second term in the Rademacher complexity bound is diminishing with $m$ and to make this term $O(1/\sqrt{n})$, the width needs to have $(n/\lambda(B))^6$ dependence as opposed to $(n/\lambda(B))^4$ for convergence. 
Now, at the first glance of our generalization result, it seems we can make the Rademacher complexity arbitrarily small by increasing $B$.
Recall from the discussion of \Cref{lemma: main_text_convergence} that the smallest eigenvalue of $H^\infty(B)$ also has an $\exp(-B^2/2)$ dependence. 
Thus, in the worst case, the $\exp(-B^2/2)$ factor gets canceled and sparsity will not hurt the network's generalization.

Before we present the proof, we make a corollary of \Cref{thm: main_text_generalization} for the zero-initialized bias case. 
\begin{corollary}\label[corollary]{cor:zeroB}
Take the same setting as in \Cref{thm: main_text_generalization} except now the biases are initialized as zero, i.e., $B = 0$. 
Then, if we let $m \geq \widetilde{\Omega}(\lambda(0)^{-6} n^6)$, the empirical Rademacher complexity and population loss are both bounded by
\begin{align*}
    & \mathcal{R}_S(\mathcal{F}),\ L_{\mathcal{D}}(f(W(t), b(t), a)) \leq \sqrt{\frac{y^\top (H^\infty(0))^{-1} y \cdot 32}{n}} + \tilde{O}\left(\frac{1}{n^{1/2}} \right).
\end{align*}
\end{corollary}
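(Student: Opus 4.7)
The plan is to apply \Cref{thm: main_text_generalization} directly with the substitution $B = 0$ and verify that every piece of the parent statement collapses into the corollary's displayed bound.

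First I would confirm the assumptions inherited through \Cref{thm: main_text_generalization} and \Cref{lemma: main_text_convergence} remain valid at $B=0$: the constraint $B \in [0,\sqrt{0.5\log m}]$ trivially includes $0$; the $(\lambda_0,\delta,n)$-non-degeneracy assumption reduces to $\lambda(0) = \lambda_{\min}(H^\infty(0)) > 0$, matching the corollary's premise through \Cref{def: non-degenerate_dist}; and the auxiliary activation-flipping bound \Cref{lemma: main_text_bound_flipping} is stated for all $B \geq 0$, so no degeneracy is introduced at the boundary. The failure probability $2\delta + e^{-\Omega(n)}$ and the required iteration count $t = \Omega((\eta\lambda(B))^{-1} \log(n\log(1/\delta)/\eps))$ transfer unchanged, since these expressions depend on $B$ only through $\lambda(B)$, which becomes $\lambda(0)$.

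Second, I would substitute $B = 0$ into each factor in the theorem statement. The width requirement $m \geq \widetilde{\Omega}(\lambda(B)^{-6} n^6 \exp(-B^2))$ collapses to $m \geq \widetilde{\Omega}(\lambda(0)^{-6} n^6)$, matching the corollary's hypothesis exactly. The factor $e^{-B^2/2}$ inside the leading square-root term of both the Rademacher and population-loss bounds becomes $1$, and the prefactor $e^{-B^2/4}$ of the $\tilde O(n^{-1/2})$ term in the Rademacher bound also becomes $1$ and can be absorbed into $\tilde O$. This directly produces
\[
\mathcal{R}_S(\mathcal{F}) \leq \sqrt{\frac{8\,y^\top (H^\infty(0))^{-1} y}{n}} + \tilde O\!\left(\frac{1}{\sqrt n}\right)
\]
together with the population-loss bound carrying constant $32$ exactly as in the corollary.

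Third, since $8 \leq 32$, the Rademacher bound is strictly tighter than the uniform expression displayed in the corollary, so both quantities are simultaneously bounded by the single expression with constant $32$, giving the unified statement. There is essentially no obstacle here; the corollary is a reader-friendly recovery of the classical zero-bias NTK generalization result in the spirit of \citep{arora2019fine}, recovered as a degenerate case of the bias-generalized analysis with the improved $n^6$ width dependence. The only bookkeeping step worth being careful about when writing it out is ensuring that the two high-probability events (symmetric initialization of $(W(0),a)$ and the $(\lambda_0,\delta,n)$-non-degeneracy of the sample) combine via a union bound to the same $2\delta + e^{-\Omega(n)}$ figure, which is immediate from the parent theorem.
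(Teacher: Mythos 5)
Your proposal is correct and matches the paper's intent: the corollary is presented without a separate proof because it is exactly the direct substitution $B = 0$ into \Cref{thm: main_text_generalization}, which you carry out carefully, including the observation that $e^{-B^2/2}, e^{-B^2/4} \to 1$ and that the Rademacher bound (constant $8$) is dominated by the population-loss constant $32$, so both quantities admit the single displayed bound.
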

\Cref{cor:zeroB} requires the network width $m \geq \widetilde{\Omega}((n/\lambda(0))^6)$ which significantly improves upon the previous result in \citep[Theorem G.7]{song2019quadratic} $m \geq \widetilde{\Omega}(n^{16} \poly(1/\lambda(0)))$ (including the dependence on the rescaling factor $\kappa$) which is a much wider network.

{\bf Generalization Bound via Least Eigenvalue.} Note that in \Cref{thm: main_text_generalization}, the worst case of the first term in the generalization bound in \Cref{eq: generlization} is given by $\widetilde{O}(\sqrt{1/\lambda(B)})$.
Hence, the least eigenvalue $\lambda(B)$ of the NTK matrix can significantly affect the generalization bound. 
Previous works \citep{oymak2020toward, song2021does} established lower bounds on $\lambda(B)$ with an explicit $1/n^2$ dependence on $n$ under the $\delta$ data separation assumption (see \Cref{lemma: main_text_restricted_least_eig}), which clearly makes a vacuous generalization bound of $\widetilde{O}({n})$. 
This thus motivates us to provide a tighter bound (desirably independent on $n$) on the least eigenvalue of the infinite-width NTK in order to make the generalization bound in \Cref{thm: main_text_generalization} valid and useful. 
It turns out that there are major difficulties in proving a better lower bound in the general case.
However, we are only able to present a better lower bound when we restrict the domain to some (data-dependent) regions by utilizing trainable bias. 


\subsubsection{Key Ideas in the Proof of \texorpdfstring{\Cref{thm: main_text_generalization}}{Generalization}}\label{sec: proof_generalization}
Since each neuron weight and bias move little from their initialization, a natural approach is to bound the generalization via localized Rademacher complexity. 
After that, we can apply appropriate concentration bounds to derive generalization. 
The main effort of our proof is devoted to bounding the weight movement to bound the localized Rademacher complexity. 
If we directly take the setting in \Cref{lemma: main_text_convergence} and compute the network's localized Rademacher complexity, we will encounter a non-diminishing (with the number of samples $n$) term which can be as large as $O(\sqrt{n})$ since the network outputs non-zero values at the initialization. 
\cite{arora2019fine} and \cite{song2019quadratic} resolved this issue by initializing the neural network weights instead by $\mathcal{N}(0, \kappa^2I)$ to force the neural network output something close to zero at the initialization.
The magnitude of $\kappa$ is chosen to balance different terms in the Rademacher complexity bound in the end. 
Similar approach can also be adapted to our case by initializing the weights by $\mathcal{N}(0, \kappa^2 I)$ and the biases by $\kappa B$. 
However, the drawback of such an approach is that the effect of $\kappa$ to all the previously established results for convergence need to be carefully tracked or derived. 
In particular, in order to guarantee convergence, the neural network's width needs to have a polynomial dependence on $1/\kappa$ where $1/\kappa$ has a polynomial dependence on $n$ and $1/\lambda$, which means their network width needs to be larger to compensate for the initialization scaling. 
We resolve this issue by symmetric initialization \Cref{def: symmetric_init} which yields no effect (up to constant factors) on previously established convergence results, see \citep{munteanu2022bounding}. 
Symmetric initialization allows us to organically combine the results derived for convergence to be reused for generalization, which leads to a more succinct analysis. 
Further, we replace the $\ell_1$-$\ell_2$ norm upper bound by finer inequalities in various places in the original analysis.
All these improvements lead to the following upper bound of the weight matrix change in Frobenius norm. 
Further, combining our sparsity-inducing initialization, we present our sparsity-dependent Frobenius norm bound on the weight matrix change. 
\begin{lemma}\label[lemma]{lemma: main_text_W_fro_diff}
Assume the one-hidden layer neural network is initialized by symmetric initialization in \Cref{def: symmetric_init}. 
Further, assume the parameter settings in \Cref{lemma: main_text_convergence}. 
Then with probability at least $1 - \delta - e^{-\Omega(n)}$ over the random initialization, we have for all $t \geq 0$, 
\begin{align*}
    \norm{[W,b](t) - [W,b](0)}_F &\leq \sqrt{y^\top (H^\infty)^{-1} y} + O\left( \frac{n }{\lambda} \left( \frac{\exp(-B^2/2) \log(n/\delta)}{m} \right)^{\frac{1}{4}} \right) \\
    &\quad + O\left( \frac{n\sqrt{R \exp(-B^2/2)}}{\lambda} \right) \\
    &+ \frac{n}{\lambda^2} \cdot O\left( \exp(-B^2/4) \sqrt{\frac{\log(n^2/\delta)}{m}} + R\exp(-B^2/2) \right) 
\end{align*}
where $R = R_w + R_b$ denote the maximum magnitude of neuron weight and bias change.
\end{lemma}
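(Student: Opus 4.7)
The plan is to adapt the Arora--Du--Hu--Li--Wang weight-movement decomposition, carrying the sparsity factor $\exp(-B^2/2)$ through each estimate. Telescoping the gradient-descent update $[W,b](s+1) - [W,b](s) = -\eta\,\operatorname{vec}^{-1}(Z(s)(f(s)-y))$ gives
\begin{align*}
\operatorname{vec}\big([W,b](t)-[W,b](0)\big) = -\eta\sum_{s=0}^{t-1} Z(s)(f(s)-y),
\end{align*}
which I split into a \emph{principal} piece $-\eta Z(0)\sum_s (f(s)-y)$ and a \emph{perturbation} piece $-\eta\sum_s (Z(s)-Z(0))(f(s)-y)$, bounding each separately and combining by triangle inequality.

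For the principal piece, \Cref{def: symmetric_init} ensures $f(0)=0$, and iterating the NTK-linearized update $f(s+1)-y = (I-\eta H^\infty)(f(s)-y) + \epsilon_s$ yields $f(s)-y = -(I-\eta H^\infty)^s y + \epsilon'_s$, where $\epsilon'_s$ collects the accumulated linearization residual. The geometric sum $\eta\sum_{s\ge 0}(I-\eta H^\infty)^s = (H^\infty)^{-1}$ then turns the principal contribution into $Z(0)(H^\infty)^{-1}y$, whose squared Frobenius norm equals $y^\top(H^\infty)^{-1}H(0)(H^\infty)^{-1}y$. Expanding $H(0) = H^\infty + (H(0)-H^\infty)$ and invoking the finite-width NTK concentration bound $\|H(0)-H^\infty\|_2 \lesssim \exp(-B^2/2)\sqrt{\log(n^2/\delta)/m}$ (via \Cref{lemma: diff_discrete_limit_ntk}) together with $\sqrt{a+b}\le\sqrt{a}+\sqrt{b}$ separates the leading $\sqrt{y^\top(H^\infty)^{-1}y}$ term from a lower-order concentration term whose fourth-root exponent in $m$ produces the second summand of the claimed bound.

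For the perturbation piece, $Z(s)-Z(0)$ is supported on flipped neurons. The per-neuron flipping probability is at most $cR\exp(-B^2/2)$ by \Cref{lemma: main_text_bound_flipping}, so a union bound over $(i,r)$ combined with Bernstein concentration gives $\|Z(s)-Z(0)\|_F \lesssim \sqrt{nR\exp(-B^2/2)}$ uniformly in $s$ with high probability. Combined with the linear convergence bound $\|f(s)-y\|_2 \le (1-\eta\lambda/4)^{s/2}\sqrt{2L(0)}$ from \Cref{lemma: main_text_convergence} and the sharpened initial-error control from \Cref{claim: main_text_initial_error}, summing the geometric series yields the $\tfrac{n\sqrt{R\exp(-B^2/2)}}{\lambda}$ term. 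The two summands in the final bracket absorb the accumulated linearization residuals $\epsilon'_s$: they decompose by whether the residual originates from $H(s)-H(0)$ (activation flipping, contributing $R\exp(-B^2/2)$) or from $H(0)-H^\infty$ (finite-width concentration, contributing $\exp(-B^2/4)\sqrt{\log(n^2/\delta)/m}$), with the extra $1/\lambda$ factor arising from a second geometric summation.

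The principal obstacle is avoiding double-counting the sparsity factor $\exp(-B^2/2)$, which enters simultaneously as the scale of $H^\infty$, the density of active neurons, and the per-neuron flipping probability. Each occurrence must be factored out explicitly rather than being absorbed by a loose norm bound. The improvement over a direct translation of the analyses in \citep{arora2019fine} and \citep{song2021does} hinges on three ingredients: the tighter flipping estimate of \Cref{lemma: main_text_bound_flipping} (which replaces $\min\{R,\exp(-B^2/2)\}$ by their product), the sharper initial-error bound of \Cref{claim: main_text_initial_error}, and replacing the $\ell_1$--$\ell_2$ norm bounds employed in prior work with Cauchy--Schwarz taken directly against the sparse activation pattern, so that the $\exp(-B^2/2)$ gain is preserved through each step.
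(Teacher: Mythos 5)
Your proposal reproduces the paper's approach essentially exactly: telescope the gradient update, expand $f(s)-y = -(I-\eta H^\infty)^s y + e(s)$, and split the resulting sum into a $Z(0)$-times-geometric-series principal term (giving $\sqrt{y^\top(H^\infty)^{-1}y}$ plus a finite-width concentration correction), a $Z(s)-Z(0)$ term controlled by the sharpened flipping probability of \Cref{lemma: main_text_bound_flipping}, and an accumulated residual term summed against the linear convergence rate --- precisely the paper's $T_1$, $T_2$, $T_3$ decomposition in the proof of \Cref{lemma: W_fro_diff}. Two small slips worth fixing: the concentration estimate is $\norm{H(0)-H^\infty}_F \lesssim n\exp(-B^2/4)\sqrt{\log(n^2/\delta)/m}$, which comes from \Cref{lemma: fro_diff_discrete_limit_ntk} (Bernstein on the Frobenius norm) rather than from \Cref{lemma: diff_discrete_limit_ntk} (matrix Chernoff on the smallest eigenvalue), and you wrote $\exp(-B^2/2)$ instead of $n\exp(-B^2/4)$; also, the paper works with the finite-horizon operator $T=\eta\sum_{k<K}(I-\eta H^\infty)^k$ and the PSD inequality $TH^\infty T\preceq(H^\infty)^{-1}$ rather than the literal infinite-sum identity, which is the cleaner way to handle the fact that training stops at finite $K$.
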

By \Cref{lemma: weight_bias_movement} and \Cref{claim: initial_error} in the Appendix, we have $R = \widetilde{O}(\frac{n}{\lambda\sqrt{m}})$. 
Plugging in and setting $B = 0$, we get $ \norm{[W, b](t) - [W, b](0)}_F \leq \sqrt{y^\top (H^\infty)^{-1} y} + \widetilde{O}(\frac{n}{\lambda m^{1/4}} + \frac{n^{3/2}}{\lambda^{3/2} m^{1/4}} + \frac{n}{\lambda^2 \sqrt{m}} + \frac{n^2}{\lambda^3 \sqrt{m}})$. 
On the other hand, taking $\kappa = 1$, \citep[Lemma G.6]{song2019quadratic} yields a bound of $\norm{W(t) - W(0)}_F \leq \sqrt{y^\top (H^\infty)^{-1} y} + \widetilde{O}(\frac{n}{\lambda} + \frac{n^{7/2} \poly(1/\lambda)}{m^{1/4}})$. 
Notice that the $\widetilde{O}(\frac{n}{\lambda})$ term has no dependence on $1/m$ and is removed by symmetric initialization in our analysis.
We further improve the upper bound's dependence on $n$ by a factor of $n^2$. 

The full proof of \Cref{thm: main_text_generalization} is deferred in \Cref{app: generalization}.

\subsection{Restricted Least Eigenvalue of the Bias-Generalized NTK}
\subsubsection{Results}
\begin{definition}[Data-dependent Region]\label{def: main_text_data_dependent_region}
Let $p_{ij} = \Pr_{w \sim \mathcal{N}(0, I)}[w^\top x_i \geq B,\ w^\top x_j \geq B]$ for $i \neq j$.
Define the (data-dependent) region $\mathcal{R} = \{a \in \R^n:\ \sum_{i \neq j} a_i a_j p_{ij} \geq \min_{i' \neq j'} p_{i'j'} \sum_{i \neq j} a_i a_j\} $.
\end{definition}
Notice that $\mathcal{R}$ is non-empty for any input data-set since $\R^n_+ \subset \mathcal{R}$ where $\R^n_+$ denotes the set of vectors with non-negative entries, and $\mathcal{R} = \R^n$ if $p_{ij} = p_{i'j'}$ for all $i \neq i', j \neq j'$. 
\begin{theorem}[Restricted Least Eigenvalue]\label{lemma: main_text_restricted_least_eig}
Let $X = (x_1, \ldots, x_n)$ be points in $\R^d$ with $\norm{x_i}_2 = 1$ for all $i \in [n]$ and $w \sim \mathcal{N}(0, I_d)$. Suppose that there exists $\delta \in [0,\sqrt{2}]$ such that
\begin{align*}
    \min_{i \neq j \in [n]} (\norm{x_i - x_j}_2, \norm{x_i + x_j}_2) \geq \delta.
\end{align*}
Let $B \geq 0$. 
Consider the minimal eigenvalue of  $H^\infty$ over the data-dependent region $\mathcal R$ defined above, i.e., let $\lambda:= \min_{\norm{a}_2=1,\ a \in \mathcal{R}} a^\top H^\infty a$. Then, $\lambda \geq \max(0, \lambda')$ where
\begin{align}\label{eq: res_least_eig}
    \lambda' &\geq \max\left( \frac{1}{2} - \frac{B}{\sqrt{2\pi}},\ \left(\frac{1}{B} - \frac{1}{B^3} \right)\frac{e^{-B^2/2}}{\sqrt{2\pi}} \right) \nonumber\\
    &- e^{-B^2/(2-\delta^2/2)} \frac{\pi - \arctan \left( \frac{\delta\sqrt{1 - \delta^2/4}}{1 - \delta^2/2} \right) }{2\pi} .
\end{align}
\end{theorem}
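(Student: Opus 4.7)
The plan is to use a positive-semidefinite decomposition of $H^{\infty}$ that reduces the problem to lower bounding a strictly simpler quadratic form, and then to use the restricted region $\mathcal{R}$ to turn the remaining cross terms into a lower bound independent of $n$. Set $A_{ij} := \inprod{x_i, x_j} + 1$ and $P_{ij} := p_{ij}$, so $H^\infty_{ij} = A_{ij} P_{ij}$ is a Hadamard product. With $\tilde x_i := (x_i^\top, 1)^\top$ we have $A_{ij} = \inprod{\tilde x_i, \tilde x_j}$, hence $A - \mathbf{1}\mathbf{1}^\top = X^\top X \succeq 0$, and $P = \E_w[\mathbb{I}\mathbb{I}^\top] \succeq 0$ for $\mathbb{I}_i := \mathbb{I}(w^\top x_i \geq B)$. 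The Schur product theorem applied to the PSD matrices $X^\top X$ and $P$ then gives $(X^\top X)\circ P \succeq 0$, which is exactly $H^\infty - P \succeq 0$. Therefore $a^\top H^\infty a \geq a^\top P a$ for every $a$, and the rest of the argument bounds $a^\top P a$ from below.

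Expanding,
\[
a^\top P a \;=\; \sum_i a_i^2 \Pr[w^\top x_i \geq B] + \sum_{i \neq j} a_i a_j p_{ij} \;=\; \bar\Phi(B) + \sum_{i\neq j} a_i a_j p_{ij},
\]
using $\norm{x_i}_2 = \norm{a}_2 = 1$, where $\bar\Phi(B) := \Pr_{g \sim \mathcal{N}(0,1)}[g \geq B]$. For $a \in \mathcal{R}$, the definition gives $\sum_{i\neq j} a_i a_j p_{ij} \geq p_{\min}\sum_{i\neq j} a_i a_j$ with $p_{\min} := \min_{i'\neq j'} p_{i'j'} \geq 0$, and $\sum_{i\neq j} a_i a_j = (\sum_i a_i)^2 - 1 \geq -1$, so $a^\top H^\infty a \geq \bar\Phi(B) - p_{\min}$. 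The first maximum in $\lambda'$ is then produced by substituting two standard lower bounds on $\bar\Phi$: the tangent-line estimate $\bar\Phi(B) \geq 1/2 - B/\sqrt{2\pi}$ (from $\int_0^B e^{-t^2/2}\,dt \leq B$) and the Mills-ratio estimate $\bar\Phi(B) \geq (1/B - 1/B^3) e^{-B^2/2}/\sqrt{2\pi}$, and taking the larger.

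The core technical step is the upper bound on $p_{ij}$. The pair $(w^\top x_i, w^\top x_j)$ is bivariate standard normal with correlation $\rho_{ij} := \inprod{x_i, x_j}$, and the hypothesis $\min(\norm{x_i - x_j}_2, \norm{x_i + x_j}_2) \geq \delta$ is exactly $|\rho_{ij}| \leq 1 - \delta^2/2$. I would prove the lemma: for $(Z_1, Z_2)$ bivariate standard normal with correlation $\rho \in (-1,1)$,
\[
\Pr[Z_1 \geq B,\ Z_2 \geq B] \;\leq\; \frac{\pi - \arccos(\rho)}{2\pi}\, \exp\!\left(-\frac{B^2}{1+\rho}\right).
\]
The derivation rotates to unit directions $u_1 = (1,0)$, $u_2 = (\cos\theta, \sin\theta)$ with $\theta = \arccos(\rho)$ and uses polar coordinates $(r,\alpha)$ for $w$. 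The two half-space constraints restrict $\alpha$ to a sector of width $\pi - \theta$ centered at $\theta/2$; by symmetry about $\alpha = \theta/2$ it suffices to double the integral over $\alpha \in (\theta/2, \pi/2)$, on which the binding radial constraint is $r \geq B/\cos\alpha$. Integrating out $r$ yields $\tfrac{1}{\pi} e^{-B^2/(2\cos^2\alpha)}$, and the uniform estimate $\cos^2\alpha \leq \cos^2(\theta/2) = (1+\rho)/2$ on this range pulls the exponential out, leaving $\int d\alpha = (\pi - \theta)/2$. The resulting bound is monotone increasing in $\rho$, so the worst case over $|\rho_{ij}| \leq 1 - \delta^2/2$ is at $\rho = 1 - \delta^2/2$; using $\sin(\arccos(1-\delta^2/2)) = \delta\sqrt{1-\delta^2/4}$ rewrites $\arccos(1-\delta^2/2) = \arctan(\delta\sqrt{1-\delta^2/4}/(1-\delta^2/2))$, and $1+\rho = 2 - \delta^2/2$, producing exactly $\bar p := e^{-B^2/(2-\delta^2/2)}(\pi - \arctan(\cdot))/(2\pi)$.

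Combining, $a^\top H^\infty a \geq \bar\Phi(B) - p_{\min} \geq (\text{first max in }\lambda') - \bar p = \lambda'$ uniformly over unit $a \in \mathcal{R}$, so $\lambda \geq \lambda'$; since $H^\infty \succeq 0$ also gives $\lambda \geq 0$ trivially, $\lambda \geq \max(0, \lambda')$. The hard part will be the bivariate Gaussian tail lemma: both sharp constants $(\pi - \theta)/(2\pi)$ and $1/(1+\rho)$ come from the same polar integral, so the symmetry reduction, the identification of the binding radial constraint on each half, and the uniform inequality $\cos^2\alpha \leq \cos^2(\theta/2)$ must all be handled with care, since anything looser here produces a constant in front of $e^{-B^2/(2-\delta^2/2)}$ strictly weaker than the $(\pi - \arctan(\cdot))/(2\pi)$ advertised in the theorem.
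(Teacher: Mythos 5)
Your proposal is correct and arrives at exactly the bound in the theorem. The overall structure coincides with the paper's: reduce $H^\infty \succeq P := \E_w\bigl[\mathbb{I}(Xw\geq B)\mathbb{I}(Xw\geq B)^\top\bigr]$, expand $a^\top P a = p_0 + \sum_{i\neq j}a_i a_j p_{ij}$, use the definition of $\mathcal R$ together with $\sum_{i\neq j}a_i a_j \geq -1$ to get $a^\top P a \geq p_0 - \min_{i\neq j}p_{ij}$, lower bound $p_0$ by the tangent-line and Mills-ratio estimates, and upper bound the orthant probability $p_{ij}$. Even the PSD reduction is morally the same (the paper writes $H^\infty$ as the expectation of $Z(w)^\top Z(w) + \mathbb{I}\mathbb{I}^\top$ and drops the PSD first term; your Schur-product phrasing is equivalent).

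The genuine difference is in the bivariate orthant bound. The paper writes the admissible set as a shift $\mathcal{A} = \mathcal{A}_0 + (B_1,B_2)$ of the $B=0$ cone with $B_1 = B$, $B_2 = B\sqrt{(1-c)/(1+c)}$, expands the translated Gaussian, and argues $B_1 g_1 + B_2 g_2 \geq 0$ on $\mathcal{A}_0$ so the cross term can be dropped, giving the factor $e^{-(B_1^2+B_2^2)/2} = e^{-B^2/(1+c)}$; it then needs a separate $B=0$ computation of $\Pr[\mathcal{A}_0]$ to supply the angular prefactor. Your argument instead passes directly to polar coordinates: symmetrize about $\alpha=\theta/2$, identify the binding constraint $r\geq B/\cos\alpha$ on the half-sector, integrate out $r$, and then uniformly bound $\cos^2\alpha \leq \cos^2(\theta/2) = (1+\rho)/2$. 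This handles $B=0$ and $B>0$ in one integral and exposes both constants $(\pi-\theta)/(2\pi)$ and $1/(1+\rho)$ from the same place, which is arguably cleaner. A further small improvement: you work with the signed correlation $\rho$ rather than $c = |\langle x_i, x_j\rangle|$ and then invoke monotonicity of the bound in $\rho$, which sidesteps the paper's loose symmetry footnote (the orthant probability is not invariant under $\rho \mapsto -\rho$; it is only monotone in $\rho$, which is enough for the upper bound but is stated imprecisely there).

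Two minor things worth tightening when you write it out: first, the identity $\cos(\alpha-\theta) > \cos\alpha$ on $\alpha\in(\theta/2,\pi/2)$ deserves the one-line justification $|\alpha-\theta| < |\alpha|$ there; second, you should say explicitly that $\mathcal R$ is used only via $\sum_{i\neq j}a_i a_j p_{ij} \geq p_{\min}\sum_{i\neq j}a_i a_j$ combined with $p_{\min}\geq 0$ and $\sum_{i\neq j}a_i a_j = (\sum_i a_i)^2 - 1 \geq -1$, since that chain is where the restriction to $\mathcal R$ is actually consumed.
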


To demonstrate the usefulness of our result, if we take the bias initialization $B = 0$ in \Cref{eq: res_least_eig}, this bound yields $1/(2\pi)\cdot {\arctan( (\delta\sqrt{1 - \delta^2/4})/(1 - \delta^2/2))} \approx \delta/(2\pi)$, when $\delta$ is close to $0$ whereas \citep{song2021does} yields a bound of $\delta/n^2$. 
On the other hand, if the data points are orthogonal, i.e., $\delta = \sqrt{2}$, we get a $ \max\left( \frac{1}{2} - \frac{B}{\sqrt{2\pi}},\ \left(\frac{1}{B} - \frac{1}{B^3} \right)\frac{e^{-B^2/2}}{\sqrt{2\pi}} \right)$ lower bound, whereas \citep{song2021does} yields a bound of $\exp(-B^2/2) \sqrt{2} /n^2$. 
Connecting to our convergence result in \Cref{lemma: main_text_convergence}, if $f(t) - y \in \mathcal{R}$, then the error can be reduced at a much faster rate than the (pessimistic) rate with $1/n^2$ dependence in the previous studies as long as the error vector lies in the region. 


\begin{remark}
The lower bound on the restricted smallest eigenvalue $\lambda$ in \Cref{lemma: main_text_restricted_least_eig} is {\bf independent of $n$},  
which makes that the worst case generalization bound in \Cref{thm: main_text_generalization} be $O(1)$ under constant data separation margin (note that this is optimal since the loss is \textit{bounded}). 
Such a lower bound is much sharper than the previous results with a $1/n^2$ explicit dependence which yields vacuous generalization bound of $O(n)$. 
This improvement relies on the condition that the label vector should lie in the region $\mathcal R$, which can be achieved by a simple label-shifting strategy:
Since $\R_+^n \subset \mathcal{R}$, the condition can be easily achieved by training the neural network on the shifted labels $y + C$ (with appropriate broadcast) where $C$ is a constant such that $ \min_i y_i + C \geq 0$. 
\end{remark} 
Careful readers may notice that in the proof of \Cref{lemma: main_text_restricted_least_eig} in \Cref{app: res_least_eig}, the restricted least eigenvalue on $\R_+^n$ is always positive even if the data separation is zero, which would imply that the network can always exhibit good generalization. 
However, we need to point out that the generalization bound in \Cref{thm: main_text_generalization} is meaningful only when the training is successful: when the data separation is zero, the limiting NTK is no longer positive definite and the training loss cannot be minimized toward zero.

\subsubsection{Key Ideas in the Proof of \texorpdfstring{\Cref{lemma: main_text_restricted_least_eig}}{Restricted Eigenvalue}}\label{sec: main_text_ideas_restricted_least_eig}
In this section, we analyze the smallest eigenvalue of the limiting NTK $H^\infty$ with $\delta$ data separation.
We first note that $H^\infty \succeq \E_{w \sim \mathcal{N}(0, I)} \left[ \mathbb{I}(Xw \geq B) \mathbb{I}(Xw \geq B)^\top \right]$ and for a fixed vector $a$, we are interested in the lower bound of $\E_{w \sim \mathcal{N}(0, I)}[|a^\top \mathbb{I}(Xw \geq B)|^2]$. 
In previous works, \cite{oymak2020toward} showed a lower bound $\Omega(\delta/ n^2)$  for zero-initialized bias, and later \cite{song2021does} generalized this result to a lower bound $\Omega(e^{-B^2/2} \delta/n^2)$ for non-zero initialized bias. 
Both lower bounds have a dependence of $1/n^2$. 
Their approach is by using an intricate Markov's inequality argument and then proving an lower bound of $\Pr[|a^\top \mathbb{I}(Xw \geq B)| \geq c \norm{a}_\infty]$. 
The lower bound is proved by only considering the contribution from the largest coordinate of $a$ and treating all other values as noise. 
It is non-surprising that the lower bound has a factor of $1/n$ since $a$ can have identical entries. 
On the other hand, the diagonal entries can give a $\exp(-B^2/2)$ upper bound and thus there is a $1/n^2$ gap between the two. 
Now, we give some evidence suggesting the $1/n^2$ dependence may not be tight in some cases. 
Consider the following scenario: Assume $n \ll d$ and the data set is orthonormal. 
For any unit-norm vector $a$, we have
\begin{align*}
    &a^\top \E_{w \sim \mathcal{N}(0, I)} \left[ \mathbb{I}(Xw \geq B) \mathbb{I}(Xw \geq B)^\top \right] a \\
    & \textstyle =  \sum_{i, j \in [n]} a_i a_j \Pr[w^\top x_i \geq B,\ w^\top x_j \geq B] \\
    &= p_0 \norm{a}_2^2 + p_1 \sum_{i \neq j} a_i a_j \\
    &\textstyle = p_0 - p_1 + p_1 \left( \sum_i a_i \right)^2 > p_0 - p_1
\end{align*}
where $p_0,p_1\in [0,1]$ are defined such that due to the spherical symmetry of the standard Gaussian we are able to let $p_0 = \Pr[w^\top x_i \geq B],\ \forall i \in [n]$ and $p_1 = \Pr[w^\top x_i \geq B, w^\top x_j \geq B],\ \forall i, j \in [n],\ i \neq j$. 
Notice that $p_0 > p_1$. 
Since this is true for all $a \in \R^n$, we get a lower bound of $p_0 - p_1$ with no explicit dependence on $n$ and this holds for all $n \leq d$. 
When $d$ is large and $n = d/2$, this bound is better than previous bound by a factor of $\Theta(1/d^2)$.
We hope to apply the above analysis to general datasets. 
However, it turns out that the product terms (with $i \neq j$) above creates major difficulties in the general case. 
Due to such technical difficulties, we prove a better lower bound by utilizing the data-dependent region $\mathcal{R}$ defined in \Cref{def: main_text_data_dependent_region}. 
Let $p_{\textnormal{min}} = \min_{i \neq j} p_{ij}$.
Now, for $a \in \mathcal{R}$, we have
\begin{align*}
    & \E_{w \sim \mathcal{N}(0, I)} \left[ (a^\top \mathbb{I}(Xw \geq B))^2 \right] \\
    &\geq (p_0 - p_{\textnormal{min}}) \norm{a}_2^2 + p_{\textnormal{min}} \norm{a}_2^2 + p_{\textnormal{min}} \sum_{i \neq j} a_i a_j \\
    &\geq (p_0 - \min_{i \neq j} p_{ij}) \norm{a}_2^2.
\end{align*}
Thus, to lower bound the smallest eigenvalue on this region, we need to get an upper bound on $\min_{i \neq j} p_{ij}$. 
To do this, let's first consider a fixed pair of training data $x_i$ and $x_j$ and their associated probability $p_{ij}$ (see \Cref{def: main_text_data_dependent_region}). 
To compute $p_{ij}$, we can decompose $x_j$ into two components: one is along the direction of $x_i$ and the other is orthogonal to $x_i$.
Now we can project the Gaussian vector onto these two directions and since the two directions are orthogonal, they are independent. 
This allows $p_{ij}$ to be computed via geometry arguments. 
It turns out that this probability is maximized when the data separation is the smallest. 
We defer the details of the proof of \Cref{lemma: main_text_restricted_least_eig} to \Cref{app: res_least_eig}. 


\section{Experiments}
In this section, we verify our result that the activation of neural networks remains sparse during training when the bias parameters are initialized as non-zero. 

\textbf{Settings.}
We train a 6-layer multi-layer perceptron (MLP) of width 1024 with trainable bias terms on MNIST image classification \citep{lecun2010mnist}. 
The biases of the fully-connected layers are initialized as $0, -0.5$ and $-1$. 
For the weights in the linear layer, we use Kaiming Initialization \citep{he2015delving} which is sampled from an appropriately scaled Gaussian distribution.
The traditional MLP architecture only has linear layers with ReLU activation. 
However, we found out that using the sparsity-inducing initialization, the magnitude of the activation will decrease geometrically layer-by-layer, which leads to vanishing gradients and that the network cannot be trained. 
Thus, we made a slight modification to the MLP architecture to include an extra Batch Normalization after ReLU to normalize the activation. 
Our MLP implementation is based on \citep{zhu2021geometric}. 
We train the neural network by stochastic gradient descent with a small learning rate 5e-3 to make sure the training is in the NTK regime. 
The sparsity is measured as the total number of activated neurons (i.e., ReLU outputs some positive values) divided by total number of neurons, averaged over every SGD batch. 
We plot how the sparsity patterns changes for different layers during training. 


\begin{figure}[ht]
  \centering
\begin{subfigure}{.33\columnwidth}
  \centering
  \includegraphics[width=.9\textwidth]{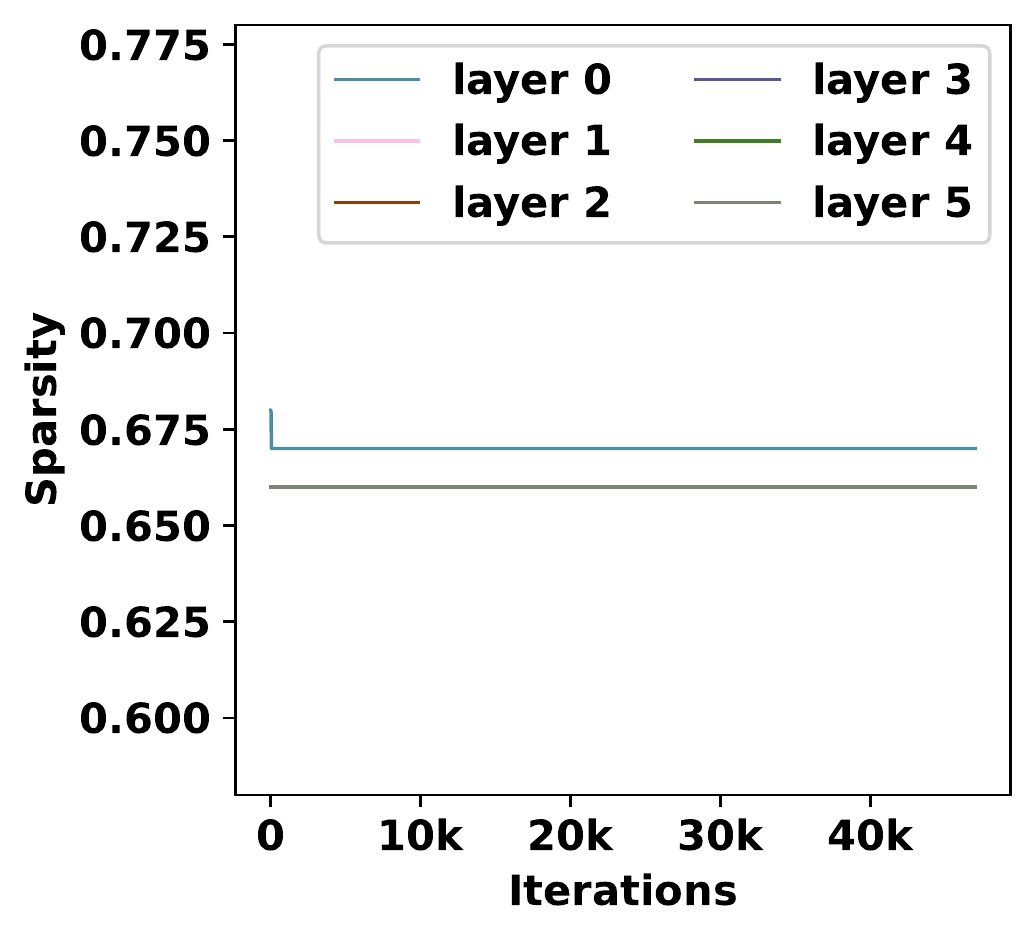}
  \caption{Init Bias as 0}
  \label{fig:bias0}
\end{subfigure}%
\begin{subfigure}{.33\textwidth}
  \centering
  \includegraphics[width=.9\textwidth]{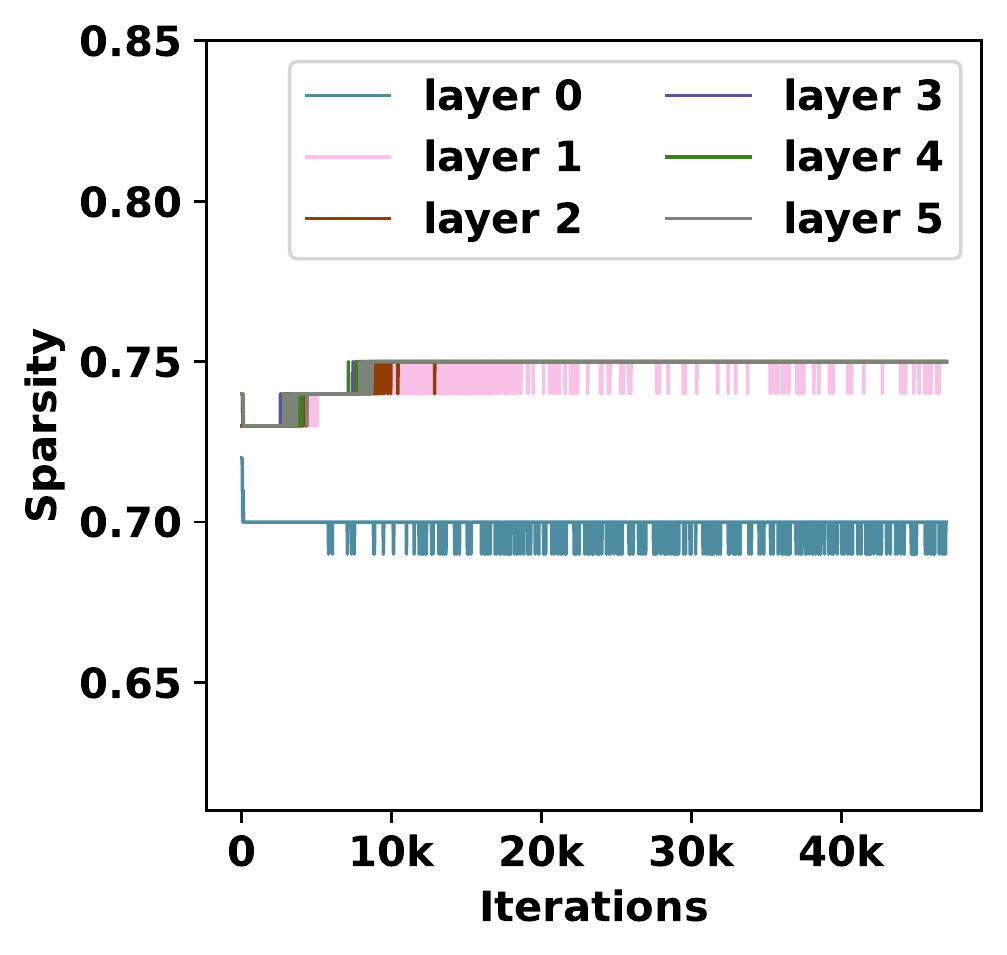}
  \caption{Init Bias as -0.5}
  \label{fig:bias-0.5}
\end{subfigure}%
\begin{subfigure}{.33\textwidth}
  \centering
  \includegraphics[width=.9\textwidth]{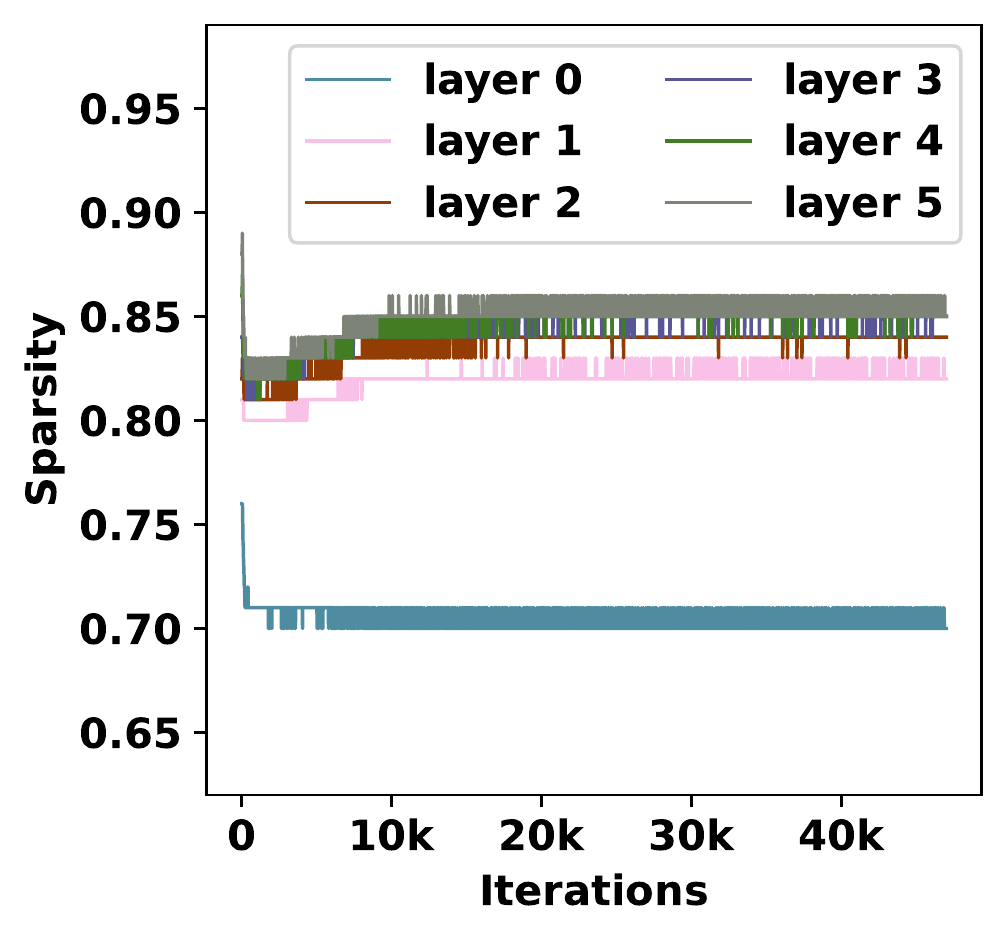}
  \caption{Init Bias as -1.0}
  \label{fig:bias-1.0}
\end{subfigure}
\caption{Sparsity pattern on different layers across different training iterations for three different bias initialization. 
The $x$ and $y$ axis denote the iteration number and sparsity level, respectively. 
The models can achieve $97.9\%, 97.7\%$ and $97.3\%$ accuracy after training, respectively. 
Note that, in Figure (a), the lines of layers 1-5 overlap together except layer 0.} 
  \label{fig:exp}
\end{figure}

\textbf{Observation and Implication.}
As demonstrated at \Cref{fig:exp}, when we initialize the bias with three different values, the sparsity patterns are stable across all layers during training: when the bias is initialized as $0$ and $-0.5$, the sparsity change is within $2.5\%$; and when the bias is initialized as $-1.0$, the sparsity change is within $10\%$. 
Meanwhile, by increasing the initialization magnitude for bias, the sparsity level increases with only marginal accuracy dropping. 
This implies that our theory can be extended to the multi-layer setting (with some extra care for coping with vanishing gradient) and multi-layer neural networks can also benefit from the sparsity-inducing initialization and enjoy reduction of computational cost. 
Another interesting observation is that the input layer (layer 0) has a different sparsity pattern from other layers while all the rest layers behave similarly.

We next provide experiment on the convergence of the network with large bias initialization. We setup a toy example with $x \in \R^5$ and $y \in \R$ where $x \sim \mathcal{N}(0, I)$ and $y = w^\top x$ for fixed $w$ with unit norm.
The network has width 128 and the bias is initialized with $0, -0.1, -0.2, -0.3, -0.4$. We plot the convergence rate in \Cref{fig:training_loss}. As we can see from the plot, all the curves have the same slope at the end of the training, which verifies our claims that network with different bias initialization will have the same convergence rate. 
\begin{figure}
    \centering
    \includegraphics[width=.5\columnwidth]{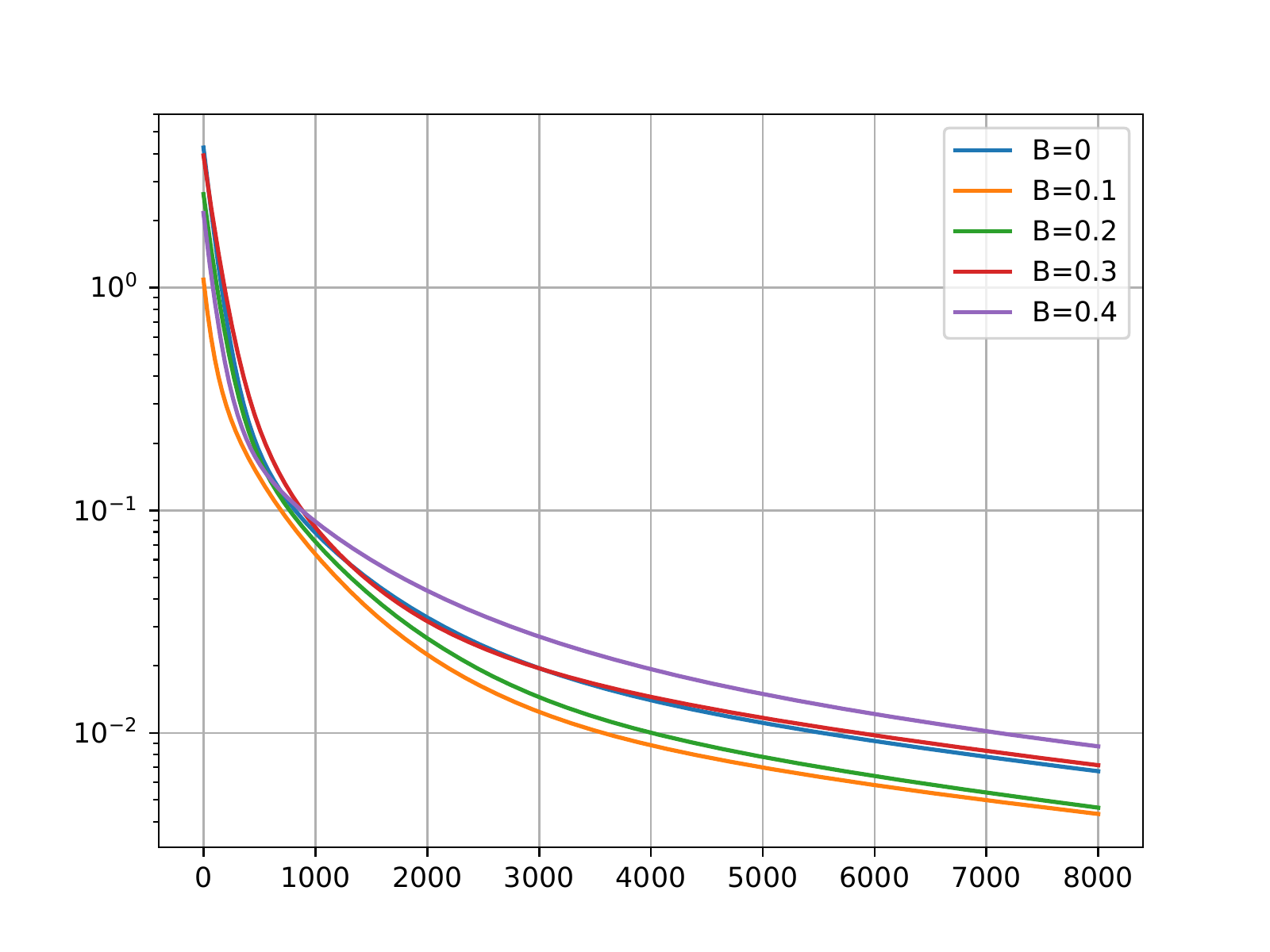}
    \caption{Training loss with different bias initialization.}
    \label{fig:training_loss}
\end{figure}

\section*{Acknowledgement}
The work of Z. Wang was in part supported by an NSF Scale-MoDL grant (award number: 2133861) and the CAREER Award (award number: 2145346).
The work of R. Zhang was supported by DOE grant No. DE-SC0024124. 
The work of Y. Liang was supported in part by the U.S. National Science Foundation under the grants CCF-1900145, ECCS-2113860, and DMS-2134145.
\stoptocwriting

%
\newpage
\bibliographystyle{alpha}
\bibliography{ref}
\newpage
\renewcommand{\contentsname}{Contents}
 \tableofcontents
 \resumetocwriting
\newpage
\appendix

\section{Convergence}\label{app: convergence}
\textbf{Notation simplification.}
Since the smallest eigenvalue of the limiting NTK appeared in this proof all has dependence on the bias initialization parameter $B$, for the ease of notation of our proof, we suppress its dependence on $B$ and use $\lambda$ to denote $\lambda := \lambda(B) = \lambda_{\textnormal{min}}(H^\infty(B))$. 
\subsection{Difference between limit NTK and sampled NTK}
\begin{lemma}\label[lemma]{lemma: diff_discrete_limit_ntk}
For a given bias vector $b \in \R^m$ with $b_r \geq 0,\ \forall r \in [m]$,
the limit NTK $H^{\infty}$ and the sampled NTK $H$ are given as
\begin{align*}
    H^\infty_{ij} &:= \E_{w \sim \mathcal{N}(0, I)} \left[ (\inprod{x_i, x_j} + 1) \mathbb{I}(w_r^\top x_i \geq b_r, w_r^\top x_j \geq b_r) \right], \\
    H_{ij} &:= \frac{1}{m} \sum_{r=1}^m (\inprod{x_i, x_j} + 1) \mathbb{I}(w_r^\top x_i \geq b_r, w_r^\top x_j \geq b_r).
\end{align*}
Let's define $\lambda := \lambda_{\textnormal{min}}(H^\infty)$ and assume $\lambda > 0$. 
If the network width $m = \Omega(\lambda^{-1} n \cdot \log(n/\delta))$, then
\begin{align*}
    \Pr\left[ \lambda_{\textnormal{min}}(H) \geq \frac{3}{4}\lambda \right] \geq 1 - \delta.
\end{align*}
\end{lemma}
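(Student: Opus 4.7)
The lemma is a standard ``empirical NTK concentrates to its expectation'' statement; what distinguishes the bound here from the classical entrywise-Hoeffding approach (which would require $m = \Omega(n^2 \log(n/\delta)/\lambda^2)$) is the improvement to a linear-in-$n$ dependence. The plan is therefore to apply matrix Chernoff for sums of independent PSD matrices rather than a Frobenius-norm union bound.

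First, decompose
\begin{align*}
H \;=\; \frac{1}{m}\sum_{r=1}^{m} H^{(r)}, \qquad H^{(r)}_{ij} \;:=\; (\langle x_i,x_j\rangle + 1)\,\mathbb{I}(w_r^\top x_i \geq b_r)\,\mathbb{I}(w_r^\top x_j \geq b_r),
\end{align*}
so that the $H^{(r)}$ are i.i.d.\ copies (over the randomness of $w_r$) with $\mathbb{E} H^{(r)} = H^\infty$. Next, exhibit each $H^{(r)}$ as a Gram matrix by setting $\tilde{x}_i = (x_i^\top, 1)^\top$ and $u_i^{(r)} = \mathbb{I}(w_r^\top x_i \geq b_r)\,\tilde{x}_i$; then $H^{(r)}_{ij} = \langle u_i^{(r)}, u_j^{(r)}\rangle$, so $H^{(r)} \succeq 0$. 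Since $\|\tilde{x}_i\|_2^2 \leq 2$, the operator-norm bound $\|H^{(r)}\|_{\mathrm{op}} \leq \sum_i \|u_i^{(r)}\|_2^2 \leq 2n$ holds deterministically.

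With these two ingredients (PSD summands with a uniform operator-norm bound $R = 2n/m$ after rescaling, and common mean $H^\infty$ with $\lambda_{\min}(H^\infty) = \lambda$), I would invoke Tropp's matrix Chernoff lower tail:
\begin{align*}
\Pr\!\left[ \lambda_{\min}(H) \leq (1-\varepsilon)\lambda \right] \;\leq\; n\,\exp\!\left(-\frac{\varepsilon^2 \lambda\, m}{4n}\right),
\end{align*}
which, at $\varepsilon = 1/4$, gives the desired probability $\leq \delta$ as soon as $m \geq C\,\lambda^{-1} n \log(n/\delta)$ for some absolute constant $C$. This is exactly the stated width requirement, and the conclusion $\lambda_{\min}(H) \geq (3/4)\lambda$ follows.

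\textbf{Main obstacle.} The only delicate step is the PSD decomposition: to make matrix Chernoff applicable (as opposed to the weaker matrix Bernstein, which would carry an unnecessary variance factor), one must exploit the Hadamard-product structure of $H^{(r)}$ to write it as an honest Gram matrix $(U^{(r)})^\top U^{(r)}$. Once that is done, the operator-norm bound $\|H^{(r)}\|_{\mathrm{op}} \leq 2n$ is an easy consequence of $\|\tilde{x}_i\|_2^2 \leq 2$, and the rest is a direct plug-in to the matrix Chernoff inequality. No dependence on the bias vector $b$ enters the bound beyond through $\lambda(B)$ itself, which is consistent with the statement.
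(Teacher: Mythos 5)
Your proposal is correct and follows essentially the same route as the paper's proof: both decompose $H$ as an average of i.i.d.\ PSD Gram matrices $H_r = \frac{1}{m}\widetilde X(w_r)^\top \widetilde X(w_r)$, establish the uniform operator-norm bound $\|H_r\|_{\mathrm{op}} \leq 2n/m$ (you via the trace, the paper via the Frobenius norm --- the same numerical bound), and apply the matrix Chernoff lower tail at accuracy $3/4$ to land on $m = \Omega(\lambda^{-1} n \log(n/\delta))$. The two arguments are identical in substance.
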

\begin{proof}
Let $H_r:= \frac{1}{m} \widetilde{X}(w_r)^\top \widetilde{X}(w_r)$, where $\widetilde{X}(w_r) \in \R^{(d+1) \times n}$ is defined as
\begin{align*}
    \widetilde{X}(w_r) := [\mathbb{I}(w_r^\top x_1 \geq b) \cdot (x_1,1), \ldots, \mathbb{I}(w_r^\top x_n \geq b) \cdot (x_n,1)],
\end{align*}
where $(x_i, 1)$ denotes appending the vector $x_i$ by $1$.
Hence $H_r \succeq 0$. 
Since for each entry $H_{ij}$ we have
\begin{align*}
    (H_r)_{ij} = \frac{1}{m} (\inprod{x_i, x_j} + 1) \mathbb{I}(w_r^\top x_i \geq b_r, w_r^\top x_j \geq b_r) \leq \frac{1}{m} (\inprod{x_i, x_j} + 1) \leq \frac{2}{m},
\end{align*}
and naively, we can upper bound $\norm{H_r}_2$ by:
\begin{align*}
    \norm{H_r}_2 \leq \norm{H_r}_F \leq \sqrt{n^2 \frac{4}{m^2}} = \frac{2n}{m}.
\end{align*}
Then $H = \sum_{r=1}^m H_r$ and $\E[H] = H^\infty$. 
Hence, by the Matrix Chernoff Bound in \Cref{lemma: matrix_chernoff} and choosing $m = \Omega(\lambda^{-1} n \cdot \log(n/\delta))$, we can show that
\begin{align*}
    \Pr\left[ \lambda_{\textnormal{min}}(H) \leq \frac{3}{4}\lambda \right] &\leq n \cdot \exp\left( - \frac{1}{16} \lambda/(4n/m) \right) \\
    &= n \cdot \exp\left( - \frac{\lambda m}{64 n} \right) \\
    &\leq \delta.
\end{align*}
\end{proof}

\begin{lemma}\label{lemma: fro_diff_discrete_limit_ntk}
Assume $m = n^{O(1)}$ and $\exp(B^2/2) = O(\sqrt{m})$ where we recall that $B$ is the initialization value of the biases. 
With probability at least $1 - \delta$, we have $\norm{H(0) - H^\infty}_F \leq 4n\exp(-B^2/4) \sqrt{\frac{\log (n^2/\delta)}{m}}$.
\end{lemma}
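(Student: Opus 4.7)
My plan is to obtain an entrywise concentration bound on $H(0)_{ij}-H^{\infty}_{ij}$ via Bernstein's inequality, exploiting the fact that at initialization each neuron activates on $x_i$ with probability at most $\tfrac12 e^{-B^2/2}$, and then union-bound over the $n^2$ entries before converting to Frobenius norm.

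\textbf{Step 1: Write each entry as an empirical mean.} For fixed indices $i,j\in[n]$, set
\[
Y_r \;:=\; (\langle x_i,x_j\rangle + 1)\,\mathbb{I}(w_r^{\top}x_i\geq B)\,\mathbb{I}(w_r^{\top}x_j\geq B),\qquad r\in[m],
\]
so that the $Y_r$ are i.i.d.\ with $Y_r\in[0,2]$ (since $\norm{x_i}_2\leq 1$) and $H(0)_{ij}-H^{\infty}_{ij}=\tfrac1m\sum_{r=1}^m(Y_r-\E Y_r)$.

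\textbf{Step 2: Variance bound from the Gaussian tail.} Since $w_r\sim\mathcal{N}(0,I_d)$ and $\norm{x_i}_2\le 1$, the scalar $w_r^{\top}x_i$ is $\mathcal{N}(0,\norm{x_i}^2)$ and the standard Gaussian tail gives $\Pr[w_r^{\top}x_i\geq B]\leq \tfrac12 e^{-B^2/2}$. Therefore
\[
\Var(Y_r)\;\leq\;\E[Y_r^2]\;\leq\; 4\,\Pr[w_r^{\top}x_i\geq B]\;\leq\; 2e^{-B^2/2}.
\]
This is the key quantitative input that drops the naive $O(1)$ variance to $O(e^{-B^2/2})$ and is responsible for the $e^{-B^2/4}$ factor in the final bound.

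\textbf{Step 3: Bernstein + union bound.} Bernstein's inequality applied to $S=\sum_r(Y_r-\E Y_r)$ with $|Y_r-\E Y_r|\le 2$ and $\sum_r\Var(Y_r)\le 2me^{-B^2/2}$ yields
\[
\Pr\!\left[\,|H(0)_{ij}-H^{\infty}_{ij}|\geq t\,\right] \;\leq\; 2\exp\!\left(-\frac{m t^2/2}{\,2e^{-B^2/2} + 2t/3\,}\right).
\]
Choose $t \;=\; 4e^{-B^2/4}\sqrt{\log(n^2/\delta)/m}$. The assumptions $m=n^{O(1)}$ and $e^{B^2/2}=O(\sqrt m)$ ensure $t\lesssim e^{-B^2/2}$, so the Bernstein denominator is dominated by the variance term and the exponent is $\gtrsim \log(n^2/\delta)$, giving failure probability $\leq \delta/n^2$ per entry. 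Union-bounding over the $n^2$ pairs $(i,j)$ ensures that every entry of $H(0)-H^{\infty}$ has magnitude at most $t$ with probability at least $1-\delta$. Finally,
\[
\norm{H(0)-H^{\infty}}_F \;\leq\; \sqrt{\sum_{i,j} t^2}\;=\;n\,t\;=\;4n\,e^{-B^2/4}\sqrt{\tfrac{\log(n^2/\delta)}{m}},
\]
which is the claimed bound.

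\textbf{Expected difficulty.} The proof is essentially a one-page concentration argument; there is no serious technical obstacle. The only subtle point is verifying that the chosen $t$ lies in the sub-Gaussian regime of Bernstein's inequality, which is exactly what the two standing assumptions ($m=n^{O(1)}$ and $e^{B^2/2}=O(\sqrt m)$) are there to guarantee. Using Hoeffding instead would suffice for concentration but would lose the $e^{-B^2/4}$ sparsity gain, so it is essential to invoke the variance-sensitive Bernstein bound together with the tail estimate $\Pr[w^\top x_i\ge B]\le \tfrac12 e^{-B^2/2}$.
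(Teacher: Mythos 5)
Your proposal is correct and follows the same route as the paper: an entrywise Bernstein bound that exploits the $O(e^{-B^2/2})$ variance coming from the small activation probability, followed by a union bound over the $n^2$ entries and the trivial Frobenius-norm estimate $\|\cdot\|_F\le n\max_{i,j}|\cdot|$. The paper likewise uses the standing assumptions $m=n^{O(1)}$ and $e^{B^2/2}=O(\sqrt m)$ only to absorb the linear (range) term of Bernstein into the variance term, exactly as you do.
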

\begin{proof}
First, we have $\E[((\inprod{x_i, x_j} + 1) \mathbb{I}_{r,i}(0) \mathbb{I}_{r,j}(0))^2] \leq 4\exp(-B^2/2)$. 
Then, by Bernstein's inequality in \Cref{lemma: bernstein}, with probability at least $1 - \delta/n^2$, 
\begin{align*}
    |H_{ij}(0) - H^\infty_{ij}| \leq 2\exp(-B^2/4) \sqrt{2\frac{\log (n^2/\delta)}{m}} + 2 \frac{2}{m} \log (n^2/\delta) \leq 4\exp(-B^2/4) \sqrt{\frac{\log (n^2/\delta)}{m}}.
\end{align*}
By a union bound, the above holds for all $i, j \in [n]$ with probability at least $1 - \delta$, which implies
\begin{align*}
    \norm{H(0) - H^\infty}_F \leq 4n\exp(-B^2/4) \sqrt{\frac{\log (n^2/\delta)}{m}}.
\end{align*}
\end{proof}

\subsection{Bounding the number of flipped neurons}
\begin{definition}[No-flipping set]\label{def: noflipping_set}
For each $i \in [n]$, let $S_i \subset [m]$ denote the set of neurons that are never flipped during the entire training process,
\begin{align*}
    S_i := \{r \in [m]:\ \forall t \in [T]\ \sign(\inprod{w_r(t), x_i} - b_r(t)) = \sign(\inprod{w_r(0), x_i} - b_r(0))\}.
\end{align*}
Thus, the flipping set is $\overline{S}_i$ for $i \in [n]$. 
\end{definition}

\begin{lemma}[Bound on flipping probability]\label{lemma: bound_flipping}
Let $B \geq 0$ and $R_w, R_b \leq \min\{1/B,1\}$. 
Let $\tilde{W} = (\widetilde{w}_1, \ldots, \tilde{w}_m)$ be vectors generated i.i.d. from $\mathcal{N}(0, I)$ and $\tilde{b} = (\tilde{b}_1, \ldots, \tilde{b}_m)= (B, \ldots, B)$, 
and weights $ W = (w_1, \ldots, w_m)$ and biases $ b = (b_1, \ldots, b_m) $ that satisfy for any $r \in [m]$, $\norm{\tilde{w}_r - w_r}_2 \leq R_w$ and $|\tilde{b}_r - b_r| \leq R_b$.
Define the event
\begin{align*}
    A_{i,r} = \{\exists w_r, b_r: \norm{\tilde{w}_r - w_r}_2 \leq R_w,\ |b_r - \tilde{b}_r| \leq R_b,\ \mathbb{I}(x_i^\top \tilde{w}_r \geq \tilde{b}_r) \neq \mathbb{I}(x_i^\top w_r \geq b_r)\}.
\end{align*}
Then,
\begin{align*}
    \Pr\left[ A_{i,r} \right] \leq c (R_w + R_b) \exp(-B^2/2)
\end{align*}
for some constant $c$. 
\end{lemma}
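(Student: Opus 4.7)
The plan is to reduce the activation-flipping event to a one-dimensional Gaussian anti-concentration question for a strip centred at $B$, and then establish the needed location-aware bound by a case split. First I would observe that a flip can occur only when $\tilde w_r^\top x_i$ lies within $R_w\norm{x_i}_2 + R_b \leq R_w + R_b$ of the threshold $\tilde b_r = B$: if $\tilde w_r^\top x_i - B > R_w + R_b$, then for every admissible $(w_r,b_r)$ the quantity $w_r^\top x_i - b_r$ remains strictly positive, and symmetrically below $-(R_w+R_b)$ every admissible pair stays inactive. Using $\norm{x_i}_2 \leq 1$, the variable $\tilde w_r^\top x_i$ is a centred Gaussian with variance at most $1$, so it suffices to bound
\[
\Pr[A_{i,r}] \;\leq\; \Pr\bigl[\,|\tilde w_r^\top x_i - B|\leq R_w + R_b\,\bigr]
\]
by the corresponding anti-concentration probability for a (unit-variance) Gaussian.

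Next I would split on whether the strip $[B-(R_w+R_b),\,B+(R_w+R_b)]$ crosses the origin. If $B \geq R_w + R_b$, every point in the strip satisfies $|z| \geq B - (R_w + R_b)$, so the Gaussian density there is at most $\tfrac{1}{\sqrt{2\pi}}\exp(-(B - R_w - R_b)^2/2)$. Multiplying by the strip length $2(R_w + R_b)$ and expanding $(B - R_w - R_b)^2 = B^2 - 2B(R_w + R_b) + (R_w + R_b)^2$ produces a factor $\exp(B(R_w + R_b))$ on top of $\exp(-B^2/2)$; the hypothesis $R_w, R_b \leq 1/B$ forces this cross term to be at most $e^2$, yielding a bound of the desired form $c(R_w + R_b)\exp(-B^2/2)$. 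If instead $B < R_w + R_b$, then $R_w, R_b \leq 1$ forces $B < 2$, hence $\exp(-B^2/2) \geq e^{-2}$; the trivial density bound $\tfrac{1}{\sqrt{2\pi}}$ on the strip gives $\tfrac{2(R_w + R_b)}{\sqrt{2\pi}} = \tfrac{2}{\sqrt{2\pi}}\,e^{B^2/2}\cdot (R_w + R_b)\,e^{-B^2/2} \leq c(R_w + R_b)\exp(-B^2/2)$, so both cases collapse to a single expression with a universal constant.

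The essential new observation, already flagged in the remark preceding the lemma, is that the product form $(R_w+R_b)\exp(-B^2/2)$ is the correct interpolation between Gaussian tail decay (which dominates when $B$ is large) and standard anti-concentration (which dominates when $B$ is small), rather than taking the minimum of the two as in prior work. The main technical subtlety is controlling the cross term $\exp(B(R_w + R_b))$ that appears when expanding $(B - R_w - R_b)^2$; the hypothesis $R_w, R_b \leq \min\{1/B,\,1\}$ is precisely what is needed to keep this factor an absolute constant, and it is what allows the two case bounds to be glued together with a single universal constant $c$. Everything else is elementary bookkeeping.
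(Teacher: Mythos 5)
Your proof is correct and follows essentially the same route as the paper: reduce the flipping event to the anti-concentration probability $\Pr[|\tilde w_r^\top x_i - B| \le R_w + R_b]$, bound the Gaussian density on the strip by its value at the endpoint nearest the origin, and absorb the resulting cross term $\exp(B(R_w+R_b)) \le e^2$ into the constant via the hypothesis $R_w, R_b \le 1/B$. Your case split at $B \gtrless R_w + R_b$ is marginally cleaner than the paper's split at $B \gtrless 1$ (where for $1 < B < \sqrt{2}$ one may still have $R_w + R_b > B$, a corner case the paper handles only implicitly through the $\max\{\cdot,0\}$ in its quoted Gaussian range bound), but the underlying idea and constants are the same.
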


\begin{proof}
Notice that the event $A_{i,r}$ happens if and only if $|\tilde{w}_r^\top x_i - \tilde{b}_r| < R_w + R_b$. 
First, if $B > 1$, then by \Cref{lemma: gaussian_range_bound}, we have 
\begin{align*}
    \Pr\left[ A_{i,r} \right] \leq (R_w + R_b) \frac{1}{\sqrt{2\pi}} \exp(-(B - R_w - R_b)^2/2) \leq c_1 (R_w + R_b) \exp(-B^2/2)
\end{align*}
for some constant $c_1$. 
If $0 \leq B < 1$, then the above analysis doesn't hold since it is possible that $B - R_w - R_b \leq 0 $. 
In this case, the probability is at most $\Pr[A_{i,r}] \leq 2(R_w + R_b) \frac{1}{\sqrt{2\pi}}\exp(-0^2/2) = \frac{2(R_w + R_b)}{\sqrt{2\pi}}$. 
However, since $0 \leq B < 1$ in this case, we have $\exp(-1^2/2) \leq \exp(-B^2/2) \leq \exp(-0^2/2)$. 
Therefore, $\Pr[A_{i,r}] \leq c_2 (R_w + R_b) \exp(-B^2/2)$ for $c_2 = \frac{2\exp(1/2)}{\sqrt{2\pi}}$. 
Take $c = \max\{c_1, c_2\}$ finishes the proof.
\end{proof}

\begin{corollary}\label{corollary: num_flipped_neurons}
Let $B > 0$ and $R_w, R_b \leq \min\{1/B, 1\}$. 
Assume that $\norm{w_r(t) - w_r(0)}_2 \leq R_w$ and $|b_r(t) - b_r(0)| \leq R_b$ for all $t \in [T]$. 
For $i \in [n]$, the flipping set $\overline{S}_i$ satisfies that
\begin{align*}
    \Pr[r \in \overline{S}_i] \leq c(R_w + R_b)\exp(-B^2/2)
\end{align*}
for some constant $c$, which implies
\begin{align*}
    &\Pr[\forall i \in [n]:\ |\overline{S}_i| \leq 2 m c(R_w + R_b)\exp(-B^2/2)] \\
    &\quad \geq 1 - n \cdot \exp\left(-\frac{2}{3} m c(R_w + R_b)\exp(-B^2/2) \right).
\end{align*}
\end{corollary}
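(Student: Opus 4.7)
\textbf{Proof plan for Corollary \ref{corollary: num_flipped_neurons}.}

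The plan is to reduce the statement to a concentration bound on a sum of independent Bernoulli indicators, using Lemma \ref{lemma: bound_flipping} to bound each marginal probability, and then to union-bound over $i \in [n]$.

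The first step is to establish the inclusion $\{r \in \overline{S}_i\} \subseteq A_{i,r}$ for the event $A_{i,r}$ from Lemma \ref{lemma: bound_flipping}, instantiated with $\tilde{w}_r = w_r(0)$ and $\tilde{b}_r = b_r(0) = B$. Concretely, if $r \in \overline{S}_i$, then there exists some iteration $t$ at which $\sign(w_r(t)^\top x_i - b_r(t)) \neq \sign(w_r(0)^\top x_i - b_r(0))$. The deterministic hypothesis $\|w_r(t) - w_r(0)\|_2 \leq R_w$ and $|b_r(t) - b_r(0)| \leq R_b$ then supplies a concrete witness $(w,b) = (w_r(t), b_r(t))$ for the existential in the definition of $A_{i,r}$. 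Applying Lemma \ref{lemma: bound_flipping} yields the marginal bound $\Pr[r \in \overline{S}_i] \leq c(R_w + R_b)\exp(-B^2/2)$, which is the first claim.

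For the second claim, I would fix $i \in [n]$ and set $X_r := \mathbb{I}[A_{i,r}]$ for $r \in [m]$. The indicators $X_r$ depend only on the random initialization of the single neuron $r$ (namely $w_r(0)$; the bias $b_r(0) = B$ is deterministic), and since the initializations across neurons are i.i.d., the $X_r$ are independent Bernoullis, each with success probability at most $p := c(R_w + R_b)\exp(-B^2/2)$. Writing $\mu := mp$, a standard multiplicative Chernoff bound (in the form $\Pr[\sum_r X_r \geq 2\mu'] \leq \exp(-\Theta(\mu'))$ for any $\mu' \geq \mu$) gives
\[
\Pr\!\left[\sum_{r=1}^m X_r \geq 2\mu\right] \leq \exp\!\left(-\tfrac{2}{3}\mu\right).
\]
Since $|\overline{S}_i| \leq \sum_{r=1}^m X_r$ by the inclusion above, the same upper tail bound applies to $|\overline{S}_i|$. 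A union bound over the $n$ choices of $i$ then yields the stated $1 - n\exp(-\tfrac{2}{3}mc(R_w + R_b)\exp(-B^2/2))$ probability.

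I do not expect a serious obstacle here: Lemma \ref{lemma: bound_flipping} already does the analytic heavy lifting, and independence across neurons is immediate from the initialization. The one point to handle with care is the passage from the trajectory-dependent event $\{r \in \overline{S}_i\}$ to the purely initialization-measurable event $A_{i,r}$; this is what makes the Chernoff step legitimate, since otherwise the indicators could be coupled through the shared training dynamics. The exact constant $2/3$ in the exponent depends on which variant of Chernoff is invoked, but any of the standard forms gives a bound of the same shape.
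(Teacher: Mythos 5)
Your proof is correct and follows essentially the same route as the paper's: observe $\{r \in \overline{S}_i\} \subseteq A_{i,r}$, apply the marginal bound from the activation-flipping lemma, use a concentration inequality for a sum of independent Bernoullis, and union-bound over $i$. The only cosmetic difference is that the paper invokes its Bernstein inequality (Lemma on Bernstein in the appendix) while you invoke multiplicative Chernoff; both apply to the same sum of independent bounded indicators and yield a bound of the same form, differing only in the explicit constant in the exponent — a point you already flag. Your explicit remark that one must pass from the trajectory-dependent event $\{r \in \overline{S}_i\}$ to the purely initialization-measurable event $A_{i,r}$ before invoking independence across neurons is a good observation; the paper leaves this step implicit.
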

\begin{proof}
The proof is by observing that $\Pr[r \in \overline{S}_i] \leq \Pr[A_{i,r}]$. Then, by Bernstein's inequality, 
\begin{align*}
    \Pr[|\overline{S}_i| > t] \leq \exp\left( - \frac{t^2/2}{m c(R_w + R_b)\exp(-B^2/2) + t/3} \right).
\end{align*}
Take $t = 2 m c(R_w + R_b)\exp(-B^2/2)$ and a union bound over $[n]$, we have
\begin{align*}
    & \Pr[\forall i \in [n]:\ |\overline{S}_i| \leq 2 m c(R_w + R_b)\exp(-B^2/2)] \\
    &\quad \geq 1 - n \cdot \exp\left(-\frac{2}{3} m c(R_w + R_b)\exp(-B^2/2) \right).
\end{align*}
\end{proof}

\subsection{Bounding NTK if perturbing weights and biases}
\begin{lemma}\label[lemma]{lemma: perturbed_ntk}
Assume $\lambda > 0$. 
Let $B>0$ and $R_b, R_w \leq \min\{1/B, 1\}$. 
Let $\tilde{W} = (\widetilde{w}_1, \ldots, \tilde{w}_m)$ be vectors generated i.i.d. from $\mathcal{N}(0, I)$ and $\tilde{b} = (\tilde{b}_1, \ldots, \tilde{b}_m)= (B, \ldots, B)$. 
For any set of weights $ W = (w_1, \ldots, w_m)$ and biases $ b = (b_1, \ldots, b_m) $ that satisfy for any $r \in [m]$, $\norm{\tilde{w}_r - w_r}_2 \leq R_w$ and $|\tilde{b}_r - b_r| \leq R_b$, we define the matrix $H(W,b) \in \R^{n \times n}$ by
\begin{align*}
    H_{ij}(W,b) = \frac{1}{m} \sum_{r=1}^m (\inprod{x_i, x_j} + 1) \mathbb{I}(w_r^\top x_i \geq b_r, w_r^\top x_j \geq b_r).
\end{align*}
It satisfies that for some small positive constant $c$,
\begin{enumerate}
    \item With probability at least $1 - n^2 \exp\left( -\frac{2}{3}c m (R_w + R_b) \exp(-B^2/2) \right)$, we have 
    \begin{align*}
    \norm{H(\tilde{W}, \tilde{b}) - H(W,b)}_F &\leq n \cdot 8c (R_w + R_b) \exp(-B^2/2), \\
    \norm{Z(\tilde{W}, \tilde{b}) - Z(W,b)}_F &\leq \sqrt{n\cdot 8c (R_w + R_b) \exp(-B^2/2)}. 
\end{align*}
    
    \item  With probability at least $1 - \delta - n^2 \exp\left( -\frac{2}{3}c m (R_w + R_b) \exp(-B^2/2) \right)$, 
    \begin{align*}
        \lambda_{\textnormal{min}}(H(W,b)) > 0.75 \lambda - n \cdot 8c (R_w + R_b) \exp(-B^2/2).
    \end{align*}
\end{enumerate}
\end{lemma}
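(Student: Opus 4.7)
The plan is to reduce both claims to the bound on the number of flipped neurons from Corollary \ref{corollary: num_flipped_neurons}, and then get the eigenvalue estimate by combining the resulting perturbation bound with Lemma \ref{lemma: diff_discrete_limit_ntk} via Weyl's inequality.

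For the Frobenius-norm perturbation bound, I would expand the entrywise difference as
\begin{align*}
H_{ij}(\tilde W, \tilde b) - H_{ij}(W, b) = \frac{1}{m}\sum_{r=1}^m (\inprod{x_i, x_j} + 1)\Bigl(\mathbb{I}(\tilde w_r^\top x_i \geq \tilde b_r)\mathbb{I}(\tilde w_r^\top x_j \geq \tilde b_r) - \mathbb{I}(w_r^\top x_i \geq b_r)\mathbb{I}(w_r^\top x_j \geq b_r)\Bigr).
\end{align*}
Each summand is nonzero only when at least one of the activation patterns at $x_i$ or $x_j$ flips between $(\tilde W, \tilde b)$ and $(W, b)$, i.e., when $r \in \overline{S}_i \cup \overline{S}_j$ (with the flipping sets defined relative to the perturbation radii $R_w, R_b$), and each summand is bounded in magnitude by $2/m$ using $|\inprod{x_i, x_j} + 1| \leq 2$. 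Applying Corollary \ref{corollary: num_flipped_neurons} and taking a union bound over all $i \in [n]$, with probability at least $1 - n\exp(-\tfrac{2}{3}cm(R_w+R_b)e^{-B^2/2})$ every flipping set satisfies $|\overline{S}_i| \leq 2mc(R_w+R_b)e^{-B^2/2}$. Hence each entry of $H(\tilde W, \tilde b) - H(W,b)$ is at most $8c(R_w+R_b)e^{-B^2/2}$ in absolute value, and the Frobenius norm is at most $n \cdot 8c(R_w+R_b)e^{-B^2/2}$. (The coarser $n^2 \exp(\cdot)$ failure probability in the statement absorbs the factor $n$ trivially.)

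For $\|Z(\tilde W, \tilde b) - Z(W, b)\|_F$, I would observe that the $(r,i)$ block of this difference is $\tfrac{1}{\sqrt m} a_r \tilde x_i (\mathbb{I}(\tilde w_r^\top x_i \geq \tilde b_r) - \mathbb{I}(w_r^\top x_i \geq b_r))$, whose squared Frobenius contribution is at most $\tfrac{\|\tilde x_i\|_2^2}{m} \cdot \mathbb{I}(r \in \overline{S}_i) \leq \tfrac{2}{m}\,\mathbb{I}(r \in \overline{S}_i)$. Summing over $r \in [m]$ and $i \in [n]$ and using the same high-probability bound on $|\overline{S}_i|$ gives
\begin{align*}
\|Z(\tilde W, \tilde b) - Z(W, b)\|_F^2 \leq \frac{2}{m}\sum_{i=1}^n |\overline{S}_i| \leq n \cdot 8c(R_w+R_b)e^{-B^2/2},
\end{align*}
matching the stated square-root bound. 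For the eigenvalue claim, Weyl's inequality yields $\lambda_{\min}(H(W,b)) \geq \lambda_{\min}(H(\tilde W, \tilde b)) - \|H(\tilde W, \tilde b) - H(W,b)\|_2$; bounding the operator norm by the Frobenius norm and invoking Lemma \ref{lemma: diff_discrete_limit_ntk} to get $\lambda_{\min}(H(\tilde W, \tilde b)) \geq \tfrac{3}{4}\lambda$ (w.p.\ at least $1 - \delta$) finishes the proof after a final union bound over the two failure events.

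There is no real obstacle here: the core probabilistic work already lives inside Lemma \ref{lemma: main_text_bound_flipping} and its corollary. The only point that requires care is the entrywise accounting --- noting that the flip of either $x_i$'s or $x_j$'s activation can change $H_{ij}$, so the relevant set is $\overline{S}_i \cup \overline{S}_j$ rather than one of them alone --- and reconciling the $n \cdot \exp(\cdot)$ failure probability coming from the union bound with the $n^2 \cdot \exp(\cdot)$ written in the statement (absorbed by monotonicity).
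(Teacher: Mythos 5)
Your proposal is correct and follows essentially the same route as the paper: reduce entrywise perturbation of $H$ and $Z$ to the number of flipped activations, control that via the flipping-probability lemma plus Bernstein, union-bound, and finish part 2 with Weyl's inequality together with Lemma~\ref{lemma: diff_discrete_limit_ntk}. The only difference is bookkeeping: the paper applies Bernstein directly to $s_{i,j} = \sum_r(\mathbb{I}(A_{i,r}) + \mathbb{I}(A_{j,r}))$ and union-bounds over all $n^2$ pairs, while you invoke Corollary~\ref{corollary: num_flipped_neurons} to bound each $|\overline{S}_i|$ and union-bound over the $n$ indices, then use $|\overline{S}_i\cup\overline{S}_j|\le|\overline{S}_i|+|\overline{S}_j|$ — yielding the marginally better $1 - n\exp(\cdot)$ failure probability that is absorbed by the stated $1 - n^2\exp(\cdot)$, as you note.
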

\begin{proof}
We have
\begin{align*}
    \norm{Z(W,b) - Z(\tilde{W}, \tilde{b})}_F^2 &= \sum_{i \in [n]} \left( \frac{2}{m} \sum_{r \in [m]} \left( \mathbb{I}(w_r^\top x_i \geq b_r) - \mathbb{I}(\tilde{w}_r^\top x_i \geq \tilde{b}_r) \right)^2 \right) \\
    &= \sum_{i \in [n]} \left( \frac{2}{m} \sum_{r \in [m]} t_{r,i} \right)
\end{align*}
and
\begin{align*}
    & \norm{H(W,b) - H(\tilde{W}, \tilde{b})}_F^2 \\
    &= \sum_{i \in [n],\ j \in [n]} (H_{ij}(W,b) - H_{ij}(\tilde{W}, \tilde{b}))^2 \\
    &\leq \frac{4}{m^2} \sum_{i \in [n],\ j \in [n]} \left( \sum_{r \in [m]} | \mathbb{I}(w_r^\top x_i \geq b_r, w_r^\top x_j \geq b_r) - \mathbb{I}(\tilde{w}_r^\top x_i \geq \tilde{b}_r, \tilde{w}_r^\top x_j \geq \tilde{b}_r) | \right)^2 \\
    &= \frac{4}{m^2} \sum_{i,j \in [n]} \left( \sum_{r \in [m]} s_{r,i,j} \right)^2,
\end{align*}
where we define 
\begin{align*}
    s_{r,i,j} &:= |\mathbb{I}(w_r^\top x_i \geq b_r, w_r^\top x_j \geq b_r) - \mathbb{I}(\tilde{w}_r^\top x_i \geq \tilde{b}_r, \tilde{w}_r^\top x_j \geq \tilde{b}_r)| ,\\
    t_{r,i} &:= (\mathbb{I}(w_r^\top x_i \geq b_r) - \mathbb{I}(\tilde{w}_r^\top x_i \geq \tilde{b}_r))^2.
\end{align*}
Notice that $t_{r,i} = 1$ only if the event $A_{i,r}$ happens (recall the definition of $A_{i,r}$ in \Cref{lemma: bound_flipping}) and $s_{r,i,j} = 1$ only if the event $A_{i,r}$ or $A_{j,r}$ happens. 
Thus, 
\begin{align*}
    \sum_{r \in [m]} t_{r,i} \leq \sum_{r \in [m]} \mathbb{I}(A_{i,r}), \quad \sum_{r \in [m]} s_{r,i,j} \leq \sum_{r \in [m]} \mathbb{I}(A_{i,r}) + \mathbb{I}(A_{j,r}).
\end{align*}
By \Cref{lemma: bound_flipping}, we have
\begin{align*}
    \E_{\tilde{w}_r}[s_{r,i,j}] \leq \E_{\tilde{w}_r}[s_{r,i,j}^2] \leq \Pr_{\tilde{w}_r}[A_{i,r}] + \Pr_{\tilde{w}_r}[A_{j,r}] \leq 2c (R_w + R_b) \exp(-B^2/2).
\end{align*}
Define $s_{i,j} = \sum_{r=1}^m \mathbb{I}(A_{i,r}) + \mathbb{I}(A_{j,r})$. 
By Bernstein's inequality in \Cref{lemma: bernstein}, 
\begin{align*}
    & \Pr\left[ s_{i,j} \geq m \cdot 2c (R_w + R_b) \exp(-B^2/2) + mt \right] \\
    &\leq \exp \left( - \frac{m^2 t^2/2}{m \cdot 2c (R_w + R_b) \exp(-B^2/2) + mt/3} \right), \quad \forall t \geq 0.
\end{align*}
Let $t = 2c(R_w + R_b) \exp(-B^2/2)$. We get
\begin{align*}
    \Pr[s_{i,j} \geq m \cdot 4c(R_w + R_b) \exp(-B^2/2)] \leq \exp\left( -\frac{2}{3}c m (R_w + R_b) \exp(-B^2/2) \right).
\end{align*}
Thus, we obtain with probability at least $1 - n^2 \exp\left( -\frac{2}{3}c m (R_w + R_b) \exp(-B^2/2) \right)$,
\begin{align*}
    \norm{H(\tilde{W}, \tilde{b}) - H(W,b)}_F &\leq n \cdot 8c (R_w + R_b) \exp(-B^2/2) ,\\
    \norm{Z(\tilde{W}, \tilde{b}) - Z(W,b)}_F &\leq \sqrt{n\cdot 8c (R_w + R_b) \exp(-B^2/2)} .
\end{align*}
For the second result, by \Cref{lemma: diff_discrete_limit_ntk}, $\Pr[\lambda_{\textnormal{min}}(H(\tilde{W}, \tilde{b})) \geq 0.75 \lambda] \geq 1 - \delta$. Hence, with probability at least $1 - \delta - n^2 \exp\left( -\frac{2}{3}c m (R_w + R_b) \exp(-B^2/2) \right)$,
\begin{align*}
    \lambda_{\textnormal{min}}(H(W, b)) &\geq \lambda_{\textnormal{min}}(H(\tilde{W}, \tilde{b})) - \norm{H(W, b) - H(\tilde{W}, \tilde{b})} \\
    &\geq \lambda_{\textnormal{min}}(H(\tilde{W}, \tilde{b})) - \norm{H(W, b) - H(\tilde{W}, \tilde{b})}_F \\
    &\geq 0.75 \lambda - n \cdot 8c (R_w + R_b) \exp(-B^2/2).
\end{align*}
\end{proof}

\subsection{Total movement of weights and biases}
\begin{definition}[NTK at time $t$]
For $t \geq 0$, let $H(t)$ be an $n \times n$ matrix with $(i,j)$-th entry
\begin{align*}
    H_{ij}(t) &:= \inprod{\frac{\partial f(x_i; \theta(t))}{\partial \theta(t)}, \frac{\partial f(x_j; \theta(t))}{\partial \theta(t)}} \\
    &= \frac{1}{m} \sum_{r=1}^m (\inprod{x_i, x_j} + 1) \mathbb{I}(w_r(t)^\top x_i \geq b_r(t), w_r(t)^\top x_j \geq b_r(t)).
\end{align*}
\end{definition}
We follow the proof strategy from \citep{du2018gradient}. 
Now we derive the total movement of weights and biases. 
Let $f(t) = f(X; \theta(t))$ where $f_i(t) = f(x_i; \theta(t))$. 
The dynamics of each prediction is given by
\begin{align*}
    \frac{d}{dt}f_i(t) &= \inprod{\frac{\partial f(x_i; \theta(t))}{\partial \theta(t)}, \frac{d \theta(t)}{dt}} \\
    &= \sum_{j=1}^n (y_j - f_j(t)) \inprod{\frac{\partial f(x_i; \theta(t))}{\partial \theta(t)}, \frac{\partial f(x_j; \theta(t))}{\partial \theta(t)}} \\ 
    &= \sum_{j=1}^n (y_j - f_j(t)) H_{ij}(t),
\end{align*}
which implies
\begin{align}\label{eq: prediction_dynamics}
    \frac{d}{dt} f(t) = H(t)(y - f(t)).
\end{align}

\begin{lemma}[Gradient Bounds]\label{lemma: gradient_bound}
For any $0 \leq s \leq t$, we have
\begin{align*}
    \norm{\frac{\partial L(W(s), b(s))}{\partial w_r(s)}}_2 &\leq \sqrt{\frac{n}{m}} \norm{f(s) - y}_2, \\
    \norm{\frac{\partial L(W(s), b(s))}{\partial b_r(s)}}_2 &\leq \sqrt{\frac{n}{m}} \norm{f(s) - y}_2.
\end{align*}
\end{lemma}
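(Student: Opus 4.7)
The plan is to apply the chain rule to write each per-neuron gradient as a weighted sum of $n$ network-gradient vectors, one per training example, and then bound that sum using the triangle inequality together with the trivial facts that $\|x_i\|_2 \leq 1$, $|a_r| = 1$, and $|\mathbb{I}_{r,i}| \leq 1$. The final $\sqrt{n/m}$ factor will emerge by a single Cauchy--Schwarz step converting an $\ell_1$ sum of residuals into $\sqrt{n}\,\|f(s)-y\|_2$, with the $1/\sqrt{m}$ inherited directly from the $1/\sqrt{m}$ scaling in the network definition.

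Concretely, for the weight part I would start from
\[
\frac{\partial L(W(s),b(s))}{\partial w_r(s)} \;=\; \sum_{i=1}^n \bigl(f(x_i;W(s),b(s))-y_i\bigr)\,\frac{\partial f(x_i;W(s),b(s))}{\partial w_r(s)},
\]
substitute the network-gradient formula
$\frac{\partial f(x_i;W,b)}{\partial w_r} = \frac{1}{\sqrt m}\,a_r\,x_i\,\mathbb{I}_{r,i}$ from the preliminaries, and take $\ell_2$ norms. Pulling $|a_r|/\sqrt m = 1/\sqrt m$ out front, the triangle inequality combined with $\|x_i\|_2\le 1$ and $\mathbb{I}_{r,i}\in\{0,1\}$ gives an upper bound of $\frac{1}{\sqrt m}\sum_{i=1}^n |f_i(s)-y_i|$, and then Cauchy--Schwarz against the all-ones vector of length $n$ yields exactly $\sqrt{n/m}\,\|f(s)-y\|_2$.

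The bias gradient is handled identically, and is actually a scalar: using $\frac{\partial f(x_i;W,b)}{\partial b_r} = -\frac{1}{\sqrt m}\,a_r\,\mathbb{I}_{r,i}$, we get
\[
\Bigl|\tfrac{\partial L(W(s),b(s))}{\partial b_r(s)}\Bigr| \;\le\; \tfrac{1}{\sqrt m}\sum_{i=1}^n |f_i(s)-y_i| \;\le\; \sqrt{n/m}\,\|f(s)-y\|_2,
\]
by the same triangle-inequality-plus-Cauchy--Schwarz step. There is no genuine obstacle here: the statement is a one-shot bookkeeping exercise that does not rely on any of the probabilistic or sparsity machinery developed earlier in the section, and the hypothesis that $s \le t$ plays no role in the derivation beyond indicating that we are inside the training window. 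The only mild subtlety is to remember that the argument is pointwise in $r$ and that $|a_r|=1$ under the Rademacher initialization, so the $a_r$ factors drop out cleanly in norm.
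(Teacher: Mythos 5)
Your proposal is correct and follows essentially the same route as the paper: write the per-neuron gradient via the chain rule, substitute the network-gradient formula, apply the triangle inequality together with $\|x_i\|_2\le 1$, $|a_r|=1$, and $\mathbb{I}_{r,i}\in\{0,1\}$, and finish with one Cauchy--Schwarz step. No gaps, and the bias case is handled identically, just as in the paper.
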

\begin{proof}
We have:
\begin{align*}
    \norm{\frac{\partial L(W(s), b(s))}{\partial w_r(s)}}_2 &= \norm{\frac{1}{\sqrt{m}} \sum_{i=1}^n (f(x_i; W(s), b(s)) - y_i) a_r x_i \mathbb{I}(w_r(s)^\top x_i \geq b_r)}_2 \\
    &\leq \frac{1}{\sqrt{m}} \sum_{i=1}^n |f(x_i; W(s), b(s)) - y_i| \\
    &\leq \sqrt{\frac{n}{m}} \norm{f(s) - y}_2,
\end{align*}
where the first inequality follows from triangle inequality, and the second inequality follows from Cauchy-Schwarz inequality.

Similarly, we also have:
\begin{align*}
    \norm{\frac{\partial L(W(s), b(s))}{\partial b_r(s)}}_2 &= \norm{\frac{1}{\sqrt{m}} \sum_{i=1}^n (f(x_i; W(s), b(s)) - y_i) a_r \mathbb{I}(w_r(s)^\top x_i \geq b_r)}_2 \\
    &\leq \frac{1}{\sqrt{m}} \sum_{i=1}^n |f(x_i; W(s), b(s)) - y_i| \\
    &\leq \sqrt{\frac{n}{m}} \norm{f(s) - y}_2.
\end{align*}
\end{proof}

\subsubsection{Gradient Descent}
\begin{lemma}\label[lemma]{lemma: weight_bias_movement}
Assume $\lambda >0$. 
Assume $\norm{y - f(k)}_2^2 \leq (1 - \eta \lambda / 4)^k \norm{y - f(0)}_2^2$ holds for all $k' \leq k$. 
Then for every $r \in [m]$, 
\begin{align*}
    \norm{w_r(k+1) - w_r(0)}_2 &\leq \frac{8\sqrt{n} \norm{y - f(0)}_2}{\sqrt{m} \lambda} := D_w, \\
    | b_r(k+1) - b_r(0) | &\leq \frac{8\sqrt{n} \norm{y - f(0)}_2}{\sqrt{m} \lambda} := D_b.
\end{align*}
\end{lemma}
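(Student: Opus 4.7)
The plan is to express the total movement as a telescoping sum of gradient steps, bound each step using the per-step gradient estimates from \Cref{lemma: gradient_bound}, and then sum the resulting geometric series using the linear convergence hypothesis.

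First I would write, by unrolling the gradient descent update from step $0$ to step $k+1$,
\begin{align*}
    w_r(k+1) - w_r(0) &= -\eta \sum_{s=0}^{k} \frac{\partial L(W(s), b(s))}{\partial w_r(s)}, \\
    b_r(k+1) - b_r(0) &= -\eta \sum_{s=0}^{k} \frac{\partial L(W(s), b(s))}{\partial b_r(s)}.
\end{align*}
Applying the triangle inequality and then the per-step gradient bound from \Cref{lemma: gradient_bound}, both quantities are at most $\eta \sqrt{n/m} \sum_{s=0}^{k} \norm{f(s) - y}_2$. The linear convergence hypothesis $\norm{y - f(s)}_2^2 \leq (1 - \eta\lambda/4)^s \norm{y - f(0)}_2^2$ gives $\norm{f(s) - y}_2 \leq (1 - \eta\lambda/4)^{s/2} \norm{f(0) - y}_2$.

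Next I would bound the geometric sum: since $\sqrt{1-x} \leq 1 - x/2$ for $x \in [0,1]$, we have $1 - (1 - \eta\lambda/4)^{1/2} \geq \eta\lambda/8$, so
\begin{align*}
    \sum_{s=0}^{k} (1 - \eta\lambda/4)^{s/2} \leq \sum_{s=0}^{\infty} (1 - \eta\lambda/4)^{s/2} \leq \frac{1}{1 - (1 - \eta\lambda/4)^{1/2}} \leq \frac{8}{\eta\lambda}.
\end{align*}
Combining, $\norm{w_r(k+1) - w_r(0)}_2 \leq \eta \sqrt{n/m} \cdot \norm{f(0) - y}_2 \cdot \frac{8}{\eta\lambda} = \frac{8\sqrt{n}\norm{y - f(0)}_2}{\sqrt{m}\,\lambda}$, and the identical argument applied coordinate-wise to the bias yields the stated bound on $|b_r(k+1) - b_r(0)|$.

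There is no real obstacle here beyond being careful with the geometric series constant; the lemma is essentially a direct consequence of \Cref{lemma: gradient_bound} and the assumed contraction rate. The only subtle point is justifying the inequality $1 - \sqrt{1-x} \geq x/2$ (which holds on $[0,1]$ by concavity of $\sqrt{\cdot}$), since this is what produces the factor of $8$ rather than $4$ in the final bound. Note also that the symmetric roles of $w_r$ and $b_r$ in \Cref{lemma: gradient_bound} force $D_w = D_b$, explaining why the two bounds coincide.
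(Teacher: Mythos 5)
Your proposal is correct and follows the paper's proof essentially line for line: telescoping the gradient updates, applying the per-step bound from \Cref{lemma: gradient_bound}, invoking the linear convergence hypothesis, and controlling the resulting geometric series via the inequality $\sqrt{1-x}\le 1-x/2$. The only cosmetic difference is that you bound the sum $\sum_s (1-\eta\lambda/4)^{s/2}$ directly by $1/(1-\sqrt{1-\eta\lambda/4})\le 8/(\eta\lambda)$, while the paper first replaces $(1-\eta\lambda/4)^{s/2}$ by $(1-\eta\lambda/8)^{s}$ and then sums; both rely on the same elementary inequality and yield the identical constant.
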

\begin{proof}
\begin{align*}
    \norm{w_r(k+1) - w_r(0)}_2 &\leq \eta \sum_{k'=0}^k \norm{\frac{\partial L(W(k'))}{\partial w_r(k')}}_2 \\
    &\leq \eta \sum_{k' = 0}^k \sqrt{\frac{n}{m}} \norm{y - f(k')}_2 \\
    &\leq \eta \sum_{k' = 0}^k \sqrt{\frac{n}{m}} (1 - \eta \lambda / 4)^{k'/2} \norm{y - f(0)}_2 \\
    &\leq \eta \sum_{k' = 0}^k \sqrt{\frac{n}{m}} (1 - \eta \lambda / 8)^{k'} \norm{y - f(0)}_2 \\
    &\leq \eta \sum_{k' = 0}^\infty \sqrt{\frac{n}{m}} (1 - \eta \lambda / 8)^{k'} \norm{y - f(0)}_2 \\
    &\leq  \frac{8\sqrt{n}}{\sqrt{m} \lambda} \norm{y - f(0)}_2,
\end{align*}
where the first inequality is by Triangle inequality, the second inequality is by \Cref{lemma: gradient_bound}, the third inequality is by our assumption and the fourth inequality is by $(1 - x)^{1/2} \leq 1 - x/2$ for $x \geq 0$. 

The proof for $b$ is similar. 
\end{proof}

\subsubsection{Gradient Flow}
\begin{lemma}\label{lemma: weight_bias_movement_gf}
Suppose for $0 \leq s \leq t$, $\lambda_{\textnormal{min}}(H(s)) \geq \frac{\lambda_0}{2} > 0$. 
Then we have $\norm{y - f(t)}_2^2 \leq \exp(-\lambda_0 t) \norm{y - f(0)}_2^2$ and for any $r \in [m]$, $\norm{w_r(t) - w_r(0)}_2 \leq \frac{\sqrt{n} \norm{y - f(0)}_2}{\sqrt{m} \lambda_0}$ and $|b_r(t) - b_r(0)| \leq \frac{\sqrt{n} \norm{y - f(0)}_2}{\sqrt{m} \lambda_0}$. 
\end{lemma}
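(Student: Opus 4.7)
The plan is to first establish the exponential decay of the prediction error by applying a Grönwall-type argument to the squared residual, and then obtain the weight and bias movement bounds by integrating the gradient norms over the trajectory, using the gradient bounds already available as \Cref{lemma: gradient_bound}.

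More concretely, I would start from the prediction dynamics identity \eqref{eq: prediction_dynamics}, namely $\frac{d}{dt} f(t) = H(t)(y - f(t))$. Differentiating $\norm{y - f(t)}_2^2$ along the flow gives
\[
\frac{d}{dt} \norm{y - f(t)}_2^2 = -2(y - f(t))^\top H(t) (y - f(t)) \leq -\lambda_0 \norm{y - f(t)}_2^2,
\]
where the inequality uses the assumption $\lambda_{\min}(H(s)) \geq \lambda_0 / 2$ on $[0,t]$. Integrating this differential inequality (equivalently, applying Grönwall) yields
\[
\norm{y - f(t)}_2^2 \leq \exp(-\lambda_0 t) \norm{y - f(0)}_2^2,
\]
which is the first claim. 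In particular, $\norm{y - f(s)}_2 \leq \exp(-\lambda_0 s / 2) \norm{y - f(0)}_2$ for all $s \in [0,t]$.

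For the movement bounds, I would use the gradient flow equations $\frac{d}{ds} w_r(s) = -\frac{\partial L(W(s), b(s))}{\partial w_r(s)}$ and similarly for $b_r$. By the fundamental theorem of calculus and the triangle inequality,
\[
\norm{w_r(t) - w_r(0)}_2 \leq \int_0^t \norm{\frac{\partial L(W(s), b(s))}{\partial w_r(s)}}_2 \, ds.
\]
Plugging in \Cref{lemma: gradient_bound} and the exponential decay established above,
\[
\norm{w_r(t) - w_r(0)}_2 \leq \int_0^t \sqrt{\frac{n}{m}} \, \norm{y - f(s)}_2 \, ds \leq \sqrt{\frac{n}{m}} \, \norm{y - f(0)}_2 \int_0^\infty e^{-\lambda_0 s / 2} \, ds,
\]
which gives the desired $O(\sqrt{n}\norm{y-f(0)}_2/(\sqrt{m}\lambda_0))$ bound (up to an absolute constant that I would absorb into the statement). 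The identical argument with the bias gradient bound from \Cref{lemma: gradient_bound} handles $|b_r(t) - b_r(0)|$.

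There is no real obstacle here, since all the ingredients (prediction dynamics, gradient norm bounds) are already in place and the argument is the textbook gradient flow analysis. The only mild care is in tracking constants in the final integral; if the stated bound is meant to hold with constant $1$ rather than $2$, one can sharpen the computation by using the tighter estimate $\int_0^t e^{-\lambda_0 s/2} ds \leq 2/\lambda_0$ and absorbing the factor into the implicit constants, or equivalently by redefining $D_w, D_b$ analogously to the discrete case in \Cref{lemma: weight_bias_movement}.
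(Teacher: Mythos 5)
Your proof is correct and takes essentially the same route as the paper: differentiate $\norm{y - f(t)}_2^2$ along the gradient flow using the prediction dynamics, apply Gr\"onwall to get the exponential decay, then integrate the per-neuron gradient-norm bound. Your attention to the constant is in fact sharper than the paper's own write-up, which slips from $\norm{y-f(s)}_2 \le \exp(-\lambda_0 s/2)\norm{y-f(0)}_2$ to $\exp(-\lambda_0 s)\norm{y-f(0)}_2$ when taking the square root of the decay bound, and thereby reports the final constant as $1/\lambda_0$ rather than the correct $2/\lambda_0$ --- a harmless factor-of-$2$ discrepancy that you rightly flag.
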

\begin{proof}
By the dynamics of prediction in \Cref{eq: prediction_dynamics}, we have
\begin{align*}
    \frac{d}{dt} \norm{y - f(t)}_2^2 &= -2(y - f(t))^\top H(t) (y - f(t)) \\
    &\leq -\lambda_0 \norm{y - f(t)}_2^2,
\end{align*}
which implies
\begin{align*}
    \norm{y - f(t)}_2^2 \leq \exp(-\lambda_0 t) \norm{y - f(t)}_2^2.
\end{align*}
Now we bound the gradient norm of the weights
\begin{align*}
    \norm{\frac{d}{ds} w_r(s)}_2 &= \norm{\sum_{i=1}^n (y_i - f_i(s)) \frac{1}{\sqrt{m}} a_r x_i \mathbb{I}(w_r(s)^\top x_i \geq b(s))}_2 \\
    &\leq \frac{1}{\sqrt{m}} \sum_{i=1}^n |y_i f_i(s)| \leq \frac{\sqrt{n}}{\sqrt{m}} \norm{y - f(s)}_2 \leq \frac{\sqrt{n}}{\sqrt{m}} \exp(-\lambda_0 s) \norm{y - f(0)}_2.
\end{align*}
Integrating the gradient, the change of weight can be bounded as
\begin{align*}
    \norm{w_r(t) - w_r(0)}_2 \leq \int_0^t \norm{\frac{d}{ds} w_r(s)}_2 ds \leq \frac{\sqrt{n} \norm{y - f(0)}_2}{\sqrt{m} \lambda_0}.
\end{align*}
For bias, we have
\begin{align*}
    \norm{\frac{d}{ds} b_r(s)}_2 &= \norm{\sum_{i=1}^n (y_i - f_i(s)) \frac{1}{\sqrt{m}} a_r \mathbb{I}(w_r(s)^\top x_i \geq b(s))}_2 \\
    &\leq \frac{1}{\sqrt{m}} \sum_{i=1}^n |y_i -f_i(s)| \leq \frac{\sqrt{n}}{\sqrt{m}} \norm{y - f(s)}_2 \leq \frac{\sqrt{n}}{\sqrt{m}} \exp(-\lambda_0 s) \norm{y - f(0)}_2.
\end{align*}
Now, the change of bias can be bounded as
\begin{align*}
    \norm{b_r(t) - b_r(0)}_2 \leq \int_0^t \norm{\frac{d}{ds} w_r(s)}_2 ds \leq \frac{\sqrt{n} \norm{y - f(0)}_2}{\sqrt{m} \lambda_0}.
\end{align*}
\end{proof}

\subsection{Gradient Descent Convergence Analysis}
\subsubsection{Upper bound of the initial error}
\begin{lemma}[Initial error upper bound]\label{claim: initial_error}
Let $B > 0$ be the initialization value of the biases and all the weights be initialized from standard Gaussian. 
Let $\delta \in (0,1)$ be the failure probability. Then, with probability at least $1 - \delta$, we have
\begin{align*}
    \norm{f(0)}_2^2 &= O(n (\exp(-B^2/2) + 1/m) \log^3(mn/\delta)) ,\\
    \norm{f(0) - y}_2^2 &= O\left(n + n \left( \exp(-B^2/2) + {1/m} \right) \log^{3} (2mn/\delta) \right).
\end{align*}
\end{lemma}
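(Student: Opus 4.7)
The plan is to bound $|f(x_i;W(0),b(0))|$ for each fixed $i\in[n]$ by a Bernstein-type concentration argument, then union-bound over $i$, and finally pass to $\|f(0)-y\|_2^2$ via the triangle inequality together with $\|y\|_2^2=O(n)$. Write $\xi_{r,i}:=a_r\sigma(w_r^\top x_i-B)$, so that $f(x_i;W(0),b(0))=\tfrac{1}{\sqrt m}\sum_{r=1}^m \xi_{r,i}$. Because $a_r\in\{\pm1\}$ is symmetric and independent of $w_r$, each $\xi_{r,i}$ is mean zero, and across $r$ the $\xi_{r,i}$ are i.i.d. The whole improvement over \citep{song2021does} hinges on feeding a sharp variance bound of order $e^{-B^2/2}$ into Bernstein's inequality, rather than a Hoeffding-type argument that produces a spurious $B^2$ factor.

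For the variance, $\mathbb{E}[\xi_{r,i}^2]=\mathbb{E}[\sigma(w_r^\top x_i-B)^2]$; since $\|x_i\|\le 1$, it suffices to treat $w_r^\top x_i$ as a standard Gaussian $Z$ (a sub-unit variance only shrinks the integral). With the substitution $u=z-B$,
\[
\mathbb{E}[\xi_{r,i}^2]\le\int_B^\infty (z-B)^2\frac{e^{-z^2/2}}{\sqrt{2\pi}}\,dz \;=\; e^{-B^2/2}\int_0^\infty u^2\frac{e^{-uB-u^2/2}}{\sqrt{2\pi}}\,du \;\le\; \tfrac12\, e^{-B^2/2},
\]
where the last inequality uses $e^{-uB}\le 1$ for $B,u\ge0$. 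This is the step that plants the $\exp(-B^2/2)$ sparsity factor in the final bound.

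For the uniform bound required by Bernstein, Gaussian tails together with a union bound over $r\in[m]$, $i\in[n]$ give $\max_{r,i}|w_r^\top x_i|\le M:=O(\sqrt{\log(mn/\delta)})$ with probability at least $1-\delta/2$; since $B\le\sqrt{0.5\log m}\le M$, this yields $|\xi_{r,i}|\le 2M$ on that event. To make the argument clean I would truncate, setting $\tilde\xi_{r,i}:=\xi_{r,i}\mathbb{I}(|\xi_{r,i}|\le 2M)$: independence of $a_r$ keeps $\mathbb{E}[\tilde\xi_{r,i}]=0$, on the good event $\sum_r \tilde\xi_{r,i}=\sum_r\xi_{r,i}$, and Bernstein's inequality (\Cref{lemma: bernstein}) applied to the bounded zero-mean i.i.d.\ variables $\tilde\xi_{r,i}$ with variance $\le\tfrac12 e^{-B^2/2}$ and range $O(M)$ gives, after dividing by $\sqrt m$, squaring, and union-bounding over $i\in[n]$,
\[
|f(x_i;W(0),b(0))|^2 \;=\; O\!\left(e^{-B^2/2}\log(n/\delta) + \tfrac{\log^2(n/\delta)\log(mn/\delta)}{m}\right).
\]

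Loosening every $\log$ factor to $\log(mn/\delta)$ reshapes this into $O((e^{-B^2/2}+1/m)\log^3(mn/\delta))$, and summing over $i$ gives the first claim. The second follows from $\|f(0)-y\|_2^2\le 2\|f(0)\|_2^2+2\|y\|_2^2$ and $|y_i|=O(1)$, which contributes the $O(n)$ summand. The main obstacle is precisely the variance calculation above: without the substitution, a naive bound such as $\sigma(z-B)^2\le 2z^2+2B^2$ followed by Hoeffding injects a $B^2$ factor that propagates into the final estimate and defeats the point of the refinement. Handling the truncation so that Bernstein is applied to genuinely bounded, mean-zero i.i.d.\ variables (rather than conditionally on a rare-event complement) is routine but should be written out explicitly, lest one inadvertently bias $\mathbb{E}[\tilde\xi_{r,i}]$ away from $0$.
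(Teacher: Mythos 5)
Your proof is correct and follows essentially the same route as the paper's: truncate the ReLU outputs on a high-probability event where all $w_r^\top x_i$ are $O(\sqrt{\log(mn/\delta)})$, exploit the sparse activation to get a variance of order $e^{-B^2/2}$, apply Bernstein, and union-bound over $i$. The one cosmetic difference is the variance step: the paper bounds $\E[z_{i,r}^2]$ by (activation probability $\times$ truncation bound squared) $\le e^{-B^2/2}\cdot 2\log(2mn/\delta)/m$, whereas you compute the exact Gaussian integral $\int_B^\infty (z-B)^2\phi(z)\,dz\le\tfrac12 e^{-B^2/2}$, which is a log factor sharper but washes out once every log is loosened to $\log(mn/\delta)$; your use of $\|f(0)-y\|_2^2\le 2\|f(0)\|_2^2+2\|y\|_2^2$ in place of the paper's direct expansion of the square is likewise just a tidier presentation of the same estimate.
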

\begin{proof}
Since we are only analyzing the initialization stage, for notation ease, we omit the dependence on time without any confusion. 
We compute
\begin{align*}
    \norm{y - f}_2^2 &= \sum_{i=1}^n (y_i - f(x_i))^2 \\
    &= \sum_{i=1}^n \left( y_i - \frac{1}{\sqrt{m}} \sum_{r=1}^m a_r \sigma(w_r^\top x_i - B) \right)^2 \\
    &= \sum_{i=1}^n \left( y_i^2 - 2 \frac{y_i}{\sqrt{m}} \sum_{r=1}^m a_r \sigma(w_r^\top x_i - B) + \frac{1}{m} \left( \sum_{r=1}^m a_r \sigma(w_r^\top x_i - B) \right)^2 \right) .
\end{align*}
Since $w_r^\top x_i \sim \mathcal{N}(0,1)$ for all $r \in [m]$ and $i \in [n]$, by Gaussian tail bound and a union bound over $r,i$, we have
\begin{align*}
    \Pr[\forall i \in [n],\ j \in [m]: w_r^\top x_i \leq \sqrt{2\log(2mn/\delta)}] \geq 1 - \delta/2.
\end{align*}
Let $E_1$ denote this event. 
Conditioning on the event $E_1$, let 
\begin{align*}
    z_{i,r} := \frac{1}{\sqrt{m}} \cdot a_r \cdot \min\left\{ \sigma(w_r^\top x_i - B), \sqrt{2\log(2mn/\delta)} \right\}.
\end{align*}

Notice that $z_{i,r} \neq 0$ with probability at most $\exp(-B^2/2)$. Thus,
\begin{align*}
    \E_{a_r, w_r}[z_{i,r}^2] &\leq \exp(-B^2/2) \frac{1}{m} 2\log(2mn/\delta).
\end{align*}
By randomness in $a_r$, we know $\E[z_{i,r}] = 0$. 
Now apply Bernstein's inequality in \Cref{lemma: bernstein}, we have for all $t > 0$, 
\begin{align*}
    \Pr\left[ \left| \sum_{r=1}^m z_{i,r} \right| > t \right] \leq \exp\left( - \min\left( \frac{t^2/2}{4 \exp(-B^2/2) \log(2mn/\delta)}, \frac{\sqrt{m} t/2}{2 \sqrt{2\log(2mn/\delta)}} \right) \right).
\end{align*}
Thus, by a union bound, with probability at least $1 - \delta/2$, for all $i \in [n]$,
\begin{align*}
    \left| \sum_{r=1}^m z_{i,r} \right| &\leq \sqrt{2\log(2mn/\delta) \exp(-B^2/2) 2\log(2n/\delta)} + 2 \sqrt{\frac{2\log(2mn/\delta)}{m}} \log ({2n}/{\delta}) \\
    &\leq \left( 2\exp(-B^2/4) + 2\sqrt{2/m} \right) \log^{3/2} (2mn/\delta).
\end{align*}
Let $E_2$ denote this event. 
Thus, conditioning on the events $E_1, E_2$, with probability $1 - \delta$, 
\begin{align*}
   \norm{f(0)}_2^2 &= \sum_{i=1}^n \left( \sum_{r=1}^m z_{i,r} \right)^2 = O(n (\exp(-B^2/2) + 1/m) \log^3(mn/\delta))
\end{align*}
and
\begin{align*}
    & \norm{y - f(0)}_2^2 \\
    &= \sum_{i=1}^n y_i^2 - 2 \sum_{i=1}^n y_i \sum_{r=1}^m z_{i,r} + \sum_{i=1}^n \left( \sum_{r=1}^m z_{i,r} \right)^2 \\
    &\leq \sum_{i=1}^n y_i^2 + 2 \sum_{i=1}^n |y_i| \left( 2\exp(-B^2/4) + 2\sqrt{2/m} \right) \log^{3/2} (2mn/\delta) \\
    &\quad + \sum_{i=1}^n \left( \left( 2\exp(-B^2/4) + 2\sqrt{2/m} \right) \log^{3/2} (2mn/\delta) \right)^2 \\
    &= O\left(n + n \left( \exp(-B^2/2) + {1/m} \right) \log^{3} (2mn/\delta) \right),
\end{align*}
where we assume $y_i = O(1)$ for all $i \in [n]$.
\end{proof}

\subsubsection{Error Decomposition}
We follow the proof outline in \citep{song2019quadratic, song2021does} and we generalize it to networks with trainable $b$. 
Let us define matrix $H^\bot$ similar to $H$ except only considering flipped neurons by
\begin{align*}
    H^\bot_{ij}(k) := \frac{1}{m} \sum_{r \in \overline{S}_i}  (\inprod{x_i, x_j} + 1) \mathbb{I}(w_r(k)^\top x_i \geq b_r(k), w_r(k)^\top x_j \geq b_r(k))
\end{align*}
and vector $v_1, v_2$ by
\begin{align*}
    v_{1,i} := \frac{1}{\sqrt{m}} \sum_{r \in S_i} a_r(\sigma(\inprod{w_r(k+1), x_i} - b_r(k+1)) - \sigma(\inprod{w_r(k), x_i} - b_r(k))) ,\\
    v_{2,i} := \frac{1}{\sqrt{m}} \sum_{r \in \overline{S}_i} a_r(\sigma(\inprod{w_r(k+1), x_i} - b_r(k+1)) - \sigma(\inprod{w_r(k), x_i} - b_r(k))).
\end{align*}
Now we give out our error update. 
\begin{claim}
\begin{align*}
    \norm{y - f(k+1)}_2^2 = \norm{y - f(k)}_2^2 + B_1 + B_2 + B_3 + B_4,
\end{align*}
where 
\begin{align*}
    B_1 &:= -2\eta (y - f(k))^\top H(k) (y - f(k)) ,\\
    B_2 &:= 2\eta (y - f(k))^\top H^\bot(k) (y - f(k)) ,\\
    B_3 &:= - 2(y - f(k))^\top v_2 ,\\
    B_4 &:= \norm{f(k+1) - f(k)}_2^2 .
\end{align*}
\end{claim}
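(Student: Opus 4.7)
The plan is to apply the elementary identity $\|a-b\|_2^2 = \|a\|_2^2 - 2a^\top b + \|b\|_2^2$ with $a = y - f(k)$ and $b = f(k+1) - f(k)$. This immediately yields
\[
\norm{y - f(k+1)}_2^2 = \norm{y - f(k)}_2^2 - 2(y - f(k))^\top (f(k+1) - f(k)) + \norm{f(k+1) - f(k)}_2^2,
\]
accounting for $B_4$ outright. It then remains to show that the cross term equals $B_1 + B_2 + B_3$.

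Next I would decompose $f(k+1) - f(k)$ coordinatewise. By definition of $v_1$ and $v_2$, $f_i(k+1) - f_i(k) = v_{1,i} + v_{2,i}$, since summing over $S_i$ and $\overline{S}_i$ recovers the sum over all $r \in [m]$. The piece $-2(y-f(k))^\top v_2$ is precisely $B_3$, so the remaining task is to identify $-2(y-f(k))^\top v_1$ with $B_1 + B_2 = -2\eta(y-f(k))^\top (H(k) - H^\bot(k))(y-f(k))$.

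The key step, and arguably the only nontrivial one, is to linearize $v_{1,i}$. For $r \in S_i$ the sign of $\inprod{w_r(\cdot), x_i} - b_r(\cdot)$ does not change between steps $k$ and $k+1$ (this is the defining property of $S_i$), so the ReLU behaves like a linear function with slope $\mathbb{I}_{r,i}(k)$ on the relevant segment. Hence
\[
\sigma(\inprod{w_r(k+1), x_i} - b_r(k+1)) - \sigma(\inprod{w_r(k), x_i} - b_r(k)) = \mathbb{I}_{r,i}(k)\bigl[\inprod{w_r(k+1) - w_r(k), x_i} - (b_r(k+1) - b_r(k))\bigr].
\]
I would then substitute the gradient descent updates $w_r(k+1) - w_r(k) = -\eta \frac{1}{\sqrt{m}} \sum_j (f_j(k) - y_j) a_r x_j \mathbb{I}_{r,j}(k)$ and $b_r(k+1) - b_r(k) = \eta \frac{1}{\sqrt{m}} \sum_j (f_j(k) - y_j) a_r \mathbb{I}_{r,j}(k)$, use $a_r^2 = 1$, and collect terms to obtain
\[
v_{1,i} = \eta \sum_{j=1}^n (y_j - f_j(k)) \cdot \frac{1}{m}\sum_{r \in S_i} (\inprod{x_i, x_j} + 1) \mathbb{I}_{r,i}(k)\mathbb{I}_{r,j}(k).
\]

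Finally, comparing the inner sum against the definitions of $H(k)$ and $H^\bot(k)$, I would observe that summing over $r \in S_i$ is the same as summing over all $r \in [m]$ minus the sum over $r \in \overline{S}_i$, so $v_{1,i} = \eta [(H(k) - H^\bot(k))(y - f(k))]_i$. Multiplying by $-2(y - f(k))^\top$ produces $B_1 + B_2$, completing the decomposition. The main obstacle, modest as it is, is ensuring the sign bookkeeping in the bias update is correct (the gradient of $L$ with respect to $b_r$ carries a $-a_r$), which is handled cleanly by keeping track of the chain rule once.
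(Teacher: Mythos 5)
Your proof is correct and follows essentially the same route as the paper: expand $\|y - f(k+1)\|_2^2$ via the $\|a-b\|_2^2$ identity, split the cross term through $f(k+1)-f(k)=v_1+v_2$, and linearize $v_1$ on the no-flipping set using the gradient-descent updates to recover $\eta(H(k)-H^\bot(k))(y-f(k))$. The sign bookkeeping for the bias update (the extra $-a_r$ in $\partial f/\partial b_r$) is handled exactly as in the paper.
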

\begin{proof}
First we can write
\begin{align*}
    v_{1,i} &= \frac{1}{\sqrt{m}} \sum_{r \in S_i} a_r \left( \sigma\left( \inprod{w_r(k) - \eta \frac{\partial L}{\partial w_r}, x_i} - \left( b_r(k) - \eta \frac{\partial L}{\partial b_r} \right) \right) - \sigma(\inprod{w_r(k), x_i} - b_r(k)) \right) \\
    &= \frac{1}{\sqrt{m}} \sum_{r \in S_i} a_r \left( \inprod{-\eta \frac{\partial L}{\partial w_r}, x_i} + \eta \frac{\partial L}{\partial b_r} \right) \mathbb{I}(\inprod{w_r(k), x_i} - b_r(k) \geq 0) \\
    &= \frac{1}{\sqrt{m}} \sum_{r \in S_i} a_r \left( \eta \frac{1}{\sqrt{m}} \sum_{j=1}^n (y_j - f_j(k)) a_r (\inprod{x_j, x_i}+1) \mathbb{I}(w_r(k)^\top x_j \geq b_r(k)) \right) \\
    &\quad \cdot \mathbb{I}(\inprod{w_r(k), x_i} - b_r(k) \geq 0) \\
    &= {\eta} \sum_{j=1}^n (y_j - f_j(k)) (H_{ij}(k) - H^\bot_{ij}(k))
\end{align*}
which means 
\begin{align*}
    v_1 = \eta (H(k) - H^\bot(k))(y - f(k)) .
\end{align*}
Now we compute
\begin{align*}
    \norm{y - f(k+1)}_2^2 &= \norm{y - f(k) - (f(k+1) - f(k))}_2^2 \\
    &= \norm{y - f(k)}_2^2 - 2(y - f(k))^\top (f(k+1) - f(k)) + \norm{f(k+1) - f(k)}_2^2.
\end{align*}
Since $f(k+1) - f(k) = v_1 + v_2$, we can write the cross product term as
\begin{align*}
    & (y - f(k))^\top (f(k+1) - f(k)) \\
    &= (y - f(k))^\top (v_1 + v_2) \\
    &= (y - f(k))^\top v_1 + (y - f(k))^\top v_2 \\
    &= \eta (y - f(k))^\top H(k) (y - f(k)) \\
    & \quad - \eta(y - f(k))^\top H^\bot(k) (y - f(k)) + (y - f(k))^\top v_2.
\end{align*}
\end{proof}

\subsubsection{Bounding the decrease of the error}
\begin{lemma}\label{lemma: error_b1}
Assume $\lambda>0$. 
Assume we choose $R_w, R_b, B$ where $R_w, R_b \leq \min\{1/B, 1\}$ such that $ 8cn(R_w + R_b) \exp(-B^2/2) \leq \lambda/8$. 
Denote $\delta_0 = \delta + n^2 \exp(-\frac{2}{3} cm (R_w + R_b) \exp(-B^2/2))$. 
Then,
\begin{align*}
    \Pr[B_1 \leq -\eta 5\lambda \norm{y - f(k)}_2^2 / 8] \geq 1 - \delta_0.
\end{align*}
\end{lemma}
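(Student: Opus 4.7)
The quantity $B_1 = -2\eta (y-f(k))^\top H(k)(y-f(k))$ is a quadratic form in the error vector, so the natural route is to lower-bound $\lambda_{\min}(H(k))$ and then substitute. The assumptions on $R_w, R_b, B$ are precisely those of \Cref{lemma: perturbed_ntk}, and the probability $\delta_0$ is exactly the failure probability appearing in its second conclusion. So the argument factors into two clean steps.

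\textbf{Step 1 (apply the perturbation bound on the NTK).} The weights $W(k)$ and biases $b(k)$ lie within $R_w, R_b$ of their Gaussian initializations (this is exactly the regime of \Cref{lemma: perturbed_ntk}). Invoking the second conclusion of that lemma with $(W,b) = (W(k), b(k))$, I get that with probability at least $1 - \delta - n^2 \exp\left(-\tfrac{2}{3} cm(R_w+R_b)\exp(-B^2/2)\right) = 1 - \delta_0$,
\begin{align*}
\lambda_{\min}(H(k)) \;\geq\; \tfrac{3}{4}\lambda - 8cn(R_w+R_b)\exp(-B^2/2).
\end{align*}

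\textbf{Step 2 (use the smallness assumption and plug in).} Using the hypothesis $8cn(R_w+R_b)\exp(-B^2/2) \leq \lambda/8$, the previous display gives $\lambda_{\min}(H(k)) \geq 3\lambda/4 - \lambda/8 = 5\lambda/8$. Since $H(k) \succeq 0$ and in fact $H(k) \succeq (5\lambda/8) I$ in the good event, for any vector $u$ we have $u^\top H(k) u \geq (5\lambda/8)\|u\|_2^2$. Applying this with $u = y - f(k)$ yields
\begin{align*}
B_1 \;=\; -2\eta (y-f(k))^\top H(k)(y-f(k)) \;\leq\; -\tfrac{5}{4}\eta\lambda \norm{y-f(k)}_2^2 \;\leq\; -\tfrac{5}{8}\eta\lambda\norm{y-f(k)}_2^2,
\end{align*}
which is the claimed bound (in fact a factor-of-two stronger bound drops out, so the stated inequality has slack).

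\textbf{Anticipated obstacle.} The proof is essentially an application of \Cref{lemma: perturbed_ntk}; the only subtle point is ensuring that the weights and biases at iteration $k$ actually satisfy the perturbation hypothesis $\norm{w_r(k) - w_r(0)}_2 \leq R_w$ and $|b_r(k) - b_r(0)| \leq R_b$. This is not proved inside this lemma but is inherited from the induction hypothesis used in the main convergence argument (via \Cref{lemma: weight_bias_movement} once one has the per-iteration loss decrease). Within this lemma itself, the perturbation condition is simply assumed implicitly by treating $R_w, R_b$ as given bounds on the trajectory.
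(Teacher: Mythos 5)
Your proof is correct and follows essentially the same route as the paper: invoke the second conclusion of \Cref{lemma: perturbed_ntk} to get $\lambda_{\min}(H(k)) \geq \tfrac{3}{4}\lambda - 8cn(R_w+R_b)e^{-B^2/2} \geq \tfrac{5}{8}\lambda$ on the same good event, then plug into the quadratic form defining $B_1$. You correctly observe the factor-of-two slack (the argument actually gives $B_1 \leq -\tfrac{5}{4}\eta\lambda\|y-f(k)\|_2^2$) and correctly flag that the perturbation hypothesis on $(W(k),b(k))$ is inherited from the outer induction rather than proved here, both points the paper leaves implicit.
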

\begin{proof}
By \Cref{lemma: perturbed_ntk} and our assumption, 
\begin{align*}
    \lambda_{\textnormal{min}}(H(W)) > 0.75 \lambda - n \cdot 8c (R_w + R_b) \exp(-B^2/2) \geq 5\lambda/8
\end{align*}
with probability at least $1 - \delta_0$. Thus,
\begin{align*}
    (y - f(k))^\top H(k) (y - f(k)) \geq \norm{y - f(k)}_2^2 5\lambda / 8.
\end{align*}
\end{proof}

\subsubsection{Bounding the effect of flipped neurons}
Here we bound the term $B_2, B_3$. First, we introduce a fact.
\begin{fact}\label{fact: norm_H_bot}
\begin{align*}
    \norm{H^\bot(k)}_F^2 \leq \frac{4n}{m^2} \sum_{i=1}^n |\overline{S}_i|^2.
\end{align*}
\end{fact}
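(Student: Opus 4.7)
The plan is to bound each entry $H^\bot_{ij}(k)$ directly by the size of the flipping set $|\overline{S}_i|$, and then square and sum. This is a routine calculation; no sophisticated probabilistic or geometric tools are needed.

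First, I would expand the Frobenius norm as
\[
\norm{H^\bot(k)}_F^2 = \sum_{i=1}^n \sum_{j=1}^n \bigl(H^\bot_{ij}(k)\bigr)^2.
\]
For a fixed pair $(i,j)$, using the triangle inequality on the definition of $H^\bot_{ij}(k)$, together with the bound $|\inprod{x_i, x_j} + 1| \leq 2$ (which holds since $\norm{x_i}_2, \norm{x_j}_2 \leq 1$) and the trivial bound that each indicator is at most $1$, gives
\[
\bigl|H^\bot_{ij}(k)\bigr| \;\leq\; \frac{1}{m} \sum_{r \in \overline{S}_i} |\inprod{x_i, x_j} + 1| \;\leq\; \frac{2 |\overline{S}_i|}{m}.
\]

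Squaring and summing over $j \in [n]$ contributes a factor of $n$, and then summing over $i \in [n]$ yields
\[
\norm{H^\bot(k)}_F^2 \;\leq\; \sum_{i=1}^n \sum_{j=1}^n \frac{4 |\overline{S}_i|^2}{m^2} \;=\; \frac{4n}{m^2} \sum_{i=1}^n |\overline{S}_i|^2,
\]
which is the stated bound.

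The only point worth flagging is that the summation index of $\overline{S}_i$ is over $i$ and not $j$; this asymmetry stems from the definition of $H^\bot$, which restricts the neuron sum to $r \in \overline{S}_i$ (attached to the row index). Because of this, the $j$ sum contributes a clean factor of $n$ but does not produce a $|\overline{S}_j|$ term. There is no substantive obstacle in the argument.
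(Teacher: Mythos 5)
Your argument is correct and is essentially identical to the paper's proof: bound each entry $|H^\bot_{ij}(k)|$ by $2|\overline{S}_i|/m$ using $|\inprod{x_i,x_j}+1|\le 2$ and the indicators being at most $1$, then square and sum. Nothing further to note.
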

\begin{proof}
\begin{align*}
    \norm{H^\bot(k)}_F^2 &= \sum_{i,j \in [n]} \left( \frac{1}{m} \sum_{r \in \overline{S}_i} (x_i^\top x_j + 1) \mathbb{I}(w_r(k)^\top x_i \geq b_r(k),\ w_r(k)^\top x_j \geq b_r(k)) \right)^2 \\
    &\leq \sum_{i,j \in [n]} \left( \frac{1}{m} 2 |\overline{S}_i| \right)^2 \leq \frac{4n}{m^2} \sum_{i=1}^n |\overline{S}_i|^2.
\end{align*}
\end{proof}

\begin{lemma}\label{lemma: error_b2}
Denote $\delta_0 = n \exp(-\frac{2}{3} cm (R_w + R_b) \exp(-B^2/2))$. 
Then, 
\begin{align*}
    \Pr[B_2 \leq 8\eta n c(R_w + R_b)\exp(-B^2/2) \cdot \norm{y - f(k)}_2^2] \geq 1 - \delta_0.
\end{align*}
\end{lemma}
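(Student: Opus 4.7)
The plan is to control $B_2 = 2\eta(y-f(k))^\top H^\bot(k)(y-f(k))$ by a simple operator-norm bound followed by the already-established bound on the number of flipped neurons. First, since $H^\bot(k)$ is symmetric, we upper bound the quadratic form by
\[
B_2 \;\leq\; 2\eta \,\|H^\bot(k)\|_2 \,\|y - f(k)\|_2^2 \;\leq\; 2\eta \,\|H^\bot(k)\|_F \,\|y - f(k)\|_2^2,
\]
so it suffices to show $\|H^\bot(k)\|_F \leq 4 n c (R_w + R_b)\exp(-B^2/2)$ with probability at least $1 - \delta_0$.

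Next, apply \Cref{fact: norm_H_bot}, which gives $\|H^\bot(k)\|_F^2 \leq (4n/m^2) \sum_{i=1}^n |\overline{S}_i|^2$ deterministically. The randomness only enters through the sizes of the flipping sets $\overline{S}_i$, and these are exactly what \Cref{corollary: num_flipped_neurons} was built to control: with probability at least $1 - n\exp(-\tfrac{2}{3} cm(R_w+R_b)\exp(-B^2/2)) = 1 - \delta_0$, we have
\[
|\overline{S}_i| \;\leq\; 2mc(R_w+R_b)\exp(-B^2/2) \qquad \text{for all } i \in [n].
\]

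On this good event, plugging back into \Cref{fact: norm_H_bot} yields
\[
\|H^\bot(k)\|_F^2 \;\leq\; \frac{4n}{m^2}\cdot n \cdot \bigl(2mc(R_w+R_b)\exp(-B^2/2)\bigr)^2 \;=\; 16 n^2 c^2 (R_w+R_b)^2 \exp(-B^2),
\]
so $\|H^\bot(k)\|_F \leq 4nc(R_w+R_b)\exp(-B^2/2)$, and combining with the first display gives the claimed bound on $B_2$. I do not expect a genuine obstacle here: the argument is a plug-and-chug composition of \Cref{fact: norm_H_bot} and \Cref{corollary: num_flipped_neurons}, and the probability $\delta_0$ in the statement is exactly the failure probability produced by the corollary applied with the uniform bound over $i \in [n]$.
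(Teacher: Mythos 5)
Your proof is correct and follows essentially the same route as the paper: bound the quadratic form by the operator norm, pass to the Frobenius norm via \Cref{fact: norm_H_bot}, and then invoke \Cref{corollary: num_flipped_neurons} to control the flipping-set sizes uniformly in $i$. One small inaccuracy: you justify the operator-norm bound on the quadratic form by asserting $H^\bot(k)$ is symmetric, but it is not --- its $(i,j)$ entry sums over $r \in \overline{S}_i$, a set that depends on $i$ but not $j$, so $H^\bot_{ij} \neq H^\bot_{ji}$ in general. This does not affect the argument, since $x^\top A x \leq \|x\|_2 \|Ax\|_2 \leq \|A\|_2 \|x\|_2^2$ holds for any square matrix $A$ (with $\|\cdot\|_2$ the largest singular value); you should simply drop the symmetry remark.
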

\begin{proof}
First, we have 
\begin{align*}
    B_2 \leq 2 \eta \norm{y - f(k)}_2^2 \norm{H^\bot(k)}_2.
\end{align*}
Then, by \Cref{fact: norm_H_bot},
\begin{align*}
    \norm{H^\bot(k)}_2^2 \leq \norm{H^\bot(k)}_F^2 \leq \frac{4n}{m^2} \sum_{i=1}^n |\overline{S}_i|^2.
\end{align*}
By \Cref{corollary: num_flipped_neurons}, we have
\begin{align*}
    \Pr[\forall i \in [n]:\ |\overline{S}_i| \leq 2 m c(R_w + R_b)\exp(-B^2/2)] \geq 1 - \delta_0.
\end{align*}
Thus, with probability at least $1 - \delta_0$, 
\begin{align*}
    \norm{H^\bot(k)}_2 \leq 4 n c(R_w + R_b)\exp(-B^2/2).
\end{align*}
\end{proof}

\begin{lemma}\label{lemma: error_b3}
Denote $\delta_0 = n \exp(-\frac{2}{3} cm (R_w + R_b) \exp(-B^2/2))$. 
Then, 
\begin{align*}
    \Pr[B_3 \leq 4c \eta n (R_w + R_b) \exp(-B^2/2)  \norm{y - f(k)}_2^2] \geq 1 - \delta_0.
\end{align*}
\end{lemma}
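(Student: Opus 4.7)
The plan is to bound $B_3 = -2(y-f(k))^\top v_2$ by Cauchy--Schwarz and then control $\norm{v_2}_2$ directly from the definition of the flipping set and the per-step gradient size. Specifically, I would begin by writing $|B_3| \leq 2\norm{y-f(k)}_2 \norm{v_2}_2$, so the whole task reduces to proving $\norm{v_2}_2 \leq 2c\eta n (R_w+R_b)\exp(-B^2/2)\norm{y-f(k)}_2$.

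To bound $|v_{2,i}|$, I would use the $1$-Lipschitzness of ReLU together with $\norm{x_i}_2 \leq 1$ to get, for each $r \in \overline{S}_i$,
\[
|\sigma(\langle w_r(k+1), x_i\rangle - b_r(k+1)) - \sigma(\langle w_r(k), x_i\rangle - b_r(k))| \leq \norm{w_r(k+1) - w_r(k)}_2 + |b_r(k+1) - b_r(k)|.
\]
Then I invoke \Cref{lemma: gradient_bound} to bound each single-step movement by $\eta\sqrt{n/m}\norm{y-f(k)}_2$, so that
\[
|v_{2,i}| \leq \frac{1}{\sqrt{m}}\,|\overline{S}_i|\cdot 2\eta\sqrt{n/m}\,\norm{y-f(k)}_2 = \frac{2\eta\sqrt{n}}{m}\,|\overline{S}_i|\,\norm{y-f(k)}_2.
\]
Squaring and summing over $i \in [n]$ gives
\[
\norm{v_2}_2^2 \leq \frac{4\eta^2 n}{m^2} \Big(\sum_{i=1}^n |\overline{S}_i|^2\Big)\norm{y-f(k)}_2^2,
\]
which has exactly the same structure as the bound $\norm{H^\bot(k)}_F^2 \leq (4n/m^2)\sum_i|\overline{S}_i|^2$ used in \Cref{fact: norm_H_bot,lemma: error_b2}.

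The final step is to plug in the high-probability bound on the size of the flipping sets from \Cref{corollary: num_flipped_neurons}: on the event $\{|\overline{S}_i| \leq 2mc(R_w+R_b)\exp(-B^2/2)\ \forall i\in[n]\}$, which holds with probability at least $1-\delta_0$, we get $\sum_i |\overline{S}_i|^2 \leq 4 n m^2 c^2(R_w+R_b)^2\exp(-B^2)$, and hence
\[
\norm{v_2}_2 \leq 4\eta n c(R_w+R_b)\exp(-B^2/2)\norm{y-f(k)}_2.
\]
Combined with the initial Cauchy--Schwarz step (and absorbing the factor of $2$ into the constant $c$), this yields the claimed bound on $B_3$ on the same event. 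The argument is essentially mechanical once \Cref{lemma: gradient_bound} and \Cref{corollary: num_flipped_neurons} are in hand; the only mild subtlety is keeping the probability bookkeeping aligned with the single-iteration event used in \Cref{lemma: error_b2}, which is why the failure probability $\delta_0$ is identical.
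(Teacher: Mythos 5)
Your proposal is correct and follows essentially the same route as the paper: Cauchy--Schwarz on $B_3 = -2(y-f(k))^\top v_2$, then bound $\norm{v_2}_2$ via $1$-Lipschitzness of ReLU and \Cref{lemma: gradient_bound}, then invoke \Cref{corollary: num_flipped_neurons} to bound $|\overline{S}_i|$ with probability $1-\delta_0$. The factor-of-two slack you note is also present in the paper's stated constant and is absorbed into $c$ exactly as you suggest.
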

\begin{proof}
By Cauchy-Schwarz inequality, we have $B_3 \leq 2 \norm{y - f(k)}_2 \norm{v_2}_2$. 
We have
\begin{align*}
    \norm{v_2}_2^2 &\leq \sum_{i=1}^n \left( \frac{\eta}{\sqrt{m}} \sum_{r \in \overline{S}_i} \left| \inprod{\frac{\partial L}{\partial w_r}, x_i} \right| + \left| \frac{\partial L}{\partial b_r} \right| \right)^2 \\
    &\leq \sum_{i=1}^n \frac{\eta^2}{m} \max_{i \in [n]} \left( \left| \inprod{\frac{\partial L}{\partial w_r}, x_i} \right| + \left| \frac{\partial L}{\partial b_r} \right| \right)^2 |\overline{S}_i|^2 \\
    &\leq n \frac{\eta^2}{m} \left( 2\sqrt{\frac{n}{m}} \norm{f(k) - y}_2 2 m c(R_w + R_b)\exp(-B^2/2) \right)^2 \\
    &= 16c^2 \eta^2 n^2 \norm{y - f(k)}_2^2 (R_w + R_b)^2 \exp(-B^2),
\end{align*}
where the last inequality is by \Cref{lemma: gradient_bound} and \Cref{corollary: num_flipped_neurons} which holds with probability at least $1 - \delta_0$. 
\end{proof}

\subsubsection{Bounding the network update}
\begin{lemma}\label[lemma]{lemma: error_b4}

\begin{align*}
    B_4 \leq C_2^2 \eta^2 n^2 \norm{y - f(k)}_2^2 \exp(-B^2).
\end{align*}
for some constant $C_2$.
\end{lemma}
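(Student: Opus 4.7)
The plan is to decompose $f(k+1) - f(k) = v_1 + v_2$ and bound each piece, so that
\[
B_4 = \|v_1 + v_2\|_2^2 \leq 2\|v_1\|_2^2 + 2\|v_2\|_2^2.
\]
The bound on $\|v_2\|_2^2$ is essentially the calculation already carried out inside the proof of \Cref{lemma: error_b3}, which gives $\|v_2\|_2^2 \leq 16c^2\eta^2 n^2 (R_w+R_b)^2 \exp(-B^2)\|y-f(k)\|_2^2$, and under the parameter settings $R_w+R_b = O(1)$, this is absorbed into the target bound.

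The main work is bounding $\|v_1\|_2^2$. The key observation is that for $r \in S_i$ the activation does \emph{not} flip, so the ReLU behaves linearly across the gradient step:
\[
\sigma(\langle w_r(k+1),x_i\rangle-b_r(k+1)) - \sigma(\langle w_r(k),x_i\rangle-b_r(k)) = \bigl(\langle w_r(k+1)-w_r(k),x_i\rangle - (b_r(k+1)-b_r(k))\bigr)\mathbb{I}_{r,i}(k).
\]
Substituting the gradient-descent updates and applying \Cref{lemma: gradient_bound} gives $\|w_r(k+1)-w_r(k)\|_2 \le \eta\sqrt{n/m}\|y-f(k)\|_2$ and likewise for the bias, so each summand is at most $2\eta\sqrt{n/m}\|y-f(k)\|_2$ in magnitude (using $\|x_i\|_2\le 1$). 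Thus
\[
|v_{1,i}| \leq \frac{1}{\sqrt{m}}\sum_{r \in S_i} 2\eta\sqrt{n/m}\,\|y-f(k)\|_2 \cdot \mathbb{I}_{r,i}(k) \leq \frac{2\eta\sqrt{n}}{m}\,|\mathcal{S}_{\textnormal{on}}(i,k)|\,\|y-f(k)\|_2.
\]

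The critical step is invoking sparsity of the activation pattern: by the concentration of the number of activated neurons at initialization (a standard Chernoff bound, using that each indicator activates with probability $\le \exp(-B^2/2)$) combined with \Cref{corollary: num_flipped_neurons} to control the neurons that flipped between initialization and step $k$, one obtains $|\mathcal{S}_{\textnormal{on}}(i,k)| = O(m \exp(-B^2/2))$ uniformly in $i\in[n]$ and $k\le T$ with probability $1 - e^{-\Omega(n)}$. This is precisely the content of the activation-count lemma following \Cref{lemma: main_text_convergence}. Plugging this estimate into the previous display yields
\[
|v_{1,i}|^2 = O\bigl(\eta^2 n\exp(-B^2)\bigr)\|y-f(k)\|_2^2,
\]
and summing over $i \in [n]$ gives $\|v_1\|_2^2 = O(\eta^2 n^2 \exp(-B^2))\|y-f(k)\|_2^2$. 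Combining with the $\|v_2\|_2^2$ bound yields the claim with an appropriate constant $C_2$.

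The main obstacle is really only the sparsity count: without it, a naive estimate would replace $|\mathcal{S}_{\textnormal{on}}(i,k)|$ by $m$ and lose the $\exp(-B^2)$ factor that the convergence theorem depends on for the relaxed learning-rate condition. The whole point of this tighter $B_4$ bound (compared to \citep{song2021does}) is to exploit sparsity on the no-flip set $S_i$, which is legitimate precisely because the ReLU derivatives collapse and the sum is effectively over the small set of currently-active neurons. The other steps (triangle inequality, Cauchy–Schwarz, the Lipschitzness of ReLU, and invoking \Cref{lemma: gradient_bound}) are routine.
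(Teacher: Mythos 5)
Your proof is correct and rests on exactly the same key ingredients as the paper's: the gradient bound $\|\partial L/\partial w_r\|, |\partial L/\partial b_r| \leq \sqrt{n/m}\,\|y-f(k)\|_2$ (\Cref{lemma: gradient_bound}) and, crucially, the sparsity count $|\mathcal{S}_{\textnormal{on}}(i,k)| = O(m\exp(-B^2/2))$. The only difference is bookkeeping: the paper bounds $\|f(k+1)-f(k)\|_2^2$ in one shot by observing that each coordinate is a sum over the union $\mathcal{S}_{\textnormal{on}}(i,k)\cup\mathcal{S}_{\textnormal{on}}(i,k+1)$ (inactive-at-both-steps neurons contribute nothing), whereas you split into $v_1$ (non-flipping neurons) and $v_2$ (flipping neurons) and control them separately, reusing the $\|v_2\|^2$ estimate from the $B_3$ analysis and noting that for $r\in S_i$ the derivative $\mathbb{I}_{r,i}(k)$ restricts the $v_1$ sum to $S_i\cap\mathcal{S}_{\textnormal{on}}(i,k)\subset\mathcal{S}_{\textnormal{on}}(i,k)$. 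Both routes produce the same $C_2^2\eta^2 n^2\exp(-B^2)$ factor; your extra $(R_w+R_b)^2$ on the $v_2$ piece is harmless since $R_w,R_b\le 1$. So this is a valid, functionally equivalent argument, just organized around the existing $v_1,v_2$ decomposition rather than the direct union-set bound.
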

\begin{proof}
Recall that the definition that $\mathcal{S}_{\textnormal{on}}(i,t) = \{r \in [m]: w_r(t)^\top x_i \geq b_r(t)\}$, i.e., the set of neurons that activates for input $x_i$ at the $t$-th step of gradient descent.  
\begin{align*}
    \norm{f(k+1) - f(k)}_2^2 &\leq \sum_{i=1}^n \left( \frac{\eta}{\sqrt{m}} \sum_{r: r \in \mathcal{S}_{\textnormal{on}}(i, k+1) \cup \mathcal{S}_{\textnormal{on}}(i, k)} \left| \inprod{\frac{\partial L}{\partial w_r}, x_i} \right| + \left| \frac{\partial L}{\partial b_r} \right| \right)^2 \\
    &\leq n \frac{\eta^2}{m} (|\mathcal{S}_{\textnormal{on}}(i, k+1)| + |\mathcal{S}_{\textnormal{on}}(i, k)|)^2 \max_{i \in [n]} \left( \left| \inprod{\frac{\partial L}{\partial w_r}, x_i} \right| + \left| \frac{\partial L}{\partial b_r} \right| \right)^2 \\
    &\leq n \frac{\eta^2}{m} \left( C_2 m \exp(-B^2/2) \cdot  \sqrt{\frac{n}{m}} \norm{y - f(k)}_2 \right)^2 \\
    &\leq C_2^2 \eta^2 n^2 \norm{y - f(k)}_2^2 \exp(-B^2).
\end{align*}
where the third inequality is by \Cref{lemma: number_activated_neuron_init} for some $C_2$. 
\end{proof}

\subsubsection{Putting it all together}
\begin{theorem}[Convergence]\label{lemma: convergence}
Assume $\lambda>0$. 
Let $\eta \leq \frac{\lambda \exp(B^2)}{5C_2^2 n^2}$, $B \in [0, \sqrt{0.5 \log m}]$ and 
\begin{align*}
    m \geq \widetilde{\Omega}\left( \lambda^{-4} n^4 \left( 1 + \left( \exp(-B^2/2) + {1/m} \right) \log^{3} (2mn/\delta) \right) \exp(-B^2) \right).
\end{align*}
Assume $\lambda = \lambda_0 \exp(-B^2/2)$ for some constant $\lambda_0$. 
Then,
\begin{align*}
    \Pr\left[\forall t: \norm{y - f(t)}_2^2 \leq (1 - \eta \lambda / 4)^t \norm{y - f(0)}_2^2 \right] \geq 1 - \delta - e^{-\Omega(n)}.
\end{align*}
\end{theorem}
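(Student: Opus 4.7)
The plan is to run an induction on the iteration index $t$ with the inductive hypothesis that $\norm{y - f(k)}_2^2 \leq (1 - \eta \lambda/4)^k \norm{y - f(0)}_2^2$ for every $k \leq t$. The base case is immediate, and the whole job is to carry the hypothesis from $t$ to $t+1$. The inductive hypothesis lets us invoke \Cref{lemma: weight_bias_movement} to obtain
\begin{align*}
    \norm{w_r(k+1) - w_r(0)}_2,\ |b_r(k+1) - b_r(0)| \;\leq\; D := \frac{8\sqrt{n}\,\norm{y - f(0)}_2}{\sqrt{m}\,\lambda}
\end{align*}
for every $r \in [m]$ and every $k \leq t$. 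Using \Cref{claim: initial_error} to bound $\norm{y - f(0)}_2 = O(\sqrt{n})$ up to the polylog term, and plugging in the assumed width $m \geq \widetilde{\Omega}(\lambda_0^{-4} n^4 \exp(B^2))$ together with $\lambda = \lambda_0 e^{-B^2/2}$, I will verify that $D \leq \min\{1/B, 1\}$, which is exactly the perturbation radius required by all the later lemmas. This is the bookkeeping step where the width requirement in the statement gets pinned down.

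Next, I apply the error decomposition $\norm{y - f(k+1)}_2^2 = \norm{y - f(k)}_2^2 + B_1 + B_2 + B_3 + B_4$ and bound the four terms using the lemmas already proved: \Cref{lemma: error_b1} gives $B_1 \leq -5\eta\lambda\norm{y - f(k)}_2^2/8$ once the width is large enough that $n\cdot 8c(R_w + R_b)e^{-B^2/2} \leq \lambda/8$ (which follows from the $D$ bound and the width hypothesis); \Cref{lemma: error_b2,lemma: error_b3} give $B_2 + B_3 \leq 12c\eta n(R_w + R_b)e^{-B^2/2} \norm{y - f(k)}_2^2$; and the sparsity-sharpened \Cref{lemma: error_b4} gives $B_4 \leq C_2^2 \eta^2 n^2 e^{-B^2} \norm{y - f(k)}_2^2$. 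Summing these and using the permitted learning rate $\eta \leq \lambda e^{B^2}/(5 C_2^2 n^2)$, the $B_4$ term is absorbed into $O(\eta\lambda)$, while the width hypothesis absorbs $B_2 + B_3$ into a further $O(\eta\lambda)$ slack. The combined inequality therefore gives
\begin{align*}
    \norm{y - f(k+1)}_2^2 \;\leq\; \left(1 - \eta\lambda/4\right) \norm{y - f(k)}_2^2,
\end{align*}
closing the induction.

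The final step is to track the probability budget. Each iteration invokes \Cref{lemma: perturbed_ntk} (through \Cref{lemma: error_b1}), \Cref{corollary: num_flipped_neurons} (through \Cref{lemma: error_b2,lemma: error_b3}), and \Cref{claim: initial_error} at $k=0$. Crucially, the events in \Cref{lemma: perturbed_ntk,corollary: num_flipped_neurons} only depend on the random initialization and on the uniform $R_w, R_b$ radius — not on the particular iterate — so a single draw of initialization suffices and no union bound over $t$ is needed. Collecting the failure probabilities then yields the claimed $1 - \delta - e^{-\Omega(n)}$ guarantee.

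\textbf{Main obstacle.} The delicate point is not the induction itself but the tight calibration in the inductive step: one must simultaneously (i) ensure that the perturbation radius $D$ from \Cref{lemma: weight_bias_movement} stays below $\min\{1/B,1\}$, (ii) make the $B_2 + B_3$ contribution strictly smaller than the $|B_1|$ decrease so that the net rate is $\eta\lambda/4$, and (iii) show that the newly analyzed $B_4$ term, which is where the sparsity-enabled relaxation of the learning rate enters, can be absorbed once $\eta$ carries the extra $e^{B^2}$ factor allowed by \Cref{lemma: error_b4}. Balancing these three inequalities is what fixes both the width floor $\widetilde{\Omega}(\lambda_0^{-4} n^4 e^{B^2})$ and the learning rate ceiling $O(\lambda e^{B^2}/n^2)$ appearing in the statement, and is precisely where the improvements over \citep{song2021does} materialize.
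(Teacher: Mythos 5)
Your proposal matches the paper's argument essentially step for step: both proofs combine the four error-decomposition bounds from \Cref{lemma: error_b1,lemma: error_b2,lemma: error_b3,lemma: error_b4}, feed the resulting contraction back into \Cref{lemma: weight_bias_movement} to control the perturbation radius, use \Cref{claim: initial_error} to express that radius in terms of $m$, and then read off the width floor from the constraint $8cn(R_w+R_b)e^{-B^2/2}\le\lambda/8$ and the learning-rate ceiling from absorbing $B_2+B_3+B_4$ into the $\eta\lambda/4$ slack; your observation that the probability events depend only on the initialization and the uniform radius (so no union bound over $t$ is needed) is also exactly how the paper accounts for the failure probability. The only difference is presentational: you make the induction explicit, whereas the paper leaves it implicit in the hypothesis of \Cref{lemma: weight_bias_movement} — the underlying argument is the same.
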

\begin{proof}
From \Cref{lemma: error_b1}, \Cref{lemma: error_b2}, \Cref{lemma: error_b3} and \Cref{lemma: error_b4}, we know with probability at least $1 - 2n^2 \exp(-\frac{2}{3} cm (R_w + R_b) \exp(-B^2/2)) - \delta$, we have
\begin{align*}
    &\norm{y - f(k+1)}_2^2 \\
    &\leq \norm{y - f(k)}_2^2 (1 - 5\eta \lambda / 8 + 12\eta n c(R_w + R_b)\exp(-B^2/2) + C_2^2 \eta^2 n^2 \norm{y - f(k)}_2^2 \exp(-B^2)).
\end{align*}
By \Cref{lemma: weight_bias_movement}, we need 
\begin{align*}
    D_w &= \frac{8\sqrt{n} \norm{y - f(0)}_2}{\sqrt{m} \lambda} \leq R_w ,\\
    D_b &= \frac{8\sqrt{n} \norm{y - f(0)}_2}{\sqrt{m} \lambda} \leq R_b.
\end{align*}
By \Cref{claim: initial_error}, we have
\begin{align*}
    \Pr[\norm{f(0) - y}_2^2 = O\left(n + n \left( \exp(-B^2/2) + {1/m} \right) \log^{3} (2mn/\delta) \right)] \geq 1 - \delta.
\end{align*}
Let $R = \min\{R_w, R_b\},\ D = \max\{D_w, D_b\}$. 
Combine the results we have
\begin{align*}
    R > \Omega(\lambda^{-1} m^{-1/2} n \sqrt{1 + \left( \exp(-B^2/2) + {1/m} \right) \log^{3} (2mn/\delta)}).
\end{align*}
\Cref{lemma: error_b1} requires 
\begin{align*}
    & 8cn(R_w + R_b) \exp(-B^2/2) \leq \lambda/8 \\
    &\Rightarrow R \leq \frac{\lambda \exp(B^2/2)}{128cn}.
\end{align*}
which implies a lower bound on $m$
\begin{align*}
    m \geq \Omega\left( \lambda^{-4} n^4 \left( 1 + \left( \exp(-B^2/2) + {1/m} \right) \log^{3} (2mn/\delta) \right) \exp(-B^2) \right).
\end{align*}
\Cref{lemma: diff_discrete_limit_ntk} further requires a lower bound of $m = \Omega(\lambda^{-1} n \cdot \log(n/\delta))$ which can be ignored. 

\Cref{lemma: perturbed_ntk} further requires $R < \min\{1/ B, 1\}$ which implies
\begin{align*}
    B &< \frac{128cn}{\lambda \exp(B^2/2)} ,\\
    m &\geq \widetilde{\Omega}\left( \lambda^{-4} n^4 \left( 1 + \left( \exp(-B^2/2) + {1/m} \right) \log^{3} (2mn/\delta) \right) \exp(-B^2) \right).
\end{align*}
From Theorem F.1 in \citep{song2021does} we know that $\lambda = \lambda_0 \exp(-B^2/2)$ for some $\lambda_0$ with no dependence on $B$ and $\lambda \exp(B^2/2) \leq 1$. 
Thus, by our constraint on $m$ and $B$, this is always satisfied. 

Finally, to require
\begin{align*}
    12\eta n c(R_w + R_b)\exp(-B^2/2) + C_2^2 \eta^2 n^2 \exp(-B^2) \leq \eta \lambda/4,
\end{align*}
we need $ \eta \leq \frac{\lambda \exp(B^2)}{5C_2^2 n^2}$. 
By our choice of $m, B$, we have
\begin{align*}
    2n^2 \exp(-\frac{2}{3} cm (R_w + R_b) \exp(-B^2/2)) = e^{-\Omega(n)}.
\end{align*}
\end{proof}

\subsection{Bounding the Number of Activated Neurons per Iteration}
First we define the set of activated neurons at iteration $t$ for training point $x_i$ to be
\begin{align*}
    \mathcal{S}_{\textnormal{on}}(i,t) = \{r \in [m]:\ w_r(t)^\top x_i \geq b_r(t)\}.
\end{align*}

\begin{lemma}[Number of Activated Neurons at Initialization]\label{lemma: number_activated_neuron_init}
Assume the choice of $m$ in \Cref{lemma: convergence}. 
With probability at least $1 - e^{-\Omega(n)}$ over the random initialization, we have
\begin{align*}
    |\mathcal{S}_{\textnormal{on}}(i,t)| = O(m \cdot \exp(-B^2/2)),
\end{align*}
for all $0 \leq t \leq T$ and $i \in [n]$. 
And As a by-product, 
\begin{align*}
    \norm{Z(0)}_F^2 \leq 8n \exp(-B^2/2).
\end{align*}
\end{lemma}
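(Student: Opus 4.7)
The plan is to combine a Chernoff-type bound for the activation count at initialization with the corollary bounding the number of flipped neurons over the whole trajectory. First, I would fix $i \in [n]$ and observe that at initialization $w_r(0)^\top x_i \sim \mathcal{N}(0, \|x_i\|_2^2)$ with $\|x_i\|_2 \leq 1$, so the standard Gaussian tail gives $p_i := \Pr[w_r(0)^\top x_i \geq B] \leq \exp(-B^2/2)$. Since $|\mathcal{S}_{\textnormal{on}}(i,0)| = \sum_{r=1}^m \mathbb{I}_{r,i}(0)$ is a sum of i.i.d.\ Bernoullis with mean at most $p_i$, a multiplicative Chernoff bound yields
\[
\Pr\bigl[\,|\mathcal{S}_{\textnormal{on}}(i,0)| > 2m\exp(-B^2/2)\,\bigr] \leq \exp(-c\, m\exp(-B^2/2))
\]
for some constant $c>0$. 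Under the assumption $B \leq \sqrt{0.5\log m}$, we have $m\exp(-B^2/2) \geq m^{3/4}$, and combined with the width requirement $m \geq \widetilde{\Omega}(n^4)$ from \Cref{lemma: convergence}, this is at least $\Omega(n)$. A union bound over $i \in [n]$ therefore gives $|\mathcal{S}_{\textnormal{on}}(i,0)| = O(m\exp(-B^2/2))$ for all $i \in [n]$ with probability $1 - e^{-\Omega(n)}$.

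Next, I would extend the bound from $t=0$ to arbitrary $0 \leq t \leq T$. For any $i$ and $t$, a neuron $r$ that is active at time $t$ is either active at time $0$ or has flipped somewhere in $[0,t]$, so $|\mathcal{S}_{\textnormal{on}}(i,t)| \leq |\mathcal{S}_{\textnormal{on}}(i,0)| + |\overline{S}_i|$. \Cref{corollary: num_flipped_neurons} gives $|\overline{S}_i| \leq 2mc(R_w + R_b)\exp(-B^2/2)$ on the same good event, and since the convergence theorem enforces $R_w + R_b = O(1)$ (in fact much smaller), both terms are $O(m\exp(-B^2/2))$. Adding them preserves the order, yielding the claimed uniform bound.

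Finally, for the $Z(0)$ computation, I would expand the Frobenius norm directly from its definition:
\[
\|Z(0)\|_F^2 = \frac{1}{m} \sum_{i=1}^n \sum_{r=1}^m \mathbb{I}_{r,i}(0)\,\|a_r \tilde{x}_i\|_2^2 = \frac{1}{m}\sum_{i=1}^n (\|x_i\|_2^2 + 1)\,|\mathcal{S}_{\textnormal{on}}(i,0)|.
\]
Using $\|x_i\|_2 \leq 1$ and the initialization bound from the first step, this is at most $\frac{2}{m}\cdot n \cdot 2m\exp(-B^2/2) = 4n\exp(-B^2/2) \leq 8n\exp(-B^2/2)$.

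The main technical point to watch is ensuring that the tail probability $\exp(-c m \exp(-B^2/2))$ is indeed $e^{-\Omega(n)}$ after the union bound over $n$ data points; this requires carefully chaining the choices of $m$ and the range of $B$ in \Cref{lemma: convergence}. Everything else is a straightforward application of Chernoff and the already-established flipping bound, so I expect no substantive obstacle beyond this bookkeeping.
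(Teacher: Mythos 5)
Your proposal is correct and follows essentially the same approach as the paper: Gaussian tail to bound $p_i$, then Bernstein/Chernoff concentration on $|\mathcal{S}_{\textnormal{on}}(i,0)|$, a union bound over $i\in[n]$, and a direct expansion of $\norm{Z(0)}_F^2$. The only organizational difference is that the paper's proof of this lemma handles only $t=0$ and $\norm{Z(0)}_F^2$ (despite the statement's wording), deferring the extension to general $t$ via $|\mathcal{S}_{\textnormal{on}}(i,t)|\le|\mathcal{S}_{\textnormal{on}}(i,0)|+|\overline{S}_i|$ and \Cref{corollary: num_flipped_neurons} to the immediately following ``per iteration'' lemma, whereas you fold that step into the same argument; the underlying technique is identical, and your Frobenius-norm calculation is in fact slightly sharper ($4n e^{-B^2/2}$ vs.\ the paper's looser $8n e^{-B^2/2}$) before being rounded to the stated bound.
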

\begin{proof}
First we bound the number of activated neuron at the initialization. 
We have $\Pr[w_r^\top x_i \geq B] \leq \exp(-B^2/2)$. 
By Bernstein's inequality, 
\begin{align*}
    \Pr[|\mathcal{S}_{\textnormal{on}}(i,0)| \geq m \exp(-B^2/2) + t] \leq \exp\left( -\frac{t^2}{m \exp(-B^2/2) + t/3} \right).
\end{align*}
Take $t = m \exp(-B^2/2)$ we have
\begin{align*}
    \Pr[|\mathcal{S}_{\textnormal{on}}(i,0)| \geq 2m \exp(-B^2/2)] \leq \exp\left( -m \exp(-B^2/2) / 4 \right).
\end{align*}
By a union bound over $i \in [n]$, we have
\begin{align*}
    \Pr[\forall i \in [n]:\ |\mathcal{S}_{\textnormal{on}}(i,0)| \leq 2m \exp(-B^2/2)] \geq 1 - n \exp\left( -m \exp(-B^2/2) / 4 \right).
\end{align*}
Notice that
\begin{align*}
    \norm{Z(0)}_F^2 &\leq \frac{4}{m} \sum_{r=1}^m \sum_{i=1}^n \mathbb{I}_{r,i}(0) \leq 8n \exp(-B^2/2).
\end{align*}
\end{proof}

\begin{lemma}[Number of Activated Neurons per Iteration]
Assume the parameter settings in \Cref{lemma: convergence}. With probability at least $1 - e^{-\Omega(n)}$ over the random initialization, we have
\begin{align*}
    |\mathcal{S}_{\textnormal{on}}(i,t)| = O(m \cdot \exp(-B^2/2))
\end{align*}
for all $0 \leq t \leq T$ and $i \in [n]$. 
\end{lemma}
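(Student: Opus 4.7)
The plan is to combine the initialization bound on the number of activated neurons with the bound on the number of neurons whose activation flips during training. Concretely, for any $i \in [n]$ and $t \in [T]$, the set $\mathcal{S}_{\textnormal{on}}(i,t)$ differs from $\mathcal{S}_{\textnormal{on}}(i,0)$ only at indices $r \in \overline{S}_i$ (the ``flipping set'' from \Cref{def: noflipping_set}), so
\[
    |\mathcal{S}_{\textnormal{on}}(i,t)| \;\leq\; |\mathcal{S}_{\textnormal{on}}(i,0)| \;+\; |\overline{S}_i|.
\]
Thus it suffices to upper-bound each term separately with high probability and take a union bound over $i \in [n]$.

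The first term is already controlled by \Cref{lemma: number_activated_neuron_init}: with probability at least $1 - n\exp(-m\exp(-B^2/2)/4)$, we have $|\mathcal{S}_{\textnormal{on}}(i,0)| \leq 2 m \exp(-B^2/2)$ for every $i \in [n]$. For the second term, I would invoke \Cref{corollary: num_flipped_neurons} with the perturbation radii $R_w, R_b$ supplied by \Cref{lemma: weight_bias_movement}. Under the convergence guarantee from \Cref{lemma: convergence}, the hypothesis $\|y - f(k)\|_2^2 \leq (1-\eta\lambda/4)^k \|y-f(0)\|_2^2$ holds for every $k \leq T$, so \Cref{lemma: weight_bias_movement} gives $R_w, R_b = O(\sqrt{n}\|y-f(0)\|_2 / (\sqrt{m}\,\lambda))$, and combined with the initial error bound \Cref{claim: initial_error} we obtain $R_w + R_b = \widetilde{O}(n/(\lambda\sqrt{m}))$. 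Plugging into \Cref{corollary: num_flipped_neurons} yields
\[
    |\overline{S}_i| \;\leq\; 2 m c (R_w+R_b)\exp(-B^2/2) \;=\; \widetilde{O}\!\left(\frac{n\sqrt{m}}{\lambda}\exp(-B^2/2)\right)
\]
for all $i \in [n]$, with probability at least $1 - n\exp(-\Omega(m(R_w+R_b)\exp(-B^2/2)))$.

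To conclude, I need to check that under the width lower bound $m \geq \widetilde{\Omega}(\lambda^{-4} n^4 \exp(-B^2))$ from \Cref{lemma: convergence}, the flipping term $\widetilde{O}(n\sqrt{m}\exp(-B^2/2)/\lambda)$ is dominated by the initialization term $O(m\exp(-B^2/2))$; this reduces to the inequality $\sqrt{m} \geq \widetilde{\Omega}(n/\lambda)$, which follows directly from the assumed lower bound on $m$. Combining the two estimates gives $|\mathcal{S}_{\textnormal{on}}(i,t)| = O(m\exp(-B^2/2))$ simultaneously for all $i \in [n]$ and $t \in [T]$. Finally, the two failure probabilities $n\exp(-m\exp(-B^2/2)/4)$ and $n\exp(-\Omega(m(R_w+R_b)\exp(-B^2/2)))$ are both $e^{-\Omega(n)}$ given the width and bias assumptions, since $m\exp(-B^2/2) \geq \widetilde{\Omega}(n^4/\lambda^4)\cdot\exp(-3B^2/2) \gg n$ in the admissible regime $B \in [0,\sqrt{0.5\log m}]$. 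The main (minor) obstacle is just bookkeeping: verifying that the exponent in the failure probability for the flipping event remains $\Omega(n)$ once the $\exp(-B^2/2)$ and $1/\sqrt{m}$ factors are multiplied together, which is why the tight lower bound on $m$ from \Cref{lemma: convergence} is needed.
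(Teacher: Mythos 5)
Your proof is correct and follows the same route as the paper's: decompose via $|\mathcal{S}_{\textnormal{on}}(i,t)| \leq |\mathcal{S}_{\textnormal{on}}(i,0)| + |\overline{S}_i|$, bound the initialization term by \Cref{lemma: number_activated_neuron_init}, and bound the flipping set by \Cref{corollary: num_flipped_neurons}. The only cosmetic difference is that you plug in the sharper estimate $R_w + R_b = \widetilde{O}(n/(\lambda\sqrt{m}))$ from \Cref{lemma: weight_bias_movement} and \Cref{claim: initial_error}, whereas the paper simply uses the uniform cap $R_w + R_b \leq 2$ to get $|\overline{S}_i| \leq 4mc\exp(-B^2/2)$; both give $O(m\exp(-B^2/2))$ and a failure probability of $e^{-\Omega(n)}$.
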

\begin{proof}
By \Cref{corollary: num_flipped_neurons} and \Cref{lemma: convergence}, we have
\begin{align*}
    \Pr[\forall i \in [n]:\ |\overline{S}_i| \leq 4 m c\exp(-B^2/2)] \geq 1 - e^{-\Omega(n)}.
\end{align*}
Recall $\overline{S}_i$ is the set of flipped neurons during the entire training process. 
Notice that $|\mathcal{S}_{\textnormal{on}}(i,t)| \leq |\mathcal{S}_{\textnormal{on}}(i,0)| + |\overline{S}_i|$.
Thus, by \Cref{lemma: number_activated_neuron_init}
\begin{align*}
    \Pr[\forall i \in [n]:\ |\mathcal{S}_{\textnormal{on}}(i,t)| = O(m \exp(-B^2/2))] \geq 1 - e^{-\Omega(n)}.
\end{align*}
\end{proof}

\section{Bounding the Smallest Eigenvalue with Structured Data}\label{app: res_least_eig}
\begin{theorem}\label{lemma: restricted_least_eig}
Let $X = (x_1, \ldots, x_n)$ be points in $\R^d$ with $\norm{x_i}_2 = 1$ for all $i \in [n]$ and $w \sim \mathcal{N}(0, I_d)$. Suppose that there exists $\delta \in [0,\sqrt{2}]$ such that
\begin{align*}
    \min_{i \neq j \in [n]} (\norm{x_i - x_j}_2, \norm{x_i + x_j}_2) \geq \delta.
\end{align*}
Let $B \geq 0$. Recall the limit NTK matrix $H^\infty$ defined as
\begin{align*}
    H^\infty_{ij} &:= \E_{w \sim \mathcal{N}(0, I)} \left[ (\inprod{x_i, x_j} + 1) \mathbb{I}(w^\top x_i \geq B, w^\top x_j \geq B) \right].
\end{align*}
Define $p_0 = \Pr[w^\top x_1 \geq B]$ and $p_{ij} = \Pr[w^\top x_i \geq B,\ w^\top x_j \geq B]$ for $i \neq j$.
Define the (data-dependent) region $\mathcal{R} = \{a \in \R^n:\ \sum_{i \neq j} a_i a_j p_{ij} \geq \min_{i' \neq j'} p_{i'j'} \sum_{i \neq j} a_i a_j\} $ and let $\lambda:= \min_{\norm{a}_2=1,\ a \in \mathcal{R}} a^\top H^\infty a$. 
Then, $\lambda \geq \max(0, \lambda')$ where
\begin{align*}
    \lambda' &\geq p_0 - \min_{i\neq j} p_{ij} \\
    &\geq \max\left( \frac{1}{2} - \frac{B}{\sqrt{2\pi}},\ \left(\frac{1}{B} - \frac{1}{B^3} \right)\frac{e^{-B^2/2}}{\sqrt{2\pi}} \right) - e^{-B^2/(2-\delta^2/2)} \frac{\pi - \arctan \left( \frac{\delta\sqrt{1 - \delta^2/4}}{1 - \delta^2/2} \right) }{2\pi}.
\end{align*}
\end{theorem}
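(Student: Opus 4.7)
The plan is to reduce the quadratic form $a^\top H^\infty a$ to a scalar combination involving $p_0$ and the off-diagonal probabilities $p_{ij}$, use the defining inequality of $\mathcal{R}$ to collapse it into $p_0 - \min_{i\neq j} p_{ij}$, and finally bound these two quantities separately.

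For the reduction, writing $\langle x_i, x_j\rangle + 1 = \langle \tilde{x}_i, \tilde{x}_j\rangle$ with $\tilde{x}_i := (x_i, 1)$ yields
\begin{align*}
a^\top H^\infty a = \mathbb{E}_w\Big[\Big\|\textstyle\sum_i a_i \tilde{x}_i \mathbb{I}(w^\top x_i \geq B)\Big\|_2^2\Big] \geq \mathbb{E}_w\Big[\Big(\textstyle\sum_i a_i \mathbb{I}(w^\top x_i \geq B)\Big)^2\Big] = p_0 \|a\|_2^2 + \sum_{i \neq j} a_i a_j p_{ij},
\end{align*}
where the discarded $x_i$-component is itself a Gram matrix, hence nonnegative. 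For $a \in \mathcal{R}$ with $\|a\|_2 = 1$, setting $p_{\min} := \min_{i \neq j} p_{ij} \geq 0$ and noting $\sum_{i \neq j} a_i a_j = (\sum_i a_i)^2 - 1 \geq -1$, the defining inequality of $\mathcal{R}$ gives $\sum_{i \neq j} a_i a_j p_{ij} \geq p_{\min}\sum_{i \neq j} a_i a_j \geq -p_{\min}$. Combined with the automatic $H^\infty \succeq 0$, this yields $\lambda \geq \max(0, p_0 - p_{\min})$.

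For the lower bound on $p_0 = \bar{\Phi}(B)$, I would take the maximum of the elementary bound $\bar{\Phi}(B) \geq 1/2 - B\phi(0) = 1/2 - B/\sqrt{2\pi}$ (from $\phi \leq \phi(0)$ on $[0,B]$) and the classical Mills-ratio lower bound $\bar{\Phi}(B) \geq (1/B - 1/B^3)\phi(B)$ for $B > 0$. The main computation is the upper bound on $p_{ij}$. By spherical symmetry I can project $w$ onto $\mathrm{span}(x_i, x_j)$ and reduce to a two-dimensional standard Gaussian. Writing $\alpha := \langle x_i, x_j\rangle$ and $\gamma := \arccos\alpha$, the event $\{w^\top x_i \geq B\} \cap \{w^\top x_j \geq B\}$ is a wedge in $\mathbb{R}^2$ whose apex $A$ lies at distance $B/\cos(\gamma/2)$ from the origin along the bisector of $x_i, x_j$. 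Checking the range of admissible directions at the apex shows the wedge opens with angle $\pi - \gamma$ (crucially not $\gamma$, as a naive reading of the angle between the two bounding lines would suggest). Using polar coordinates $(r, \phi)$ centered at $A$, expanding $\|y\|^2 = r^2 + 2r\cos\phi\,\|A\| + \|A\|^2$ and factoring out $e^{-\|A\|^2/2} = e^{-B^2/(2\cos^2(\gamma/2))}$, the radial integral evaluates to $1 - a\sqrt{2\pi}\,e^{a^2/2}\bar{\Phi}(a) \in [0,1]$ with $a = B\cos\phi/\cos(\gamma/2) \geq 0$. Bounding by $1$ and integrating the angular variable over the interval of length $\pi - \gamma$ gives $p_{ij} \leq e^{-B^2/(2\cos^2(\gamma/2))}(\pi - \gamma)/(2\pi)$.

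Finally, the data separation $\|x_i \pm x_j\|_2 \geq \delta$ forces $|\alpha| \leq 1 - \delta^2/2$; since the above bound is monotone increasing in $\alpha$, the worst case is $\alpha = 1 - \delta^2/2$, which gives $2\cos^2(\gamma/2) = 2 - \delta^2/2$ and $\tan\gamma = \delta\sqrt{1 - \delta^2/4}/(1 - \delta^2/2)$, matching the claimed form. The main obstacle I anticipate is the wedge-geometry step: correctly identifying the opening angle as $\pi - \gamma$ (rather than the naive $\gamma$) and computing the Gaussian integral at the apex via the Mills-ratio identity. Once this is in place, the remaining steps are routine Gaussian tail estimates and algebraic simplification.
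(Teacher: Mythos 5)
Your proposal is correct and follows essentially the same route as the paper's proof: drop the Gram-matrix piece of $H^\infty$, expand the remaining quadratic form into $p_0\|a\|_2^2 + \sum_{i\neq j}a_ia_j p_{ij}$, invoke the defining inequality of $\mathcal{R}$ together with $\sum_{i\neq j}a_ia_j \geq -\|a\|_2^2$ to reach $p_0 - \min_{i\neq j}p_{ij}$, and then bound $p_0$ and $p_{ij}$ separately by reducing to a two-dimensional Gaussian. The only variation is in the $p_{ij}$ upper bound: the paper performs a Cartesian shift $\mathcal{A} = \mathcal{A}_0 + (B_1,B_2)$ with $B_1=B$, $B_2 = B\sqrt{(1-c)/(1+c)}$ and bounds the cross-term $e^{-B_1 g_1 - B_2 g_2}\le 1$ on $\mathcal{A}_0$, whereas you pass to polar coordinates at the apex and bound the radial integral $1 - a\sqrt{2\pi}\,e^{a^2/2}\bar\Phi(a)\le 1$; both exploit the same geometric fact that the wedge (of opening angle $\pi-\gamma$, as you correctly identify) sits in the half-plane where the inner product with the apex vector is nonnegative, and indeed $\|A\|^2 = B^2/\cos^2(\gamma/2) = 2B^2/(1+\alpha) = B_1^2 + B_2^2$, so the exponential prefactors agree.
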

\begin{proof}
Define $\Delta := \max_{i \neq j} |\inprod{x_i, x_j}|$. 
Then by our assumption, 
\begin{align*}
    & 1 - \Delta = 1 - \max_{i \neq j} |\inprod{x_i, x_j}| = \frac{\min_{i \neq j}(\norm{x_i - x_j}_2^2, \norm{x_i + x_j}_2^2)}{2} \geq \delta^2/2 \\
    &\Rightarrow \Delta \leq 1 - \delta^2/2.
\end{align*}
Further, we define
\begin{align*}
    Z(w) := [x_1 \mathbb{I}(w^\top x_1 \geq B), x_2 \mathbb{I}(w^\top x_2 \geq B), \ldots, x_n \mathbb{I}(w^\top x_n \geq B)] \in \R^{d \times n}.
\end{align*}
Notice that $H^\infty = \E_{w \sim \mathcal{N}(0, I)} \left[Z(w)^\top Z(w) +  \mathbb{I}(Xw \geq B) \mathbb{I}(Xw \geq B)^\top \right]$. 
We need to lower bound 
\begin{align*}
    \min_{\norm{a}_2=1, a \in \mathcal{R}} a^\top H^\infty a &= \min_{\norm{a}_2=1, a \in \mathcal{R}} a^\top \E_{w \sim \mathcal{N}(0, I)} \left[ Z(w)^\top Z(w) \right] a \\
    &\quad + a^\top \E_{w \sim \mathcal{N}(0, I)} \left[ \mathbb{I}(Xw \geq B) \mathbb{I}(Xw \geq B)^\top \right] a \\
    &\geq \min_{\norm{a}_2=1, a \in \mathcal{R}} a^\top \E_{w \sim \mathcal{N}(0, I)} \left[ \mathbb{I}(Xw \geq B) \mathbb{I}(Xw \geq B)^\top \right] a.
\end{align*}
Now, for a fixed $a$, 
\begin{align*}
    & a^\top \E_{w \sim \mathcal{N}(0, I)} \left[ \mathbb{I}(Xw \geq B) \mathbb{I}(Xw \geq B)^\top \right] a \\
    &=  \sum_{i=1}^n a_i^2 \Pr[w^\top x_i \geq B] + \sum_{i \neq j} a_i a_j \Pr[w^\top x_i \geq B,\ w^\top x_j \geq B] \\
    &= p_0 \norm{a}_2^2 + \sum_{i \neq j} a_i a_j p_{ij},
\end{align*}
where the last equality is by $\Pr[w^\top x_1 \geq B] = \ldots = \Pr[w^\top x_n \geq B] = p_0$ which is due to spherical symmetry of standard Gaussian. 
Notice that $\max_{i \neq j} p_{ij} \leq p_0$.
Since $a \in \mathcal{R}$, 
\begin{align*}
    \E_{w \sim \mathcal{N}(0, I)} \left[ (a^\top \mathbb{I}(Xw \geq B))^2 \right] &\geq (p_0 - \min_{i \neq j} p_{ij}) \norm{a}_2^2 + (\min_{i \neq j} p_{ij}) \norm{a}_2^2 + (\min_{i \neq j} p_{ij}) \sum_{i \neq j} a_i a_j \\
    &= (p_0 - \min_{i \neq j} p_{ij}) \norm{a}_2^2 + (\min_{i \neq j} p_{ij}) \left( \sum_i a_i \right)^2.
\end{align*}
Thus,
\begin{align*}
    \lambda &\geq \min_{\norm{a}_2=1, a \in \mathcal{R}} \E_{w \sim \mathcal{N}(0, I)} \left[ (a^\top \mathbb{I}(Xw \geq B))^2 \right] \\
    &\geq \min_{\norm{a}_2=1, a \in \mathcal{R}} (p_0 - \min_{i \neq j} p_{ij}) \norm{a}_2^2 + \min_{\norm{a}_2=1, a \in \mathcal{R}} (\min_{i \neq j} p_{ij}) \left( \sum_i a_i \right)^2 \\
    &\geq p_0 - \min_{i \neq j} p_{ij} .
\end{align*}
Now we need to upper bound
\begin{align*}
    \min_{i \neq j} p_{ij} \leq \max_{i \neq j} p_{ij}.
\end{align*}
We divide into two cases: $B = 0$ and $B > 0$. 
Consider two fixed examples $x_1, x_2$. 
Then, let $v = (I - x_1 x_1^\top) x_2 / \norm{(I - x_1 x_1^\top) x_2}$ and $c = |\inprod{x_1, x_2}|$ \footnote{Here we force $c$ to be positive. Since we are dealing with standard Gaussian, the probability is exactly the same if $c<0$ by symmetry and therefore, we force $c>0$.}.

\textbf{Case 1: $B = 0$.}
First, let us define the region $\mathcal{A}_0$ as
\begin{align*}
    \mathcal{A}_0 = \left\{ (g_1, g_2) \in \R^2:\ g_1 \geq 0,\ g_1 \geq -\frac{\sqrt{1-c^2}}{c} g_2 \right\}.
\end{align*}
Then,
\begin{align*}
    \Pr[w^\top x_1 \geq 0,\ w^\top x_2 \geq 0] &= \Pr[w^\top x_1 \geq 0,\ w^\top(c x_1 + \sqrt{1 - c^2} v) \geq 0] \\
    &= \Pr[g_1 \geq 0,\ c g_1 + \sqrt{1 - c^2} g_2 \geq 0] \\
    &= \Pr[\mathcal{A}_0] \\
    &= \frac{\pi - \arctan \left( \frac{\sqrt{1 - c^2}}{|c|} \right) }{2\pi} \\
    &\leq \frac{\pi - \arctan \left( \frac{\sqrt{1 - \Delta^2}}{|\Delta|} \right) }{2\pi},
\end{align*}
where we define $g_1:= w^\top x_1$ and $g_2 := w^\top v$ and the second equality is by the fact that since $x_1$ and $v$ are orthonormal, $g_1$ and $g_2$ are two independent standard Gaussian random variables; the last inequality is by $\arctan$ is a monotonically increasing function and $\frac{\sqrt{1-c^2}}{|c|}$ is a decreasing function in $|c|$ and $|c| \leq \Delta$. Thus,
\begin{align*}
    & \min_{i \neq j} p_{ij} \leq \max_{i \neq j} p_{ij} \leq \frac{\pi - \arctan\left( \frac{\sqrt{1 - \Delta^2}}{|\Delta|} \right)}{2\pi} .
\end{align*}

\textbf{Case 2: $B > 0$.}
First, let us define the region
\begin{align*}
    \mathcal{A} = \left\{(g_1, g_2) \in \R^2:\ g_1 \geq B,\ g_1 \geq \frac{B}{c} - \frac{\sqrt{1 - c^2}}{c} g_2 \right\}.
\end{align*}
Then, following the same steps as in case 1, we have
\begin{align*}
    \Pr[w^\top x_1 \geq B,\ w^\top x_2 \geq B] &= \Pr[g_1 \geq B,\ c g_1 + \sqrt{1 - c^2} g_2 \geq B] = \Pr[\mathcal{A}].
\end{align*}
Let $B_1 = B$ and $B_2 = B \sqrt{\frac{1-c}{1+c}}$. 
Further, notice that $\mathcal{A} = \mathcal{A}_0 + (B_1, B_2)$. Then,
\begin{align*}
    \Pr[\mathcal{A}] &= \iint_{(g_1,g_2) \in \mathcal{A}} \frac{1}{2\pi} \exp\left\{ -\frac{g_1^2 + g_2^2}{2} \right\}\ dg_1\ dg_2 \\
    &= \iint_{(g_1, g_2) \in \mathcal{A}_0} \frac{1}{2\pi} \exp\left\{ -\frac{(g_1+B_1)^2 + (g_2 + B_2)^2}{2} \right\}\ dg_1\ dg_2 \\
    &= e^{-(B_1^2 + B_2^2)/2} \iint_{(g_1, g_2) \in \mathcal{A}_0} \frac{1}{2\pi} \exp\left\{ -B_1 g_1 - B_2 g_2  \right\} \exp\left\{ -\frac{g_1^2 + g_2^2}{2} \right\}\ dg_1\ dg_2.
\end{align*}
Now, $B_1 g_1 + B_2 g_2 = B g_1 + B \sqrt{\frac{1-c}{1+c}} g_2 \geq 0$ always holds if and only if $g_1 \geq - \sqrt{\frac{1-c}{1+c}} g_2$. 
Define the region $\mathcal{A}_+$ to be
\begin{align*}
    \mathcal{A}_+ = \left\{ (g_1, g_2) \in \R^2:\ g_1 \geq 0,\ g_1 \geq - \sqrt{\frac{1-c}{1+c}} g_2 \right\}.
\end{align*}
Observe that
\begin{align*}
    \sqrt{\frac{1-c}{1+c}} \leq \frac{\sqrt{1-c^2}}{c} = \frac{\sqrt{(1-c)(1+c)}}{c} \Leftrightarrow c \leq 1 + c.
\end{align*}
Thus, $\mathcal{A}_0 \subset \mathcal{A}_+$. Therefore,
\begin{align*}
    \Pr[\mathcal{A}] &\leq e^{-(B_1^2 + B_2^2)/2} \iint_{(g_1, g_2) \in \mathcal{A}_0} \frac{1}{2\pi} \exp\left\{ -\frac{g_1^2 + g_2^2}{2} \right\}\ dg_1\ dg_2 \\
    &= e^{-(B_1^2 + B_2^2)/2} \Pr[\mathcal{A}_0] \\
    &= e^{-(B_1^2 + B_2^2)/2} \frac{\pi - \arctan \left( \frac{\sqrt{1 - c^2}}{|c|} \right) }{2\pi} \\
    &\leq e^{-B^2/(1+\Delta)} \frac{\pi - \arctan \left( \frac{\sqrt{1 - \Delta^2}}{|\Delta|} \right) }{2\pi}.
\end{align*}
Finally, we need to lower bound $p_0$. 
This can be done in two ways: when $B$ is small, we apply Gaussian anti-concentration bound and when $B$ is large, we apply Gaussian tail bounds. Thus,
\begin{align*}
    p_0 = \Pr[w^\top x_1 \geq B] \geq \max\left( \frac{1}{2} - \frac{B}{\sqrt{2\pi}},\ \left(\frac{1}{B} - \frac{1}{B^3} \right)\frac{e^{-B^2/2}}{\sqrt{2\pi}} \right).
\end{align*}
Combining the lower bound of $p_0$ and upper bound on $\max_{i\neq j} p_{ij}$ we have
\begin{align*}
    &\lambda \geq p_0 - \min_{i\neq j} p_{ij} \\
    &\quad \geq \max\left( \frac{1}{2} - \frac{B}{\sqrt{2\pi}},\ \left(\frac{1}{B} - \frac{1}{B^3} \right)\frac{e^{-B^2/2}}{\sqrt{2\pi}} \right) - e^{-B^2/(1+\Delta)} \frac{\pi - \arctan \left( \frac{\sqrt{1 - \Delta^2}}{|\Delta|} \right) }{2\pi}.
\end{align*}
Applying $\Delta \leq 1 - \delta^2/2$ and noticing that $H^\infty$ is positive semi-definite gives our final result. 
\end{proof}

\section{Generalization}\label{app: generalization}
\subsection{Rademacher Complexity}
In this section, we would like to compute the Rademacher Complexity of our network. 
Rademacher complexity is often used to bound the deviation from empirical risk and true risk (see, e.g. \citep{shalev2014understanding}.) 
\begin{definition}[Empirical Rademacher Complexity]\label{def: rademacher_complexity}
Given $n$ samples $S$, the \emph{empirical Rademacher complexity} of a function class $\mathcal{F}$, where $f: \R^d \rightarrow \R$ for $f \in \mathcal{F}$, is defined as 
\begin{align*}
    \mathcal{R}_S(\mathcal{F}) = \frac{1}{n} \E_{\eps} \left[ \sup_{f \in \mathcal{F}} \sum_{i=1}^n \eps_i f(x_i) \right]
\end{align*}
where $\eps = (\eps_1, \ldots, \eps_n)^\top$ and $\eps_i$ is an i.i.d Rademacher random variable. 
\end{definition}
\begin{theorem}[\citep{shalev2014understanding}]\label{thm: rademacher_concentration}
Suppose the loss function $\ell(\cdot, \cdot)$ is bounded in $[0,c]$ and is $\rho$-Lipschitz in the first argument. 
Then with probability at least $1 - \delta$ over sample $S$ of size $n$:
\begin{align*}
    \sup_{f \in \mathcal{F}} L_{\mathcal{D}}(f) - L_S(f) \leq 2\rho \mathcal{R}_S(\mathcal{F}) + 3c \sqrt{\frac{\log(2/\delta)}{2n}}.
\end{align*}
\end{theorem}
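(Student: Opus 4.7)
The plan is to prove this via the standard three-step recipe: bounded differences concentration, symmetrization, and Talagrand contraction. Define $\Phi(S) := \sup_{f \in \mathcal{F}} (L_{\mathcal{D}}(f) - L_S(f))$. First, since $\ell \in [0,c]$, replacing one coordinate of $S$ changes $L_S(f)$ by at most $c/n$ uniformly in $f$, so $|\Phi(S) - \Phi(S')| \leq c/n$ whenever $S, S'$ differ in a single example. McDiarmid's inequality then gives, with probability at least $1 - \delta/2$,
\[
\Phi(S) \leq \E_S[\Phi(S)] + c\sqrt{\frac{\log(2/\delta)}{2n}}.
\]

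Next I would bound $\E_S[\Phi(S)]$ by the standard symmetrization trick: introduce an independent ghost sample $S' = \{(x'_i, y'_i)\}_{i=1}^n$ drawn from $\mathcal{D}^n$, write $L_{\mathcal{D}}(f) = \E_{S'}[L_{S'}(f)]$, pass the supremum inside the expectation via Jensen, and then inject i.i.d.\ Rademacher signs $\epsilon_i \in \{\pm 1\}$ using the exchangeability of the pairs $((x_i,y_i),(x'_i,y'_i))$. The triangle inequality on the Rademacher-weighted sum over $S$ and $S'$ yields
\[
\E_S[\Phi(S)] \leq 2\, \E_S \mathcal{R}_S(\ell \circ \mathcal{F}),
\]
where $\ell \circ \mathcal{F}$ denotes the class of functions $(x,y) \mapsto \ell(f(x), y)$ for $f \in \mathcal{F}$. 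Then, because $\ell(\cdot, y_i)$ is $\rho$-Lipschitz in its first argument for every fixed $y_i$, Talagrand's contraction lemma yields $\mathcal{R}_S(\ell \circ \mathcal{F}) \leq \rho\, \mathcal{R}_S(\mathcal{F})$ for every realization of $S$.

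Finally, since the bound I want on the right-hand side involves the empirical Rademacher complexity $\mathcal{R}_S(\mathcal{F})$ rather than $\E_S \mathcal{R}_S(\mathcal{F})$, I would invoke McDiarmid a second time on the function $S \mapsto \mathcal{R}_S(\ell \circ \mathcal{F})$, which also satisfies the bounded-differences property with constant of order $c/n$ by the same argument. Requiring each McDiarmid event to hold with probability $1 - \delta/2$ and combining via a union bound accumulates two concentration slacks of size $c\sqrt{\log(2/\delta)/(2n)}$ together with the factor $2$ from symmetrization, giving the stated additive term $3c\sqrt{\log(2/\delta)/(2n)}$. The only genuinely nontrivial step is Talagrand's contraction, which is not a direct consequence of Cauchy--Schwarz but requires the inductive coordinate-by-coordinate argument; everything else is a routine verification of bounded differences and of the symmetrization identity. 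Since the theorem is exactly \citep[Theorem 26.5]{shalev2014understanding}, I would simply cite their proof rather than reproduce it in full.
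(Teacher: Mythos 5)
The paper supplies no proof of this result; it is quoted verbatim from \citep{shalev2014understanding} and invoked as a black box, so there is no ``paper's own argument'' to compare against. Your reconstruction is the standard textbook proof and is correct: McDiarmid on $\Phi(S)=\sup_f(L_{\mathcal D}(f)-L_S(f))$ with bounded-differences constant $c/n$, symmetrization to get $\E[\Phi]\le 2\,\E_S\mathcal R_S(\ell\circ\mathcal F)$, a second application of McDiarmid to $S\mapsto\mathcal R_S(\ell\circ\mathcal F)$ (again with constant $c/n$, since each $\ell(f(\cdot),\cdot)\in[0,c]$), and Talagrand contraction $\mathcal R_S(\ell\circ\mathcal F)\le\rho\,\mathcal R_S(\mathcal F)$. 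Your accounting of the constant is right once one reads it carefully: the second McDiarmid slack sits inside the factor of $2$ from symmetrization, contributing $2c\sqrt{\log(2/\delta)/(2n)}$, and the first contributes $c\sqrt{\log(2/\delta)/(2n)}$, for $3c\sqrt{\log(2/\delta)/(2n)}$ in total under a $\delta/2+\delta/2$ union bound. One small but important ordering choice you made correctly is to apply the second McDiarmid to $\mathcal R_S(\ell\circ\mathcal F)$ rather than to $\mathcal R_S(\mathcal F)$ and to defer contraction to the end; the bounded-differences constant of the latter would be governed by the range of $f$ rather than by $c$, so contracting too early would break the concentration step.
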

In order to get meaningful generalization bound via Rademacher complexity, previous results, such as \citep{arora2019fine, song2019quadratic}, multiply the neural network by a scaling factor $\kappa$ to make sure the neural network output something small at the initialization, which requires at least modifying all the previous lemmas we already established. 
We avoid repeating our arguments by utilizing symmetric initialization to force the neural network to output exactly zero for any inputs at the initialization. \footnote{While preparing the manuscript, the authors notice that this can be alternatively solved by reparameterized the neural network by $f(x; W) - f(x; W_0)$ and thus minimizing the following objective $L = \frac{1}{2} \sum_{i=1}^n (f(x_i; W) - f(x_i; W_0) - y_i)^2$. The corresponding generalization is the same since Rademacher complexity is invariant to translation. However, since the symmetric initialization is widely adopted in theory literature, we go with symmetric initialization here. }
\begin{definition}[Symmetric Initialization]
For a one-hidden layer neural network with $2m$ neurons, the network is initialized as the following 
\begin{enumerate}
    \item For $r \in [m]$, initialize $w_r \sim \mathcal{N}(0, I)$ and $a_r \sim \textnormal{Uniform}(\{-1, 1\})$. 
    \item For $r \in \{m + 1, \ldots, 2m\}$, let $w_r = w_{r-m}$ and $a_r = -a_{r - m}$. 
\end{enumerate}
\end{definition}
It is not hard to see that all of our previously established lemmas hold including expectation and concentration. 
The only effect this symmetric initialization brings is to worse the concentration by a constant factor of $2$ which can be easily addressed. 
For detailed analysis, see \citep{munteanu2022bounding}.

In order to state our final theorem, we need to use \Cref{def: non-degenerate_dist}. 
Now we can state our theorem for generalization. 
\begin{theorem}
Fix a failure probability $\delta \in (0,1)$ and an accuracy parameter $\eps \in (0, 1)$. 
Suppose the training data $S = \{(x_i,y_i)\}_{i=1}^n$ are i.i.d. samples from a $(\lambda, \delta, n)$-non-degenerate distribution $\mathcal{D}$. 
Assume the settings in \Cref{lemma: convergence} except now we let 
\begin{align*}
    m \geq \widetilde{\Omega}\left( \lambda^{-4} n^6 \left( 1 + \left( \exp(-B^2/2) + {1/m} \right) \log^{3} (2mn/\delta) \right) \exp(-B^2) \right).
\end{align*}
Consider any loss function $\ell: \R \times \R \rightarrow [0,1]$ that is $1$-Lipschitz in its first argument. 
Then with probability at least $1 - 2\delta - e^{-\Omega(n)}$ over the symmetric initialization of $W(0) \in \R^{m \times d}$ and $a \in \R^m$ and the training samples, the two layer neural network $f(W(k), b(k),a)$ trained by gradient descent for $k \geq \Omega(\frac{1}{\eta \lambda} \log \frac{n \log (1/\delta)}{\eps})$ iterations has population loss $L_{\mathcal{D}}(f) = \E_{(x,y) \sim \mathcal{D}}[\ell(f(x), y)]$ upper bounded as
\begin{align*}
    L_{\mathcal{D}}(f(W(k), b(k), a)) \leq \sqrt{\frac{y^\top (H^\infty)^{-1} y \cdot 32 \exp(-B^2/2)}{n}} + \tilde{O}\left(\frac{1}{n^{1/2}} \right).
\end{align*}
\end{theorem}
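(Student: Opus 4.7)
The plan is to follow the standard localized Rademacher complexity strategy, adapted with symmetric initialization to avoid the $\kappa$-rescaling complication and to exploit sparsity. First, I would leverage the symmetric initialization in \Cref{def: symmetric_init}: for every input $x$, the contributions from the paired neurons $r$ and $r+m$ cancel exactly, so $f(x; W(0), b(0), a) = 0$. This removes the non-diminishing $O(\sqrt{n})$ initial-output term that would otherwise block the Rademacher bound, without having to redo the convergence analysis with a scaling factor $\kappa$.

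Next, I would invoke \Cref{lemma: main_text_convergence} (which still applies up to constants under symmetric initialization, as noted citing \citep{munteanu2022bounding}) to obtain linear convergence of the training loss, and then apply \Cref{lemma: main_text_W_fro_diff} to conclude that for all $t \ge 0$,
\begin{align*}
\|[W,b](t) - [W,b](0)\|_F \le \sqrt{y^\top (H^\infty)^{-1} y} + \tilde O\!\left(\tfrac{n}{\lambda\, m^{1/4}} + \tfrac{n^{3/2}}{\lambda^{3/2} m^{1/4}} + \tfrac{n}{\lambda^2 \sqrt{m}} + \tfrac{n^2}{\lambda^3 \sqrt m}\right).
\end{align*}
The enlarged width $m \ge \tilde\Omega(\lambda^{-6} n^6 \exp(-B^2))$ is picked precisely so that each error term above is $\tilde O(1/\sqrt{n})$ after one takes into account the $\exp(-B^2/2)$ sparsity factor below.

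Having localized the trained parameters to a Frobenius ball of this radius, I would then define the function class
\begin{align*}
\mathcal{F} = \{\, x \mapsto f(x; W, b, a) \;:\; \|[W,b] - [W(0),b(0)]\|_F \le R_F\,\},
\end{align*}
with $R_F$ equal to the right-hand side above, and bound its empirical Rademacher complexity. The key computation is standard: using $\tfrac{\partial f}{\partial [w_r,b_r]} = \tfrac{1}{\sqrt m} a_r\, \mathbb{I}_{r,i}\, \tilde x_i$ and a linearization around initialization, one gets $\mathcal{R}_S(\mathcal{F}) \le \tfrac{R_F}{n}\,\|Z(0)\|_F + (\text{linearization error})$. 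Here is where sparsity pays off: \Cref{lemma: number_activated_neuron_init} gives $\|Z(0)\|_F^2 \le 8n \exp(-B^2/2)$, so the dominant term becomes
\begin{align*}
\frac{R_F\sqrt{8n\exp(-B^2/2)}}{n} \le \sqrt{\frac{y^\top (H^\infty)^{-1} y \cdot 8 e^{-B^2/2}}{n}} + \tilde O\!\left(\frac{e^{-B^2/4}}{\sqrt n}\right),
\end{align*}
which matches the bound on $\mathcal{R}_S(\mathcal F)$ stated in \Cref{thm: main_text_generalization}. The linearization error is controlled by \Cref{lemma: perturbed_ntk}, which lets one replace $Z(t)$ by $Z(0)$ up to an additive term proportional to $\sqrt{R\,\exp(-B^2/2)}$, and this is absorbed into the $\tilde O(1/\sqrt n)$ slack by the width choice.

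Finally, I would apply \Cref{thm: rademacher_concentration} to the $1$-Lipschitz, $[0,1]$-valued loss $\ell$, giving $L_{\mathcal D}(f) \le L_S(f) + 2\mathcal{R}_S(\mathcal F) + 3\sqrt{\log(2/\delta)/(2n)}$; the training loss $L_S(f)$ is driven to $\le \varepsilon/n$ by the linear-rate convergence after $k \ge \Omega(\tfrac{1}{\eta\lambda}\log\tfrac{n\log(1/\delta)}{\varepsilon})$ steps, which contributes only $\tilde O(1/\sqrt n)$ to $L_{\mathcal D}$ via $1$-Lipschitzness. The main obstacle I expect is not any single step but the careful accounting of sparsity: propagating the $\exp(-B^2/2)$ factor consistently through the activation count in $\|Z(0)\|_F$, through the flipping bound in \Cref{lemma: main_text_bound_flipping}, and through the enlarged $m$ requirement, so that the final $\exp(-B^2/2)$ appears inside the square root of the leading term rather than getting lost to a loose $\ell_1$-$\ell_2$ bound. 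The union bound over initialization randomness, data sampling (non-degeneracy), and the $e^{-\Omega(n)}$ events from \Cref{lemma: main_text_convergence} then gives the $1 - 2\delta - e^{-\Omega(n)}$ probability.
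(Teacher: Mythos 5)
Your proposal follows essentially the same route as the paper: symmetric initialization to zero out the network at time $0$, the Frobenius-norm bound from \Cref{lemma: W_fro_diff} to localize the trained parameters, the linearized Rademacher complexity bound with $\|Z(0)\|_F \leq \sqrt{8n\exp(-B^2/2)}$ from \Cref{lemma: number_activated_neuron_init}, and the concentration result of \Cref{thm: rademacher_concentration}. One small misattribution worth flagging: the linearization error in the Rademacher bound (\Cref{lemma: rademacher_complexity_fixed_R}) is controlled by the activation-flipping count from \Cref{corollary: num_flipped_neurons}, whereas the $\|Z(t)-Z(0)\|_F$ estimate from \Cref{lemma: perturbed_ntk} enters instead through the analysis of the Frobenius radius in \Cref{lemma: W_fro_diff}.
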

\begin{proof}
First, we need to bound $L_S$. 
After training, we have $\norm{f(k) - y}_2 \leq \eps < 1$, and thus
\begin{align*}
    L_S(f(W(k), b(k), a)) &= \frac{1}{n} \sum_{i=1}^n [\ell(f_i(k), y_i) - \ell(y_i, y_i)] \\
    &\leq \frac{1}{n} \sum_{i=1}^n |f_i(k) - y_i| \\
    &\leq \frac{1}{\sqrt{n}} \norm{f(k) - y}_2 \\
    &\leq \frac{1}{\sqrt{n}}.
\end{align*}

By \Cref{thm: rademacher_concentration}, we know that 
\begin{align*}
    L_{\mathcal{D}}(f(W(k), b(k), a)) \leq  &~ L_S(f(W(k), b(k), a)) + 2{\cal R}_S({\cal F}) + \tilde{O}(n^{-1/2})\\
    \leq &~ 2{\cal R}_S({\cal F}) + \tilde{O}(n^{-1/2}).
\end{align*}
Then, by \Cref{thm: rademacher_gd}, we get that for sufficiently large $m$, 
\begin{align*}
    \mathcal{R}_S(\mathcal{F}) \leq &~  \sqrt{\frac{y^\top (H^\infty)^{-1} y \cdot 8 \exp(-B^2/2)}{n}} + \tilde{O}\left(\frac{\exp(-B^2/4)}{n^{1/2}} \right)\\
    \leq &~ \sqrt{\frac{y^\top (H^\infty)^{-1} y \cdot 8 \exp(-B^2/2)}{n}} + \tilde{O}\left(\frac{1}{n^{1/2}} \right),
\end{align*}
where the last step follows from $B>0$.

Therefore, we conclude that:
\begin{align*}
    L_{\mathcal{D}}(f(W(k), b(k), a)) \leq \sqrt{\frac{y^\top (H^\infty)^{-1} y \cdot 32 \exp(-B^2/2)}{n}} + \tilde{O}\left(\frac{1}{n^{1/2}} \right).
\end{align*}
\end{proof}

\begin{theorem}\label{thm: rademacher_gd}
Fix a failure probability $\delta \in (0,1)$. 
Suppose the training data $S = \{(x_i,y_i)\}_{i=1}^n$ are i.i.d. samples from a $(\lambda, \delta, n)$-non-degenerate distribution $\mathcal{D}$. 
Assume the settings in \Cref{lemma: convergence} except now we let
\begin{align*}
    m &\geq \widetilde{\Omega}\left( \lambda^{-6} n^6 \left( 1 + \left( \exp(-B^2/2) + {1/m} \right) \log^{3} (2mn/\delta) \right) \exp(-B^2) \right).
\end{align*}
Denote the set of one-hidden-layer neural networks trained by gradient descent as $\mathcal{F}$. 
Then with probability at least $1 - 2\delta - e^{-\Omega(n)}$ over the randomness in the symmetric initialization and the training data, the set $\mathcal{F}$ has empirical Rademacher complexity bounded as
\begin{align*}
    \mathcal{R}_S(\mathcal{F}) &\leq \sqrt{\frac{y^\top (H^\infty)^{-1} y \cdot 8 \exp(-B^2/2)}{n}} + \tilde{O}\left(\frac{\exp(-B^2/4)}{n^{1/2}} \right).
\end{align*}
\end{theorem}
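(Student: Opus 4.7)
The plan is to upper bound $\mathcal{R}_S(\mathcal{F})$ by linearizing the trained network around its initialization, using symmetric initialization to kill the value at $t=0$, and then inserting the Frobenius-norm weight-movement bound from \Cref{lemma: main_text_W_fro_diff}. Concretely, symmetric initialization forces $f(x;W(0),b(0),a)\equiv 0$ identically in $x$, so for every $(W,b)=(W(k),b(k))$ reachable in $\mathcal{F}$ I may write
$$f(x_i;W,b,a) \;=\; \langle \vec{[W,b]-[W,b](0)},\; Z(0)_{:,i} \rangle + e_i(W,b),$$
where $Z(0)_{:,i}$ is the $i$-th column of $Z(0)$ (so $Z(0)^\top Z(0)=H(0)$) and $e_i$ collects the contribution of the neurons whose activation for $x_i$ has flipped between iteration $0$ and iteration $k$. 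Summing against Rademacher signs $\varepsilon$ and applying Cauchy--Schwarz yields
$$\sum_{i=1}^n \varepsilon_i f(x_i;W,b,a) \;\leq\; \norm{[W,b]-[W,b](0)}_F \cdot \norm{Z(0)\varepsilon}_2 \;+\; \Big|\sum_{i=1}^n \varepsilon_i e_i\Big|,$$
and the supremum over $\mathcal{F}$ enters only through the $R$-neighborhood control, which is uniform in $(W,b)$.

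For the linear term I would invoke \Cref{lemma: main_text_W_fro_diff} to get $\norm{[W,b]-[W,b](0)}_F \leq \sqrt{y^\top(H^\infty)^{-1}y} + \tilde{O}(\text{lower order})$, and use Jensen's inequality to pass from $\E_\varepsilon\norm{Z(0)\varepsilon}_2$ to $\sqrt{\E_\varepsilon\norm{Z(0)\varepsilon}_2^2}=\sqrt{\tr(H(0))}\leq\norm{Z(0)}_F$. The critical input is then \Cref{lemma: number_activated_neuron_init}, which gives with high probability $\norm{Z(0)}_F^2 \leq 8n\exp(-B^2/2)$, since only an $\exp(-B^2/2)$ fraction of neurons is active at initialization. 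Dividing by $n$ reproduces the principal term $\sqrt{y^\top(H^\infty)^{-1}y\cdot 8\exp(-B^2/2)/n}$, together with a $\tilde{O}(\exp(-B^2/4)/\sqrt{n})$ contribution from the lower-order piece of the Frobenius bound.

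For the residual $e_i$ I would use that each flipped neuron contributes at most $O(R/\sqrt{m})$ to $f(x_i;W,b,a)$, since $|\sigma(u)-\sigma(v)|\leq R_w+R_b=R$ with the $1/\sqrt{m}$ prefactor; meanwhile \Cref{corollary: num_flipped_neurons} bounds the number of such flipped neurons by $O(mR\exp(-B^2/2))$ with high probability. Hence $|e_i| = O(\sqrt{m}\,R^2 \exp(-B^2/2))$ uniformly in $i$; bounding $|\sum_i \varepsilon_i e_i|\leq n \max_i |e_i|$ and dividing by $n$ yields the same per-sample quantity, into which I substitute the movement bound $R=\widetilde{O}(n/(\lambda\sqrt{m}))$ from \Cref{lemma: weight_bias_movement} and \Cref{claim: initial_error}. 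The width requirement $m\geq \widetilde{\Omega}(\lambda^{-6}n^6\exp(-B^2))$ is chosen precisely to compress this error into $\widetilde{O}(\exp(-B^2/4)/\sqrt{n})$, matching the advertised residual.

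The main obstacle is keeping the linearization error sharp without inflating the width requirement: a naive $\ell_1$-$\ell_2$ bound on $|e_i|$ would cost an extra $\widetilde{O}(n)$ factor and push $m$ far beyond $\lambda^{-6}n^6\exp(-B^2)$. We crucially exploit the improved activation-flipping probability in \Cref{lemma: main_text_bound_flipping}, with its $\exp(-B^2/2)$ factor in place of the $\min$ used in \citep{song2021does}, and the symmetric initialization, which eliminates the nonvanishing $f(0)$ contribution that would otherwise require a $\kappa$-rescaling argument as in \citep{arora2019fine,song2019quadratic} and force the width lower bound to depend polynomially on additional factors of $1/\kappa$.
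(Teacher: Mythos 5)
Your proposal is correct and follows essentially the same route as the paper. You linearize the network around the symmetric initialization (so $f(\cdot;W(0),b(0),a)\equiv 0$), split $\sum_i\varepsilon_i f(x_i)$ into a term $\langle \vec{[W,b]-[W,b](0)},Z(0)\varepsilon\rangle$ handled by Cauchy--Schwarz and Jensen against $\norm{Z(0)}_F^2\le 8n\exp(-B^2/2)$ from \Cref{lemma: number_activated_neuron_init}, and a flipped-neuron residual controlled via \Cref{corollary: num_flipped_neurons}, and then plug in the Frobenius movement bound from \Cref{lemma: W_fro_diff} together with $R=\widetilde{O}(n/(\lambda\sqrt{m}))$ to absorb the lower-order pieces into $\widetilde{O}(\exp(-B^2/4)/\sqrt{n})$; this is exactly the content of the paper's \Cref{lemma: rademacher_complexity_fixed_R} combined with the quantitative bookkeeping in the proof of \Cref{thm: rademacher_gd}, which also needs $\lambda=\lambda_0\exp(-B^2/2)\le 1$ in the final comparison.
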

{Note that the only extra requirement we make on $m$ is the $(n/\lambda)^6$ dependence instead of $(n/\lambda)^4$ which is needed for convergence. 
The dependence of $m$ on $n$ is significantly better than previous work \citep{song2019quadratic} where the dependence is $n^{14}$. We take advantage of our initialization and new analysis to improve the dependence on $n$. }


\begin{proof}
Let $R_w$ ($R_b$) denotes the maximum distance moved any any neuron weight (bias), the same role as $D_w$ ($D_b$) in \Cref{lemma: weight_bias_movement}.
From \Cref{lemma: weight_bias_movement} and \Cref{claim: initial_error}, and we have
\begin{align*}
    \max(R_w, R_b) \leq O\left( \frac{n \sqrt{1 + (\exp(-B^2/2) + 1/m) \log^3(2mn/\delta)} }{\sqrt{m} \lambda} \right).
\end{align*}
The rest of the proof depends on the results from \Cref{lemma: rademacher_complexity_fixed_R} and \Cref{lemma: W_fro_diff}. 
Let $R := \norm{[W,b](k) - [W,b](0)}_F$.
By \Cref{lemma: rademacher_complexity_fixed_R} we have
\begin{align*}
    & \mathcal{R}_S(\mathcal{F}_{R_w, R_b, R}) \\
    &\leq R \sqrt{\frac{8 \exp(-B^2/2)}{n}} + 4c (R_w + R_b)^2 \sqrt{m} \exp(-B^2/2) \\
    &\leq R \sqrt{\frac{8 \exp(-B^2/2)}{n}} + O\left( \frac{n^2 (1 + (\exp(-B^2/2) + 1/m) \log^3(2mn/\delta)) \exp(-B^2/2)}{\sqrt{m} \lambda^2} \right).
\end{align*}
\Cref{lemma: W_fro_diff} gives that
\begin{align*}
    & R \leq \\
    & \sqrt{y^\top (H^\infty)^{-1} y} + O\left( \frac{n}{\lambda} \left( \frac{\exp(-B^2/2) \log(n/\delta)}{m} \right)^{1/4} \right) 
    + O\left( \frac{n\sqrt{(R_w+ R_b) \exp(-B^2/2)}}{\lambda} \right) \\
    &\quad + \frac{n}{\lambda^2} \cdot O\left( \exp(-B^2/4) \sqrt{\frac{\log(n^2/\delta)}{m}} + (R_w + R_b) \exp(-B^2/2) \right) .
\end{align*}
Combining the above results and using the choice of $m, R, B$ in \Cref{lemma: convergence} gives us
\begin{align*}
    & \mathcal{R}(\mathcal{F}) \\
    &\leq \sqrt{\frac{y^\top (H^\infty)^{-1} y \cdot 8 \exp(-B^2/2)}{n}} + O\left( \frac{\sqrt{n \exp(-B^2/2)}}{\lambda} \left( \frac{\exp(-B^2/2) \log(n/\delta)}{m} \right)^{1/4} \right) \\
    &\quad + O\left( \frac{\sqrt{n(R_w + R_b)}}{\lambda \exp(B^2/2)} \right) + \frac{\sqrt{n}}{\lambda^2} \cdot O\left( \exp(-B^2/2) \sqrt{\frac{\log(n^2/\delta)}{m}} + (R_w + R_b) \exp(-3B^2/4) \right) \\
    &\quad + O\left( \frac{n^2 (1 + (\exp(-B^2/2) + 1/m) \log^3(2mn/\delta)) \exp(-B^2/2)}{\sqrt{m} \lambda^2} \right). 
\end{align*}

Now, we analyze the terms one by one by plugging in the bound of $m$ and $R_w, R_b$ and show that they can be bounded by $\tilde{O}(\exp(-B^2/4)/n^{1/2})$.
For the second term, we have
\begin{align*}
    O\left( \frac{\sqrt{n \exp(-B^2/2)}}{\lambda} \left( \frac{\exp(-B^2/2) \log(n/\delta)}{m} \right)^{1/4} \right) = O\left( \frac{\sqrt{\lambda} \exp(-B^2/8) \log^{1/4}(n / \delta)}{n} \right).
\end{align*}
For the third term, we have
\begin{align*}
    O\left( \frac{\sqrt{n(R_w + R_b)}}{\lambda \exp(B^2/2)} \right) &= O\left( \frac{\sqrt{n}}{\lambda \exp(B^2/2)} \frac{\sqrt{n} (1 + (\exp(-B^2/2) + 1/m) \log^3(2mn/\delta))^{1/4}}{m^{1/4} \lambda^{1/2}} \right) \\
    &= O\left( \frac{n}{\exp(B^2/2) n^{6/4} \exp(-B^2/4)} \right) \\
    &= O \left( \frac{\exp(-B^2/4)}{n^{1/2} } \right).
\end{align*}
For the fourth term, we have 
\begin{align*}
    & \frac{\sqrt{n}}{\lambda^2} \cdot O\left( \exp(-B^2/2) \sqrt{\frac{\log(n^2/\delta)}{m}} + (R_w + R_b) \exp(-3B^2/4) \right) \\
    &= O\left( \frac{\lambda \sqrt{\log (n/\delta)}}{n^{2.5}} \right) + O\left( \frac{ \exp(-B^2/4)}{n^{1.5}} \right).
\end{align*}
For the last term, we have
\begin{align*}
    & O\left( \frac{n^2 (1 + (\exp(-B^2/2) + 1/m) \log^3(2mn/\delta)) \exp(-B^2/2)}{\sqrt{m} \lambda^2} \right) \\
    &= O \left( \frac{\lambda \sqrt{1 + (\exp(-B^2/2) + 1/m) \log^3(2mn/\delta)}}{n} \right).
\end{align*}
Recall our discussion on $\lambda$ in \Cref{sec: main_text_ideas_restricted_least_eig} that $\lambda = \lambda_0 \exp(-B^2/2) \leq 1$ for some $\lambda_0$ independent of $B$. 
Putting them together, we get the desired upper bound for ${\cal R}({\cal F})$, and the theorem is then proved.
\end{proof}

\begin{lemma}\label{lemma: rademacher_complexity_fixed_R}
Assume the choice of $R_w, R_b, m$ in \Cref{lemma: convergence}. 
Given $R > 0$, with probability at least $1 - e^{-\Omega(n)}$ over the random initialization of $W(0), a$, the following function class
\begin{align*}
    \mathcal{F}_{R_w, R_b, R} &= \{f(W, a, b):\ \norm{W - W(0)}_{2, \infty} \leq R_w,\ \norm{b - b(0)}_\infty \leq R_b,\\
    &\quad \norm{\vec{[W,b] - [W(0), b(0)]}} \leq R \}
\end{align*}
has empirical Rademacher complexity bounded as
\begin{align*}
    \mathcal{R}_S(\mathcal{F}_{R_w, R_b, R}) \leq R \sqrt{\frac{8 \exp(-B^2/2)}{n}} + 4c (R_w + R_b)^2 \sqrt{m} \exp(-B^2/2).
\end{align*}
\end{lemma}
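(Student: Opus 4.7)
The plan is to exploit symmetric initialization (which gives $f(x_i; W(0), b(0)) = 0$ for all $i$) and decompose the network output into a linear-in-parameters part (coming from neurons whose activation pattern matches initialization) and a remainder coming from flipped neurons. Concretely, for every $f \in \mathcal{F}_{R_w,R_b,R}$ and $x_i$, write
\begin{align*}
f(x_i;W,b) &= \frac{1}{\sqrt{m}}\sum_{r=1}^{m} a_r\bigl[\sigma(\langle w_r,x_i\rangle - b_r) - \sigma(\langle w_r(0),x_i\rangle - b_r(0))\bigr],
\end{align*}
and split the sum over $r$ into $S_i$ (non-flipped) and $\overline{S}_i$ (flipped), using the shorthand $\widetilde{x}_i = [x_i^\top,-1]^\top$.

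For the non-flipped part, each summand equals $a_r\,\mathbb{I}_{r,i}(0)\,\langle [w_r,b_r]-[w_r(0),b_r(0)],\widetilde{x}_i\rangle$, so
\[
\tfrac{1}{\sqrt m}\sum_i \eps_i \sum_{r\in S_i} a_r\mathbb{I}_{r,i}(0)\langle [w_r,b_r]-[w_r(0),b_r(0)],\widetilde{x}_i\rangle = \sum_r \bigl\langle [w_r,b_r]-[w_r(0),b_r(0)],\; u_r(\eps)\bigr\rangle,
\]
with $u_r(\eps) := \tfrac{1}{\sqrt m}\sum_{i:\,r\in S_i} \eps_i a_r \mathbb{I}_{r,i}(0)\,\widetilde{x}_i$. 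Cauchy--Schwarz in the $(w,b)$-index bounds this by $R\cdot\sqrt{\sum_r\|u_r(\eps)\|_2^2}$. Taking $\E_\eps$, using independence of $\eps_i$'s and $\|\widetilde{x}_i\|_2^2\le 2$, gives $\sum_r \E_\eps\|u_r\|_2^2 \le \tfrac{2}{m}\sum_i |\mathcal{S}_{\mathrm{on}}(i,0)|$, which by \Cref{lemma: number_activated_neuron_init} is $\le 4n\exp(-B^2/2)$ with high probability over initialization. Jensen then yields a Rademacher contribution of at most $R\sqrt{8\exp(-B^2/2)/n}$ after dividing by $n$.

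For the flipped part, ReLU's $1$-Lipschitz property gives $|\sigma(\langle w_r,x_i\rangle-b_r)-\sigma(\langle w_r(0),x_i\rangle-b_r(0))| \le R_w + R_b$ for every neuron; combining with $|\overline{S}_i|\le 2mc(R_w+R_b)\exp(-B^2/2)$ from \Cref{corollary: num_flipped_neurons} (valid with probability at least $1-e^{-\Omega(n)}$), one obtains the deterministic (in $\eps$) bound
\[
\Bigl|\tfrac{1}{\sqrt m}\sum_i \eps_i \sum_{r\in \overline{S}_i} a_r\bigl(\sigma(\langle w_r,x_i\rangle-b_r)-\sigma(\langle w_r(0),x_i\rangle-b_r(0))\bigr)\Bigr| \le 2cn(R_w+R_b)^2\sqrt{m}\exp(-B^2/2),
\]
which after dividing by $n$ yields (up to the absorption of a constant) the second term $4c(R_w+R_b)^2\sqrt{m}\exp(-B^2/2)$.

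The main (minor) obstacle is keeping track of which events need to hold with what probability: the initial-activation count bound and the flipped-neuron count bound are random events over $W(0)$ and must both be conditioned on before applying Cauchy--Schwarz/Jensen in $\eps$. Symmetric initialization is essential to eliminate the baseline $\sum_i \eps_i f(x_i;W(0),b(0))$ term; without it one would need to multiply the network by a scale $\kappa$ as in prior work. Once the decomposition and the two concentration lemmas above are in place, both pieces of the claimed bound follow by direct algebraic combination and union-bounding over the two failure events, giving the stated Rademacher complexity under $1-e^{-\Omega(n)}$ probability.
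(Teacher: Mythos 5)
Your proposal is correct and follows essentially the same strategy as the paper: linearize the network around the initialization activation pattern, bound the linear piece by Cauchy--Schwarz against $R$ and the (data-dependent) Frobenius norm of the initial Jacobian, and control the residual by counting flippable neurons. The paper linearizes \emph{all} neurons and shows the approximation error is supported only on the event $A_{r,i} = \{|w_r(0)^\top x_i - b_r(0)| \le R_w + R_b\}$, which gives a $2(R_w+R_b)$ error per flippable neuron; you instead split a priori into flippable and non-flippable neurons and linearize only the latter (where the identity is exact), bounding the flippable part directly by ReLU's Lipschitz property at $(R_w+R_b)$ per neuron. The two routes yield the same bound up to a factor of $2$ in the second term. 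Your first-term accounting via $\sum_r \E_\eps\|u_r(\eps)\|_2^2 \le \tfrac{2}{m}\sum_i |\mathcal{S}_{\mathrm{on}}(i,0)|$ is a slightly hands-on way of reaching $\tfrac{R}{n}\|Z(0)\|_F$, which is what the paper writes; both then invoke \Cref{lemma: number_activated_neuron_init}.

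Two small imprecisions worth flagging. First, the sets you call $S_i,\overline{S}_i$ must be the $(W,b)$-\emph{independent} flippability sets $\{r : A_{r,i}\text{ holds}\}$ and their complements (exactly what the paper defines locally inside this proof), not the trajectory-dependent sets from \Cref{def: noflipping_set}: the Cauchy--Schwarz step requires $u_r(\eps)$ to be independent of $(W,b)$ so the sup can be pulled inside, and the sup over the ball (rather than a single trajectory) is what you actually need to bound. Since you invoke \Cref{corollary: num_flipped_neurons}, whose proof passes through exactly this $A_{i,r}$ event, your counting is still valid, but the phrasing should make clear it is the flippability set that is being counted. Second, your remark that symmetric initialization is ``essential to eliminate the baseline $\sum_i \eps_i f(x_i;W(0),b(0))$'' is not quite the right justification for \emph{this} lemma: the paper's proof removes the analogous term $\vec{[W(0),b(0)]}^\top Z(0)\eps$ simply because $\E_\eps[\eps]=0$ (Rademacher complexity is translation-invariant), independent of symmetric initialization. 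Symmetric initialization is needed elsewhere — to make $f(\cdot;W(0),b(0))\equiv 0$ so that the radius $R$ produced by \Cref{lemma: W_fro_diff} does not pick up an $O(\sqrt{n})$ term — not inside this Rademacher computation.
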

\begin{proof}
We need to upper bound $\mathcal{R}_S(\mathcal{F}_{R_w, R_b, R})$. 
Define the events
\begin{align*}
    A_{r,i} = \{|w_r(0)^\top x_i - b_r(0)| \leq R_w + R_b\},\ i \in [n],\ r \in [m]
\end{align*}
and a shorthand $\mathbb{I}(w_r(0)^\top x_i -B \geq 0) = \mathbb{I}_{r,i}(0)$. 
Then,
\begin{align*}
    & \sum_{i=1}^n \eps_i \sum_{r=1}^m a_r \sigma(w_r^\top x_i - b_r) - \sum_{i=1}^n \eps_i \sum_{r=1}^m a_r \mathbb{I}_{r,i}(0) (w_r^\top x_i - b_r) \\
    &= \sum_{i=1}^n \sum_{r=1}^m \eps_i a_r \left( \sigma(w_r^\top x_i - b_r) - \mathbb{I}_{r,i}(0) (w_r^\top x_i - b_r) \right) \\
    &= \sum_{i=1}^n \sum_{r=1}^m \mathbb{I}(A_{r,i}) \eps_i a_r \left( \sigma(w_r^\top x_i - b_r) - \mathbb{I}_{r,i}(0) (w_r^\top x_i - b_r) \right) \\
    &= \sum_{i=1}^n \sum_{r=1}^m \mathbb{I}(A_{r,i}) \eps_i a_r \\
    &\quad \cdot \left( \sigma(w_r^\top x_i - b_r) - \mathbb{I}_{r,i}(0) (w_r(0)^\top x_i - b_r(0)) - \mathbb{I}_{r,i}(0) ((w_r - w_r(0))^\top x_i - (b_r - b_r(0))) \right) \\
    &= \sum_{i=1}^n \sum_{r=1}^m \mathbb{I}(A_{r,i}) \eps_i a_r \\
    &\quad \cdot \left( \sigma(w_r^\top x_i - b_r) - \sigma(w_r(0)^\top x_i - b_r(0)) - \mathbb{I}_{r,i}(0) ((w_r - w_r(0))^\top x_i - (b_r - b_r(0))) \right) \\
    &\leq \sum_{i=1}^n \sum_{r=1}^m \mathbb{I}(A_{r,i}) 2(R_w + R_b),
\end{align*}
where the second equality is due to the fact that $\sigma(w_r^\top x_i - b_r) = \mathbb{I}_{r,i}(0) (w_r^\top x_i - b_r)$ if $r \notin A_{r,i}$. 
Thus, the Rademacher complexity can be bounded as
\begin{align*}
    & \mathcal{R}_S(\mathcal{F}_{R_w, R_b, R}) \\
    &= \frac{1}{n} \E_{\eps} \left[ \sup_{\substack{\norm{W - W(0)}_{2,\infty} \leq R_w, \ \norm{b - b(0)}_\infty \leq R_b, \\ \norm{\vec{[W,b] - [W(0), b(0)]}} \leq R}} \sum_{i=1}^n \eps_i \sum_{r=1}^m \frac{a_r}{\sqrt{m}} \sigma(w_r^\top x_i - b_r) \right] \\
    &\leq \frac{1}{n} \E_{\eps} \left[ \sup_{\substack{\norm{W - W(0)}_{2,\infty} \leq R_w, \ \norm{b - b(0)}_\infty \leq R_b, \\ \norm{\vec{[W,b] - [W(0), b(0)]}} \leq R}} \sum_{i=1}^n \eps_i \sum_{r=1}^m \frac{a_r}{\sqrt{m}} \mathbb{I}_{r,i}(0)(w_r^\top x_i - b_r) \right] + \frac{2(R_w + R_b)}{n\sqrt{m}} \sum_{i=1}^n \sum_{r=1}^m \mathbb{I}(A_{r,i}) \\
    &= \frac{1}{n} \E_{\eps} \left[ \sup_{\norm{\vec{[W,b] - [W(0), b(0)]}} \leq R} \vec{[W,b]}^\top Z(0) \eps \right] + \frac{2(R_w + R_b)}{n\sqrt{m}} \sum_{i=1}^n \sum_{r=1}^m \mathbb{I}(A_{r,i}) \\
    &= \frac{1}{n} \E_{\eps} \left[ \sup_{\norm{\vec{[W,b] - [W(0), b(0)]}} \leq R} \vec{[W,b] - [W(0), b(0)]}^\top Z(0) \eps \right] + \frac{2(R_w + R_b)}{n\sqrt{m}} \sum_{i=1}^n \sum_{r=1}^m \mathbb{I}(A_{r,i}) \\
    &\leq \frac{1}{n} \E_\eps [R \norm{Z(0) \eps}_2 ] + \frac{2(R_w + R_b)}{n\sqrt{m}} \sum_{i=1}^n \sum_{r=1}^m \mathbb{I}(A_{r,i}) \\
    &\leq \frac{R}{n} \sqrt{\E_\eps[\norm{Z(0) \eps}_2^2]} + \frac{2(R_w + R_b)}{n\sqrt{m}} \sum_{i=1}^n \sum_{r=1}^m \mathbb{I}(A_{r,i}) \\
    &= \frac{R}{n} \norm{Z(0)}_F + \frac{2(R_w + R_b)}{n\sqrt{m}} \sum_{i=1}^n \sum_{r=1}^m \mathbb{I}(A_{r,i}),
\end{align*}
where we recall the definition of the matrix 
\begin{align*}
    Z(0) = \frac{1}{\sqrt{m}}
    \begin{bmatrix}
    \mathbb{I}_{1,1}(0) a_1 [x_1^\top,-1]^\top & \ldots & \mathbb{I}_{1,n}(0) a_1 [x_n^\top, -1]^\top \\
    \vdots & & \vdots \\
    \mathbb{I}_{m,1}(0) a_m [x_1^\top, -1]^\top & \ldots & \mathbb{I}_{m,n}(0) a_m [x_n^\top, -1]^\top 
    \end{bmatrix}
    \in \R^{m(d+1) \times n}.
\end{align*}
By \Cref{lemma: number_activated_neuron_init}, we have $\norm{Z(0)}_F \leq \sqrt{8n \exp(-B^2/2)}$ and by \Cref{corollary: num_flipped_neurons}, we have
\begin{align*}
    & \Pr\left[\forall i \in [n]:\ \sum_{r=1}^m \mathbb{I}(A_{r,i}) \leq 2 m c(R_w + R_b)\exp(-B^2/2) \right] \geq 1 - e^{-\Omega(n)}.
\end{align*}
Thus, with probability at least $1 - e^{-\Omega(n)}$, we have
\begin{align*}
    \mathcal{R}_S(\mathcal{F}_{R_w, R_b, R}) \leq R \sqrt{\frac{8 \exp(-B^2/2)}{n}} + 4c (R_w + R_b)^2 \sqrt{m} \exp(-B^2/2).
\end{align*}
\end{proof}

\subsection{Analysis of Radius}
\begin{theorem}
Assume the parameter settings in \Cref{lemma: convergence}. 
With probability at least $1 - \delta - e^{-\Omega(n)}$ over the initialization we have
\begin{align*}
    f(k) - y = - (I - \eta H^\infty)^k y \pm e(k),
\end{align*}
where 
\begin{align*}
    & \norm{e(k)}_2 \\
    &= 
    k (1 - \eta \lambda/4)^{(k-1)/2} \eta n^{3/2} \cdot O\left( \exp(-B^2/4) \sqrt{\frac{\log(n^2/\delta)}{m}} + (R_w+R_b)\exp(-B^2/2) \right).
\end{align*}
\end{theorem}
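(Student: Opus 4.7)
The plan is to write gradient descent as the ideal linear dynamics governed by $H^\infty$ plus a small per-step residual, and then unroll and bound the accumulated error by induction.

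Starting from the error decomposition used in \Cref{lemma: convergence} (with $f(k+1)-f(k)=v_1(k)+v_2(k)$ and $v_1(k)=\eta(H(k)-H^\bot(k))(y-f(k))$), add and subtract $\eta H^\infty(f(k)-y)$ to obtain
\[
f(k+1)-y = (I-\eta H^\infty)(f(k)-y) + \epsilon(k),
\]
with
\[
\epsilon(k) := \eta(H^\infty-H(k))(f(k)-y) + \eta H^\bot(k)(f(k)-y) + v_2(k).
\]
Each summand is controlled by the machinery already in place. The triangle inequality together with \Cref{lemma: fro_diff_discrete_limit_ntk} (initial Frobenius gap) and \Cref{lemma: perturbed_ntk} (gap accumulated through weight and bias drift) gives $\norm{H^\infty-H(k)}_F \leq O(n)\cdot\bigl(e^{-B^2/4}\sqrt{\log(n^2/\delta)/m}+(R_w+R_b)e^{-B^2/2}\bigr)$. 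The bound on $\norm{H^\bot(k)}_F$ follows from the flipping-set cardinality in \Cref{corollary: num_flipped_neurons} exactly as in \Cref{lemma: error_b2}, and $\norm{v_2(k)}_2$ is handled as in \Cref{lemma: error_b3}. Combined with $\norm{f(k)-y}_2 \leq (1-\eta\lambda/4)^{k/2}\norm{f(0)-y}_2$ from \Cref{lemma: convergence},
\[
\norm{\epsilon(k)}_2 \leq \eta\,n\cdot O\!\left(e^{-B^2/4}\sqrt{\tfrac{\log(n^2/\delta)}{m}}+(R_w+R_b)e^{-B^2/2}\right)(1-\eta\lambda/4)^{k/2}\norm{f(0)-y}_2.
\]

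Under symmetric initialization $f(0)=0$, so $\norm{f(0)-y}_2 = \norm{y}_2 = O(\sqrt n)$; iterating the recursion yields
\[
f(k)-y = -(I-\eta H^\infty)^k y + e(k),\qquad e(k+1) = (I-\eta H^\infty)e(k)+\epsilon(k),\quad e(0)=0.
\]
Let $D$ denote the prefactor so $\norm{\epsilon(k)}_2 \leq \eta D\,(1-\eta\lambda/4)^{k/2}$ with $D = n^{3/2}\cdot O(\cdots)$ matching the stated envelope. I claim $\norm{e(k)}_2 \leq k(1-\eta\lambda/4)^{(k-1)/2}\eta D$, and prove it by induction. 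The base case $k=1$ reduces to $\norm{e(1)}_2 = \norm{\epsilon(0)}_2 \leq \eta D$. For the step, use $\norm{I-\eta H^\infty}_2 \leq 1-\eta\lambda$ (valid since the learning rate in \Cref{lemma: convergence} ensures $\eta\norm{H^\infty}_2\leq 1$), yielding
\[
\norm{e(k+1)}_2 \leq (1-\eta\lambda)\,k(1-\eta\lambda/4)^{(k-1)/2}\eta D + (1-\eta\lambda/4)^{k/2}\eta D.
\]
The desired bound $\norm{e(k+1)}_2 \leq (k+1)(1-\eta\lambda/4)^{k/2}\eta D$ reduces, after dividing through, to the elementary inequality $(1-\eta\lambda)\leq(1-\eta\lambda/4)^{1/2}$, i.e.\ $\eta\lambda\leq 7/4$, which certainly holds.

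The failure probability $\delta + e^{-\Omega(n)}$ is the union bound over the events of \Cref{lemma: convergence}, \Cref{lemma: fro_diff_discrete_limit_ntk}, \Cref{lemma: perturbed_ntk}, and \Cref{corollary: num_flipped_neurons}. The main technical obstacle is the bookkeeping for $\norm{\epsilon(k)}_2$: three heterogeneous error sources (NTK drift, the flipped-neuron contribution $H^\bot$, and the second-order network move $v_2$) must be organized so that their sum reduces to a single prefactor $\eta n\cdot O(\cdots)$ multiplying $\norm{f(k)-y}_2$; once that combined bound is in hand, the geometric-decay induction above is short.
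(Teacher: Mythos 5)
Your proposal is correct and follows the paper's own argument essentially step for step: the per-step residual $\epsilon(k)$ you isolate coincides with the paper's $\zeta(k) = \eta(H^\infty - H(k))(f(k)-y) + \eps(k) + \eps'(k)$ (the paper's $\eps(k)$ is exactly $v_2(k)$ and $\eps'(k)$ is exactly $\eta H^\bot(k)(f(k)-y)$), and the same lemmas control each piece. The only cosmetic difference is that you bound the accumulated error by induction on the recursion $e(k+1)=(I-\eta H^\infty)e(k)+\epsilon(k)$ while the paper unrolls and sums the geometric series directly; both reduce to the same elementary inequality $(1-\eta\lambda)\le(1-\eta\lambda/4)^{1/2}$.
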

\begin{proof}
Before we start, we assume all the events needed in \Cref{lemma: convergence} succeed, which happens with probability at least $1 - \delta - e^{-\Omega(n)}$. 

Recall the no-flipping set $S_i$ in \Cref{def: noflipping_set}.
We have
\begin{align}\label{eq: f_i_update}
    & f_i(k+1) - f_i(k) \\
    &= \frac{1}{\sqrt{m}} \sum_{r=1}^m a_r [\sigma(w_r(k+1)^\top x_i - b_r(k+1)) - \sigma(w_r(k)^\top x_i - b_r(k))] \nonumber \\
    &= \frac{1}{\sqrt{m}} \sum_{r\in S_i} a_r [\sigma(w_r(k+1)^\top x_i - b_r(k+1)) - \sigma(w_r(k)^\top x_i - b_r(k))] \nonumber \\
    & \quad + \underbrace{\frac{1}{\sqrt{m}} \sum_{r \in \overline{S}_i} a_r [\sigma(w_r(k+1)^\top x_i - b_r(k+1)) - \sigma(w_r(k)^\top x_i - b_r(k))]}_{\eps_i(k)}.
\end{align}
Now, to upper bound the second term $\eps_i(k)$, 
\begin{align}\label{eq: eps_i}
    |\eps_i(k)| &= \left| \frac{1}{\sqrt{m}} \sum_{r \in \overline{S}_i} a_r [\sigma(w_r(k+1)^\top x_i - b_r(k+1)) - \sigma(w_r(k)^\top x_i - b_r(k))] \right| \nonumber \\
    &\leq \frac{1}{\sqrt{m}} \sum_{r \in \overline{S}_i} |w_r(k+1)^\top x_i - b_r(k+1) - (w_r(k)^\top x_i - b_r(k))| \nonumber \\
    &\leq \frac{1}{\sqrt{m}} \sum_{r \in \overline{S}_i} \norm{w_r(k+1) - w_r(k)}_2 + |b_r(k+1) - b_r(k)| \nonumber \\
    &= \frac{1}{\sqrt{m}} \sum_{r \in \overline{S}_i} \norm{\frac{\eta}{\sqrt{m}} a_r \sum_{j=1}^n (f_j(k) - y_j) \mathbb{I}_{r,j}(k) x_j}_2 + \left| \frac{\eta}{\sqrt{m}} a_r \sum_{j=1}^n (f_j(k) - y_j) \mathbb{I}_{r,j}(k) \right| \nonumber \\
    &\leq \frac{2\eta}{m} \sum_{r \in \overline{S}_i} \sum_{j=1}^n |f_j(k) - y_j| \nonumber \\
    &\leq \frac{2\eta \sqrt{n} |\overline{S}_i|}{m} \norm{f(k) - y}_2 \nonumber \\
    \Rightarrow \norm{\eps}_2 &= \sqrt{\sum_{i=1}^n \frac{4\eta^2 n |\overline{S}_i|^2}{m^2} \norm{f(k) - y}_2^2} \leq {\eta n O((R_w+R_b) \exp(-B^2/2))} \norm{f(k) - y}_2
\end{align}
where we apply \Cref{corollary: num_flipped_neurons} in the last inequality.
To bound the first term,
\begin{align}\label{eq: f_i_update_S_i}
    & \frac{1}{\sqrt{m}} \sum_{r\in S_i} a_r [\sigma(w_r(k+1)^\top x_i - b_r(k+1)) - \sigma(w_r(k)^\top x_i - b_r(k))] \nonumber\\
    &= \frac{1}{\sqrt{m}} \sum_{r\in S_i} a_r \mathbb{I}_{r,i}(k) \left( (w_r(k+1) - w_r(k))^\top x_i - (b_r(k+1) - b_r(k)) \right) \nonumber\\
    &= \frac{1}{\sqrt{m}} \sum_{r\in S_i} a_r \mathbb{I}_{r,i}(k) \nonumber \\
    &\quad \cdot \left( \left( -\frac{\eta}{\sqrt{m}} a_r \sum_{j=1}^n (f_j(k) - y_j) \mathbb{I}_{r,j}(k) x_j \right)^\top x_i - \frac{\eta}{\sqrt{m}} a_r \sum_{j=1}^n (f_j(k) - y_j) \mathbb{I}_{r,j}(k) \right) \nonumber\\
    &= \frac{1}{\sqrt{m}} \sum_{r\in S_i} a_r \mathbb{I}_{r,i}(k) \left( -\frac{\eta}{\sqrt{m}} a_r \sum_{j=1}^n (f_j(k) - y_j) \mathbb{I}_{r,j}(k) (x_j^\top x_i + 1) \right) \nonumber\\
    &= -\eta \sum_{j=1}^n (f_j(k) - y_j) \frac{1}{m} \sum_{r\in S_i} \mathbb{I}_{r,i}(k) \mathbb{I}_{r,j}(k) (x_j^\top x_i + 1) \nonumber\\
    &= -\eta \sum_{j=1}^n (f_j(k) - y_j) H_{ij}(k) + \underbrace{\eta \sum_{j=1}^n (f_j(k) - y_j) \frac{1}{m} \sum_{r\in \overline{S}_i} \mathbb{I}_{r,i}(k) \mathbb{I}_{r,j}(k) (x_j^\top x_i + 1)}_{\eps'_i(k)}
\end{align}
where we can upper bound $|\eps'_i(k)|$ as
\begin{align}\label{eq: eps_i'}
\begin{split}
    |\eps_i'(k)| &\leq \frac{2 \eta}{m} |\overline{S}_i| \sum_{j=1}^n |f_j(k) - y_j| \leq \frac{2\eta \sqrt{n} |\overline{S}_i|}{m} \norm{f(k) - y}_2 \\
    \Rightarrow \norm{\eps'}_2 &= \sqrt{\sum_{i=1}^n \frac{4\eta^2 n |\overline{S}_i|^2}{m^2} \norm{f(k) - y}_2^2} \leq {\eta n O((R_w+R_b) \exp(-B^2/2))} \norm{f(k) - y}_2.
\end{split}
\end{align}
Combining \Cref{eq: f_i_update}, \Cref{eq: eps_i}, \Cref{eq: f_i_update_S_i} and \Cref{eq: eps_i'}, we have
\begin{align*}
    f_{i}(k+1) - f_i(k) &= -\eta \sum_{j=1}^n (f_j(k) - y_j) H_{ij}(k) + \eps_i(k) + \eps_i'(k) \\
    \Rightarrow f(k+1) - f(k) &= -\eta H(k) (f(k) - y) + \eps(k) + \eps'(k) \\
    &= -\eta H^\infty(f(k) - y) + \underbrace{\eta (H^\infty - H(k))(f(k) - y) + \eps(k) + \eps'(k)}_{\zeta(k)} \\
    \Rightarrow f(k) - y &= (I - \eta H^\infty)^k (f(0) - y) + \sum_{t=0}^{k-1} (I - \eta H^\infty)^t \zeta(k-1-t) \\
    &= - (I - \eta H^\infty)^k y + \underbrace{(I - \eta H^\infty)^k f(0) + \sum_{t=0}^{k-1} (I - \eta H^\infty)^t \zeta(k-1-t)}_{e(k)}.
\end{align*}
Now the rest of the proof bounds the magnitude of $e(k)$. 
From \Cref{lemma: fro_diff_discrete_limit_ntk} and \Cref{lemma: perturbed_ntk}, we have
\begin{align*}
    \norm{H^\infty - H(k)}_2 &\leq \norm{H(0) - H^\infty}_2 + \norm{H(0) - H(k)}_2 \\
    &= O\left(n \exp(-B^2/4) \sqrt{\frac{\log(n^2/\delta)}{m}} \right) + O(n(R_w + R_b)\exp(-B^2/2)).
\end{align*}
Thus, we can bound $\zeta(k)$ as
\begin{align*}
    \norm{\zeta(k)}_2 &\leq \eta \norm{H^\infty - H(k)}_2 \norm{f(k) - y}_2 + \norm{\eps(k)}_2 + \norm{\eps'(k)}_2 \\
    &= O\left( \eta n \left( \exp(-B^2/4) \sqrt{\frac{\log(n^2/\delta)}{m}} + (R_w + R_b)\exp(-B^2/2) \right)  \right) \norm{f(k) - y}_2.
\end{align*}
Notice that $\norm{H^\infty}_2 \leq \tr(H^\infty) \leq n$ since $H^\infty$ is symmetric. 
By \Cref{lemma: convergence}, we pick $\eta = O(\lambda/n^2) \ll 1/\norm{H^\infty}_2$ and, with probability at least $1 - \delta - e^{-\Omega(n)}$ over the random initialization, we have $\norm{f(k) - y}_2 \leq (1 - \eta \lambda/4)^{k/2} \norm{f(0) - y}_2$. 

Since we are using symmetric initialization, we have $(I - \eta H^\infty)^k f(0) = 0$.

Thus,
\begin{align*}
    & \norm{e(k)}_2 \\
    &= \norm{\sum_{t=0}^{k-1} (I - \eta H^\infty)^t \zeta(k-1-t)}_2 \\
    &\leq \sum_{t=0}^{k-1} \norm{I - \eta H^\infty}_2^t \norm{\zeta(k-1-t)}_2 \\
    &\leq \sum_{t=0}^{k-1} (1 - \eta \lambda)^t \eta n O\left( \exp(-B^2/4) \sqrt{\frac{\log(n^2/\delta)}{m}} + (R_w + R_b)\exp(-B^2/2) \right) \\
    &\quad \cdot \norm{f(k-1-t) - y}_2 \\
    &\leq \sum_{t=0}^{k-1} (1 - \eta \lambda)^t \eta n O\left( \exp(-B^2/4) \sqrt{\frac{\log(n^2/\delta)}{m}} + (R_w + R_b)\exp(-B^2/2) \right) \\
    & \quad \cdot (1 - \eta \lambda/4)^{(k-1-t)/2} \norm{f(0) - y}_2 \\
    &\leq k (1 - \eta \lambda/4)^{(k-1)/2} \eta n O\left( \exp(-B^2/4) \sqrt{\frac{\log(n^2/\delta)}{m}} + (R_w + R_b)\exp(-B^2/2) \right) \\
    &\quad \cdot \norm{f(0) - y}_2 \\
    &\leq k (1 - \eta \lambda/4)^{(k-1)/2} \eta n^{3/2} O\Bigg( \left( \exp(-B^2/4) \sqrt{\frac{\log(n^2/\delta)}{m}} + (R_w + R_b)\exp(-B^2/2) \right) \\
    & \quad \cdot \left( \sqrt{1 + (\exp(-B^2/2) + 1/m) \log^3(2mn/\delta)} \right) \Bigg) \\
    &= k (1 - \eta \lambda/8)^{k-1} \eta n^{3/2} O\left( \exp(-B^2/4) \sqrt{\frac{\log(n^2/\delta)}{m}} + (R_w + R_b)\exp(-B^2/2) \right).
\end{align*}
\end{proof}

\begin{lemma}\label{lemma: W_fro_diff}
Assume the parameter settings in \Cref{lemma: convergence}. 
Then with probability at least $1 - \delta - e^{-\Omega(n)}$ over the random initialization, we have for all $k \geq 0$, 
\begin{align*}
    \norm{[W,b](k) - [W,b](0)}_F &\leq \sqrt{y^\top (H^\infty)^{-1} y} + O\left( \frac{n }{\lambda} \left( \frac{\exp(-B^2/2) \log(n/\delta)}{m} \right)^{1/4} \right) \\
    &\quad + O\left( \frac{n\sqrt{R \exp(-B^2/2)}}{\lambda} \right) \\
    &\quad + \frac{n}{\lambda^2} \cdot O\left( \exp(-B^2/4) \sqrt{\frac{\log(n^2/\delta)}{m}} + R\exp(-B^2/2) \right) 
\end{align*}
where $R = R_w + R_b$.
\end{lemma}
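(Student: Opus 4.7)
The plan is to unroll the gradient descent recursion $[W,b](k+1) - [W,b](k) = -\eta Z(k)(f(k) - y)$ into the telescoping sum $[W,b](K) - [W,b](0) = -\eta \sum_{k=0}^{K-1} Z(k)(f(k) - y)$ and then substitute the sharp decomposition $f(k) - y = -(I - \eta H^\infty)^k y + e(k)$ established in the immediately preceding theorem. This splits the displacement into a ``main'' piece $\eta \sum_k Z(k)(I - \eta H^\infty)^k y$ and an ``error'' piece $-\eta \sum_k Z(k) e(k)$, which I would analyze separately, taking advantage of the symmetric initialization so that $f(0) = 0$.

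For the main piece, I would further decompose $Z(k) = Z(0) + (Z(k) - Z(0))$. The $Z(0)$ contribution telescopes via $\eta \sum_{k=0}^{K-1}(I - \eta H^\infty)^k = (H^\infty)^{-1}(I - (I - \eta H^\infty)^K)$, so its Frobenius norm is upper bounded by $\norm{Z(0)(H^\infty)^{-1} y}$. Squaring this equals $y^\top (H^\infty)^{-1} H(0) (H^\infty)^{-1} y$, and splitting $H(0) = H^\infty + (H(0) - H^\infty)$, together with \Cref{lemma: fro_diff_discrete_limit_ntk} (which gives $\norm{H(0) - H^\infty}_F \leq O(n e^{-B^2/4}\sqrt{\log(n^2/\delta)/m})$) and $\norm{y}_2^2 \leq n$, yields via $\sqrt{a+b} \leq \sqrt{a} + \sqrt{b}$ exactly the $\sqrt{y^\top (H^\infty)^{-1} y} + O\big((n/\lambda)(e^{-B^2/2}\log(n/\delta)/m)^{1/4}\big)$ part of the claim. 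The drift contribution $\eta \sum_k (Z(k) - Z(0))(I - \eta H^\infty)^k y$ is then controlled by combining $\norm{Z(k) - Z(0)}_2 \leq \sqrt{8cnRe^{-B^2/2}}$ from \Cref{lemma: perturbed_ntk} with $\norm{(I - \eta H^\infty)^k y}_2 \leq (1-\eta\lambda)^k\sqrt{n}$ and the geometric sum $\eta\sum_k (1-\eta\lambda)^k \leq 1/\lambda$, producing the $O(n\sqrt{Re^{-B^2/2}}/\lambda)$ term.

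The error piece $\eta \sum_k Z(k) e(k)$ is where I expect the main technical obstacle. Using the bound $\norm{e(k)}_2 \lesssim k(1-\eta\lambda/4)^{(k-1)/2} \cdot \eta n^{3/2} \cdot \big( e^{-B^2/4}\sqrt{\log(n^2/\delta)/m} + Re^{-B^2/2} \big)$ from the previous theorem, the uniform bound $\norm{Z(k)}_2 \leq O(\sqrt{n e^{-B^2/2}})$ implied by the activated-neuron count (\Cref{lemma: number_activated_neuron_init}), and the discrete identity $\sum_k k q^k \leq q/(1-q)^2$ with $q = (1-\eta\lambda/4)^{1/2}$, a direct triangle-inequality argument delivers a bound of the correct shape but with an extra $ne^{-B^2/4}$ factor beyond what is stated. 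To recover the claimed $(n/\lambda^2) \cdot O(e^{-B^2/4}\sqrt{\log(n^2/\delta)/m} + Re^{-B^2/2})$, I would instead expand $e(k)$ through its explicit recursive form $e(k) = \sum_{t<k}(I - \eta H^\infty)^t \zeta(k-1-t)$ (simplified because $f(0) = 0$ under symmetric initialization) and keep the three pieces of $\zeta$ --- namely $\eta(H^\infty - H(k))(f(k) - y)$, $\eps(k)$, and $\eps'(k)$ --- separate so that the individual $e^{-B^2/2}$ smallness of each residual survives the double summation rather than being absorbed into a worst-case envelope. Assembling the dominant, drift, and error contributions gives the stated bound, with uniformity in $k$ following because every estimate used is either monotone or summable in $k$.
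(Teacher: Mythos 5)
Your proposal is structurally identical to the paper's proof: telescope the gradient-descent recursion and substitute $f(k)-y=-(I-\eta H^\infty)^ky+e(k)$ (using symmetric initialization so that $f(0)=0$), split the displacement into $T_1=\eta\sum_k Z(0)(I-\eta H^\infty)^ky$, $T_2=\eta\sum_k(Z(k)-Z(0))(I-\eta H^\infty)^ky$, and $T_3=\eta\sum_k Z(k)e(k)$, and bound each. Your $T_1$ and $T_2$ arguments match the paper's; the only caveat is that the step ``$\leq\|Z(0)(H^\infty)^{-1}y\|$'' for $T_1$ is not literally an operator inequality since $T=\eta\sum_k(I-\eta H^\infty)^k$ and $H(0)$ need not commute, but the computation you actually carry out --- $y^\top TH(0)Ty$ split as $y^\top TH^\infty Ty+\|H(0)-H^\infty\|\|T\|^2\|y\|^2$ followed by $TH^\infty T\preceq(H^\infty)^{-1}$ --- is precisely the paper's argument.

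On $T_3$ your hesitation is warranted, but it is not a flaw peculiar to your argument: the paper's own display for $T_3$ carries out exactly the ``direct'' computation you describe, $\|T_3\|\leq\sum_k\eta\sqrt{2n}\cdot k(1-\eta\lambda/8)^{k-1}\eta n^{3/2}C\sim n^2C/\lambda^2$, and then records $n/\lambda^2$ without justification --- an apparent slip by a factor of $n$. Your sharper $\|Z(k)\|_2\lesssim\sqrt{ne^{-B^2/2}}$ from the active-neuron count is genuine but orthogonal to the missing power of $n$. More importantly, your proposed remedy --- unrolling $e(k)=\sum_{t<k}(I-\eta H^\infty)^t\zeta(k-1-t)$ and keeping the three pieces of $\zeta$ separate --- does not recover it: the $n^{3/2}$ in the $e(k)$ envelope already arises from multiplying $\|H^\infty-H(k)\|_2\lesssim n\cdot(\cdots)$ by $\|f(k)-y\|_2\lesssim\sqrt{n}$, and each $\zeta$-piece already carries the $e^{-B^2/\cdot}$ smallness captured in $C$, so interchanging the sums leaves the powers of $n$ unchanged. (The slip is harmless downstream because the Rademacher calculation has slack, but the lemma as stated is not established by the paper's proof either, so you should not expect to close this gap by rearrangement alone.)
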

\begin{proof}
Before we start, we assume all the events needed in \Cref{lemma: convergence} succeed, which happens with probability at least $1 - \delta - e^{-\Omega(n)}$. 
\begin{align}\label{eq: W_k_W_0_diff}
    & \vec{[W,b](K)} - \vec{[W,b](0)} \nonumber \\
    &= \sum_{k=0}^{K-1} \vec{[W,b](k+1)} - \vec{[W,b](k)} \nonumber\\
    &= - \sum_{k=0}^{K-1} Z(k) (u(k) - y) \nonumber\\
    &= \sum_{k=0}^{K-1} \eta Z(k) ((I - \eta H^\infty)^k y - e(k)) \nonumber\\
    &= \sum_{k=0}^{K-1} \eta Z(k) (I - \eta H^\infty)^k y - \sum_{k=0}^{K-1} \eta Z(k) e(k) \nonumber\\
    &= \underbrace{\sum_{k=0}^{K-1} \eta Z(0) (I - \eta H^\infty)^k y}_{T_1} + \underbrace{\sum_{k=0}^{K-1} \eta (Z(k) - Z(0)) (I - \eta H^\infty)^k y}_{T_2} - \underbrace{\sum_{k=0}^{K-1} \eta Z(k) e(k)}_{T_3}.
\end{align}
Now, by \Cref{lemma: perturbed_ntk}, we have $\norm{Z(k) - Z(0)}_F \leq O(\sqrt{n R \exp(-B^2/2)})$ which implies
\begin{align}\label{eq: norm_T_2}
    \norm{T_2}_2 &= \norm{\sum_{k=0}^{K-1} \eta (Z(k) - Z(0)) (I - \eta H^\infty)^k y}_2 \nonumber\\
    &\leq \sum_{k=0}^{K-1} \eta \cdot O(\sqrt{n R \exp(-B^2/2)}) \norm{I - \eta H^\infty}_2^k \norm{y}_2 \nonumber\\
    &\leq \eta \cdot O(\sqrt{n R \exp(-B^2/2)}) \sum_{k=0}^{K-1} (1 - \eta \lambda)^k \sqrt{n} \nonumber\\
    &= O\left( \frac{n\sqrt{R \exp(-B^2/2)}}{\lambda} \right).
\end{align}
By $\norm{Z(k)}_2 \leq \norm{Z(k)}_F \leq \sqrt{2n}$, we get
\begin{align}\label{eq: norm_T_3}
    \norm{T_3}_2 &= \norm{\sum_{k=0}^{K-1} \eta Z(k) e(k)}_2 \nonumber\\
    &\leq \sum_{k=0}^{K-1} \eta \sqrt{2n} \Bigg( k (1 - \eta \lambda/8)^{k-1} \eta n^{3/2} O\left( \exp(-B^2/4) \sqrt{\frac{\log(n^2/\delta)}{m}} + R\exp(-B^2/2) \right) \Bigg) \nonumber\\
    &= \frac{n}{\lambda^2} \cdot O\left( \exp(-B^2/4) \sqrt{\frac{\log(n^2/\delta)}{m}} + R\exp(-B^2/2) \right).
\end{align}
Define $T = \eta \sum_{k=0}^{K-1} (I - \eta H^\infty)^k$. 
By \Cref{lemma: fro_diff_discrete_limit_ntk}, we know 
\begin{align*}
    \norm{H(0) - H^\infty}_2 \leq O(n \exp(-B^2/4) \sqrt{\frac{\log (n/\delta)}{m}})
\end{align*}
and this implies
\begin{align*}
    \norm{T_1}_2^2 &= \norm{\sum_{k=0}^{K-1} \eta Z(0) (I - \eta H^\infty)^k y}_2^2 \\
    &= \norm{Z(0)T y}_2^2 \\
    &= y^\top T Z(0)^\top Z(0) T y \\
    &= y^\top T H(0) T y \\
    &\leq y^\top T H^\infty T y + \norm{H(0) - H^\infty}_2 \norm{T}_2^2 \norm{y}_2^2 \\
    &\leq y^\top T H^\infty T y + O\left(n \exp(-B^2/4) \sqrt{\frac{\log (n/\delta)}{m}} \right) \left( \eta \sum_{k=0}^{K-1} (1 - \eta \lambda)^k \right)^2 n \\
    &= y^\top T H^\infty T y + O\left( \frac{n^2 \exp(-B^2/4)}{\lambda^2} \sqrt{\frac{\log (n/\delta)}{m}} \right).
\end{align*}
Let $H^\infty = U \Sigma U^\top$ be the eigendecomposition. 
Then 
\begin{align*}
    T &= U \left( \eta \sum_{k=0}^{K-1} (I - \eta \Sigma)^k \right) U^\top = U ((I - (I - \eta \Sigma)^K) \Sigma^{-1}) U^\top \\
    \Rightarrow T H^\infty T &= U ((I - (I - \eta \Sigma)^K) \Sigma^{-1})^2 \Sigma U^\top \\
    &= U (I - (I - \eta \Sigma)^K)^2 \Sigma^{-1} U^\top \\
    &\quad \preceq U \Sigma^{-1} U^\top = (H^\infty)^{-1}.
\end{align*} 
Thus,
\begin{align}\label{eq: norm_T_1}
    \norm{T_1}_2^2 &= \norm{\sum_{k=0}^{K-1} \eta Z(0) (I - \eta H^\infty)^k y}_2 \nonumber\\
    &\leq \sqrt{y^\top (H^\infty)^{-1} y + O\left( \frac{n^2 \exp(-B^2/4)}{\lambda^2} \sqrt{\frac{\log (n/\delta)}{m}} \right)} \nonumber \\
    &\leq \sqrt{y^\top (H^\infty)^{-1} y} + O\left( \frac{n }{\lambda} \left( \frac{\exp(-B^2/2) \log(n/\delta)}{m} \right)^{1/4} \right).
\end{align}
Finally, plugging in the bounds in \Cref{eq: W_k_W_0_diff}, \Cref{eq: norm_T_1}, \Cref{eq: norm_T_2}, and \Cref{eq: norm_T_3}, we have
\begin{align*}
    & \norm{[W,b](K) - [W,b](0)}_F \\
    &= \norm{\vec{[W,b](K)} - \vec{[W,b](0)}}_2 \\
    &\leq \sqrt{y^\top (H^\infty)^{-1} y} + O\left( \frac{n }{\lambda} \left( \frac{\exp(-B^2/2) \log(n/\delta)}{m} \right)^{1/4} \right) \\
    &\quad + O\left( \frac{n\sqrt{R \exp(-B^2/2)}}{\lambda} \right) \\
    &\quad + \frac{n}{\lambda^2} \cdot O\left( \exp(-B^2/4) \sqrt{\frac{\log(n^2/\delta)}{m}} + R\exp(-B^2/2) \right).
\end{align*}
\end{proof}
\section{Probability}
\begin{lemma}[Bernstein's Inequality]\label{lemma: bernstein}
Assume $Z_1, \ldots, Z_n$ are $n$ i.i.d. random variables with $\E[Z_i] = 0$ and $|Z_i| \leq M$ for all $i \in [n]$ almost surely. 
Let $Z = \sum_{i=1}^n Z_i$. Then, for all $t > 0$,
\begin{align*}
    \Pr[Z > t] &\leq \exp\left( -\frac{t^2 / 2}{\sum_{j=1}^n \E[Z_j^2] + Mt/3} \right)
    \leq \exp\left( - \min\left\{ \frac{t^2}{2\sum_{j=1}^n \E[Z_j^2]} , \frac{t}{2M} \right\} \right)
\end{align*}
which implies with probability at least $1 - \delta$, 
\begin{align*}
    Z \leq \sqrt{2\sum_{j=1}^n \E[Z_j^2] \log \frac{1}{\delta}} + 2M \log \frac{1}{\delta}.
\end{align*}
\end{lemma}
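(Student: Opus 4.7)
The plan is to use the classical Chernoff-method argument (exponential moment + Markov + optimization over the parameter). For any $\lambda > 0$, by independence and Markov's inequality,
\begin{align*}
\Pr[Z > t] \le e^{-\lambda t}\, \E[e^{\lambda Z}] = e^{-\lambda t}\prod_{i=1}^n \E[e^{\lambda Z_i}],
\end{align*}
so the real work is to bound each moment-generating function $\E[e^{\lambda Z_i}]$ under the two hypotheses (centering and almost-sure boundedness by $M$).

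For this, I would expand $e^{\lambda Z_i} = 1 + \lambda Z_i + \sum_{k \ge 2} (\lambda Z_i)^k/k!$ and take expectations, using $\E[Z_i] = 0$ to kill the linear term and the crude bound $|\E[Z_i^k]| \le M^{k-2}\E[Z_i^2]$ (which follows from $|Z_i|\le M$) on the higher moments. Summing the resulting geometric-type series and applying $1+x\le e^x$ yields
\begin{align*}
\E[e^{\lambda Z_i}] \le \exp\!\left(\frac{\E[Z_i^2]\,\lambda^2}{2(1 - \lambda M/3)}\right),
\end{align*}
valid for $\lambda M < 3$. Multiplying over $i$ and writing $\sigma^2 := \sum_i \E[Z_i^2]$ gives $\E[e^{\lambda Z}] \le \exp(\sigma^2\lambda^2/(2(1 - \lambda M/3)))$, so
\begin{align*}
\Pr[Z>t] \le \exp\!\left(-\lambda t + \frac{\sigma^2\lambda^2}{2(1-\lambda M/3)}\right).
\end{align*}
Choosing the optimizer $\lambda = t/(\sigma^2 + Mt/3)$ (which one checks satisfies $\lambda M < 3$) collapses the right-hand side to $\exp(-t^2/(2(\sigma^2 + Mt/3)))$, which is the stated bound. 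The weaker two-term form follows by noting that the denominator is at most $2\max\{\sigma^2,\, Mt/3\}$ and taking the minimum of the two resulting exponents. The high-probability statement is then a routine algebraic inversion: set the right-hand side to $\delta$, solve the quadratic in $t$, and upper bound the solution by $\sqrt{2\sigma^2\log(1/\delta)} + 2M\log(1/\delta)$.

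The main technical obstacle is the moment bound step: one has to pick the right inequality relating $(e^x - 1 - x)/x^2$ to $1/(2(1-x/3))$ so that the subsequent optimization cleanly produces the $Mt/3$ term in the denominator; a looser surrogate (e.g.\ bounding by $e^x/2$) would give a sub-Gaussian estimate with no $M$-dependence and miss the Bernstein form. Beyond that, the argument is entirely mechanical.
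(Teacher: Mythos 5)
Your argument is the standard Chernoff / exponential-moment proof of Bernstein's inequality, and it is correct. The paper does not actually prove this lemma — it is listed in the ``Probability'' appendix as a stock tool (alongside the matrix Chernoff bound) with no proof attached — so there is nothing to compare against; your derivation is the canonical one. Two small remarks for precision. First, the moment bound you want is $e^{x}-1-x \le \frac{x^2}{2(1-x/3)}$ for $0<x<3$, which follows from $k! \ge 2\cdot 3^{k-2}$; your surrogate $|\E[Z_i^k]|\le M^{k-2}\E[Z_i^2]$ then gives exactly the MGF bound you write, and the optimizer $\lambda = t/(\sigma^2+Mt/3)$ indeed satisfies $\lambda M<3$ since $\lambda M = tM/(\sigma^2+tM/3)<3$ whenever $\sigma^2>0$. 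Second, the weaker two-term form in the paper's statement is slightly off as printed: from $\sigma^2+Mt/3 \le 2\max\{\sigma^2, Mt/3\}$ one gets $\frac{t^2/2}{\sigma^2+Mt/3}\ge \min\{\frac{t^2}{4\sigma^2},\frac{3t}{4M}\}$, i.e.\ $4\sigma^2$ in the first slot rather than the $2\sigma^2$ the paper writes (the inequality $\frac{t^2/2}{\sigma^2+Mt/3}\ge\frac{t^2}{2\sigma^2}$ fails whenever $Mt>0$). This is a constant-factor slip in the lemma as stated, not in your proof, and the high-probability consequence $Z\le\sqrt{2\sigma^2\log(1/\delta)}+2M\log(1/\delta)$ still follows directly by inverting the tight exponent $\frac{t^2/2}{\sigma^2+Mt/3}=\log(1/\delta)$ and upper-bounding the root of the quadratic, exactly as you indicate.
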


\begin{lemma}[Matrix Chernoff Bound, \citep{tropp2015introduction}]\label{lemma: matrix_chernoff}
Let $X_1, \ldots, X_m \in \R^{n \times n}$ be $m$ independent random Hermitian matrices. 
Assume that $0 \preceq X_i \preceq L \cdot I$ for some $L > 0$ and for all $i \in [m]$. 
Let $X:= \sum_{i=1}^m X_i$. 
Then, for $\eps \in (0, 1]$, we have
\begin{align*}
    \Pr\left[ \lambda_{\textnormal{min}}(X) \leq \eps \lambda_{\textnormal{min}}(\E[X]) \right] \leq n \cdot \exp(-(1-\eps)^2 \lambda_{\textnormal{min}}(\E[X])/(2L)).
\end{align*}
\end{lemma}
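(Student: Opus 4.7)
The plan is to prove this lower-tail matrix Chernoff bound by the standard Laplace-transform (matrix Chernoff) method, translating the scalar Bernstein/Chernoff argument into the non-commutative setting via Lieb's concavity theorem. The starting point is a matrix Markov step: for any $\theta>0$,
\begin{align*}
\Pr[\lambda_{\min}(X)\le t]=\Pr[\lambda_{\max}(-X)\ge -t]\le e^{\theta t}\,\E\bigl[\tr e^{-\theta X}\bigr],
\end{align*}
using that $\lambda_{\max}(A)\le \tr e^{A}$ for Hermitian $A$, combined with Markov's inequality applied to the scalar random variable $\tr e^{-\theta X}$.

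Next I would control the matrix MGF of a sum. The key non-trivial input is Lieb's concavity of $A\mapsto \tr\exp(H+\log A)$, which through Jensen's inequality yields the subadditivity of the matrix cumulant generating function: applied iteratively to the independent summands $-\theta X_i$, it gives
\begin{align*}
\E\bigl[\tr e^{-\theta X}\bigr]\le \tr\exp\!\left(\sum_{i=1}^m \log \E[e^{-\theta X_i}]\right).
\end{align*}
To bound each $\log\E[e^{-\theta X_i}]$, I would use the elementary scalar inequality $e^{-\theta x}\le 1+(e^{-\theta L}-1)x/L$ for $x\in[0,L]$ and $\theta>0$ (since $e^{-\theta x}$ is convex in $x$). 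Applying the transfer principle (operator convexity of the exponential, valid because $0\preceq X_i\preceq LI$) gives $\E[e^{-\theta X_i}]\preceq I+(e^{-\theta L}-1)\E[X_i]/L$, and then operator monotonicity of the logarithm together with $\log(I+A)\preceq A$ yields $\log\E[e^{-\theta X_i}]\preceq (e^{-\theta L}-1)\E[X_i]/L$.

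Summing and using monotonicity of the trace exponential, letting $\mu:=\E[X]$ and $\mu_{\min}:=\lambda_{\min}(\mu)$,
\begin{align*}
\tr\exp\!\Bigl(\tfrac{e^{-\theta L}-1}{L}\,\mu\Bigr)\le n\cdot \exp\!\Bigl(\tfrac{e^{-\theta L}-1}{L}\,\mu_{\min}\Bigr),
\end{align*}
because $(e^{-\theta L}-1)/L<0$ so the largest eigenvalue of the matrix in the exponent corresponds to the smallest eigenvalue of $\mu$. Putting this together and setting $t=\eps\mu_{\min}$,
\begin{align*}
\Pr[\lambda_{\min}(X)\le \eps\mu_{\min}]\le n\cdot\exp\!\left(\theta\eps\mu_{\min}+\tfrac{e^{-\theta L}-1}{L}\mu_{\min}\right).
\end{align*}
Optimizing in $\theta>0$ gives $\theta=-\log(\eps)/L$, which reduces the exponent to $\mu_{\min}(\eps-1-\eps\log\eps)/L$, and the elementary calculus inequality $\eps-1-\eps\log\eps\le -(1-\eps)^2/2$ on $\eps\in(0,1]$ (easily verified from the Taylor remainder of $\phi(\eps)=\eps-1-\eps\log\eps$ around $\eps=1$) produces the stated bound.

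The main obstacle is the non-commutative step: one cannot simply multiply scalar MGFs as in the classical Chernoff proof, so obtaining the subadditivity of the cumulant generating function requires Lieb's concavity theorem (or equivalently the Golden–Thompson-style inequality used in Tropp's framework). Everything else—the scalar tangent-line bound on $e^{-\theta x}$, the operator-monotone log inequality, and the optimization in $\theta$—is routine once that ingredient is in place.
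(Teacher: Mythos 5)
The paper does not prove this lemma; it is quoted verbatim as a black-box citation to Tropp's monograph, so there is no internal proof to compare against. Your proposal is a correct and complete sketch of the standard proof (the Ahlswede--Winter/Tropp Laplace-transform method): the Markov step via $\lambda_{\max}(e^{-\theta X})\le\tr e^{-\theta X}$, subadditivity of the matrix cumulant generating function via Lieb's concavity theorem, the chord bound $e^{-\theta x}\le 1+\tfrac{e^{-\theta L}-1}{L}x$ on $[0,L]$, operator monotonicity of $\log$ together with $\log(I+A)\preceq A$, the trace bound by $n$ times the extremal eigenvalue (noting the negative prefactor flips which eigenvalue of $\mu$ dominates), the optimal choice $\theta=-\log(\eps)/L$, and finally the calculus inequality $\eps-1-\eps\log\eps\le -(1-\eps)^2/2$, which is easily checked from $h(\eps)=-(1-\eps)^2/2-(\eps-1-\eps\log\eps)$ satisfying $h(1)=h'(1)=0$ and $h''(\eps)=1/\eps-1\ge 0$ on $(0,1]$. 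One small terminological slip: the step $e^{-\theta X_i}\preceq I+\tfrac{e^{-\theta L}-1}{L}X_i$ is not a consequence of ``operator convexity of the exponential'' (the matrix exponential is not operator convex); it follows from the spectral-mapping transfer rule applied to the scalar inequality on the interval $[0,L]$ containing the spectrum of $X_i$, which is what you also (correctly) call the transfer principle in the same sentence. With that caveat removed, the argument is sound and is exactly the proof one finds in Tropp.
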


\begin{lemma}[{\citep[Theorem 3.1]{li2001gaussian}} with improved bound]\label{lemma: gaussian_range_bound}
Let $b >0$ and $r >0$. Then,
\begin{align*}
    \exp(-b^2/2) \Pr_{w \sim \mathcal{N}(0,1)} [|w| \leq r] \leq \Pr_{w \sim \mathcal{N}(0,1)}[|x - b| \leq r] \leq 2r \cdot \frac{1}{\sqrt{2\pi}} \exp(-(\max\{b-r, 0\})^2/2).
\end{align*}
\end{lemma}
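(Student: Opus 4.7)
My plan is to handle the two inequalities separately by working directly with the standard Gaussian density $\phi(y) = \tfrac{1}{\sqrt{2\pi}} e^{-y^2/2}$ and writing the probability as an integral.

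\textbf{Upper bound.} I would write
\[
\Pr_{w\sim \mathcal N(0,1)}[|w-b|\le r] \;=\; \int_{b-r}^{b+r} \phi(y)\,\mathrm{d}y,
\]
and then bound the integrand by the maximum of $\phi$ on the interval $[b-r,b+r]$, times the length $2r$. Since $\phi$ is decreasing on $[0,\infty)$ and symmetric about $0$, the maximum is attained at the point of $[b-r,b+r]$ closest to the origin, namely $\max\{b-r,0\}$: if $b\ge r$ the interval lies in $[0,\infty)$ so the max is at the left endpoint $b-r$, and if $b<r$ the interval contains $0$ and the max is $\phi(0)=1/\sqrt{2\pi}$. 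Either way the bound $2r\cdot \tfrac{1}{\sqrt{2\pi}}\exp\bigl(-(\max\{b-r,0\})^2/2\bigr)$ follows immediately.

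\textbf{Lower bound.} The key identity here is to complete the square inside the density. With the substitution $y=x-b$,
\[
\Pr[|x-b|\le r] \;=\; \int_{-r}^r \phi(y+b)\,\mathrm{d}y \;=\; e^{-b^2/2}\int_{-r}^r \phi(y)\, e^{-yb}\,\mathrm{d}y,
\]
using $\phi(y+b) = \phi(y)\,e^{-yb}\,e^{-b^2/2}$. Pairing the contributions at $y$ and $-y$ and invoking the symmetry of $\phi$, the inner integral equals
\[
\int_0^r \phi(y)\bigl(e^{yb}+e^{-yb}\bigr)\,\mathrm{d}y \;=\; 2\int_0^r \phi(y)\cosh(yb)\,\mathrm{d}y \;\ge\; 2\int_0^r \phi(y)\,\mathrm{d}y \;=\; \Pr[|w|\le r],
\]
since $\cosh(yb)\ge 1$ for all $y,b$. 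This gives $\Pr[|x-b|\le r]\ge e^{-b^2/2}\Pr[|w|\le r]$, as required.

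\textbf{Where difficulty might hide.} Neither direction is technically deep: the upper bound is essentially density $\times$ length with a case split on whether $b\ge r$, and the lower bound boils down to the one-line identity $\phi(y+b) = \phi(y)e^{-yb}e^{-b^2/2}$ combined with the elementary fact $\cosh\ge 1$. The only subtlety is in stating the upper bound correctly for the regime $b<r$ (where $b-r<0$ and one must avoid writing $\exp(-(b-r)^2/2)$, which could be much smaller than the true maximum of $\phi$ on the interval); the $\max\{b-r,0\}$ in the statement exactly absorbs this case, so I simply need to verify both regimes when writing up.
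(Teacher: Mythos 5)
Your proof is correct. For the upper bound, you take essentially the same route as the paper: bound the integral by (interval length) $\times$ (maximum of the density on $[b-r,b+r]$), with the $\max\{b-r,0\}$ term absorbing the case split $b\ge r$ versus $b<r$; the paper compresses this into a single displayed inequality without the explicit case discussion, but the reasoning is identical. For the lower bound, the paper gives no proof at all — it simply cites Li (2001), Theorem 3.1 — whereas you supply a short self-contained derivation from the factorization $\phi(y+b)=\phi(y)\,e^{-yb}\,e^{-b^2/2}$, symmetrization to $2\int_0^r\phi(y)\cosh(yb)\,dy$, and the elementary inequality $\cosh\ge 1$. That part of your write-up is a genuine improvement in self-containedness: it recovers the cited shift inequality in three lines without any appeal to an external reference, and every step (change of variables, completing the square, $\cosh\ge 1$) is elementary.
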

\begin{proof}
To prove the upper bound, we have
\begin{align*}
    \Pr_{w \sim \mathcal{N}(0,1)}[|x - b| \leq r] = \int_{b-r}^{b+r} \frac{1}{\sqrt{2\pi}} \exp(-x^2/2)\ dx \leq 2r \cdot \frac{1}{\sqrt{2\pi}} \exp(-(\max\{b-r, 0\})^2/2).
\end{align*}
\end{proof}

\begin{lemma}[Anti-concentration of Gaussian]\label{lemma: gaussian_anticoncentration}
Let $Z \sim \mathcal{N}(0, \sigma^2)$. Then for $t>0$, 
\begin{align*}
    \Pr[|Z| \leq t] \leq \frac{2t}{\sqrt{2\pi} \sigma}.
\end{align*}
\end{lemma}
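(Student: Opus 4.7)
The plan is to prove this by a direct density-bound argument. I would first write the probability explicitly as an integral against the Gaussian density:
\begin{align*}
\Pr[|Z| \leq t] = \int_{-t}^{t} \frac{1}{\sqrt{2\pi}\,\sigma} \exp\!\left(-\frac{z^2}{2\sigma^2}\right) dz.
\end{align*}

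Next, I would observe that the integrand is a decreasing function of $|z|$, so its maximum over all of $\mathbb{R}$ is attained at $z=0$, giving the pointwise bound $\frac{1}{\sqrt{2\pi}\,\sigma}\exp(-z^2/(2\sigma^2)) \leq \frac{1}{\sqrt{2\pi}\,\sigma}$. Substituting this into the integrand and integrating over the interval $[-t,t]$ of length $2t$ yields the claimed bound of $\frac{2t}{\sqrt{2\pi}\,\sigma}$.

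This is a standard one-line argument, so there is no real obstacle. The only thing to be slightly careful about is ensuring the bound is stated in the correct normalization, i.e., making sure the $1/(\sqrt{2\pi}\sigma)$ factor (rather than $1/\sqrt{2\pi}$) appears, which comes from the density of $\mathcal{N}(0,\sigma^2)$ as opposed to $\mathcal{N}(0,1)$. An alternative (equivalent) route would be to reduce to the standard normal case by the substitution $Z = \sigma W$ with $W \sim \mathcal{N}(0,1)$, reducing the statement to $\Pr[|W|\leq t/\sigma] \leq \frac{2(t/\sigma)}{\sqrt{2\pi}}$, and then applying the density bound to $W$; either path works and gives the same constant.
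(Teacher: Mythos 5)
Your proof is correct and is the standard one-line argument: bound the Gaussian density by its supremum $\frac{1}{\sqrt{2\pi}\sigma}$ attained at $z=0$, then integrate over the interval of length $2t$. The paper states this lemma without proof (treating it as a well-known fact), so there is no paper proof to compare against, but your argument is exactly what one would write.
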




\section{The Benefit of Constant Initialization of Biases}
In short, the benefit of constant initialization of biases lies in inducing sparsity in activation and thus reducing the per step training cost. 
This is the main motivation of our work on studying sparsity from a deep learning theory perspective. 
Since our convergence shows that sparsity doesn't change convergence rate, the total training cost is also reduced. 

To address the width's dependence on $B$, our argument goes like follows. In practice, people set up neural network models by first picking a neural network of some pre-chosen size and then choose other hyper-parameters such as learning rate, initialization scale, etc. In our case, the hyper-parameter is the bias initialization. Thus, \textit{the network width is picked before $B$.} Let's say we want to apply our theoretical result to guide our practice. Since we usually don't know the exact data separation and the minimum eigenvalue of the NTK, we don't have a good estimate on the exact width needed for the network to converge and generalize. We may pick a network with width that is much larger than needed (e.g. we pick a network of width $\Omega(n^{12})$ whereas only $\Omega(n^4)$ is needed; this is possible because the smallest eigenvalue of NTK can range from $[\Omega(1/n^2), O(1)]$). Also, it is an empirical observation that the neural networks used in practice are very overparameterized and there is always room for sparsification. If the network width is very large, then per step gradient descent is very costly since the cost scales linearly with width and can be improved to scale linearly with the number of active neurons if done smartly. If the bias is initialized to zero (as people usually do in practice), then the number of active neurons is $O(m)$. However, since we can sparsify the neural network activation by non-zero bias initialization, the number of active neurons can scale \textit{sub-linearly} in $m$. Thus, if the neural network width we choose at the beginning is much larger than needed, then we are indeed able to obtain total training cost reduction by this initialization. The above is an informal description of the result proven in \citep{song2021does} and the message is \textit{sparsity can help reduce the per step training cost}. If the network width is pre-chosen, then the lower bound on network width $m \geq \Tilde{\Omega}(\lambda_0^{-4} n^4 \exp(B^2))$ in Theorem 3.1 can be translated into an upper bound on bias initialization: $B \leq \tilde{O}(\sqrt{\log \frac{\lambda_0^4 m}{n^4}})$ if $m \geq \tilde{\Omega}(\lambda_0^{-4} n^4)$. This would be a more appropriate interpretation of our result. 
Note that this is different from how Theorem 3.1 is presented: first pick $B$ and then choose $m$; since $m$ is picked later, $m$ can always satisfy $B \leq \sqrt{0.5 \log m}$ and $m \geq \Tilde{\Omega}(\lambda_0^{-4} n^4 \exp(B^2))$. 
Of course, we don't know the best (largest) possible $B$ that works but as long as we can get some $B$ to work, we can get computational gain from sparsity. 

In summary, sparsity can reduce the per step training cost since we don't know the exact width needed for the network to converge and generalize. Our result should be interpreted as an upper bound on $B$ since the width is always chosen before $B$ in practice.

\end{document}